\def\endproof{\vbox{\hrule height0.6pt\hbox{\vrule height1.3ex%
width0.6pt\hskip0.8ex\vrule width0.6pt}\hrule height0.6pt}}
\newtheorem{theorem}{Theorem}[section]
\newtheorem{lemma}[theorem]{Lemma}
\newtheorem{proposition}[theorem]{Proposition}
\theoremstyle{plain}{\theorembodyfont{\rmfamily}%
}
\theoremstyle{plain}{\theorembodyfont{\rmfamily}%
}
\theoremstyle{plain}{\theorembodyfont{\rmfamily}%
}
\theoremstyle{plain}{\theorembodyfont{\rmfamily}%
}
\theoremstyle{plain}{\theorembodyfont{\rmfamily}%
}
\theoremstyle{plain}{\theorembodyfont{\rmfamily}%
\newtheorem{remark}[theorem]{Remark}}
\theoremstyle{plain}{\theorembodyfont{\rmfamily}%
}
\theoremstyle{plain}{\theorembodyfont{\rmfamily}%


\newcommand{\email}[1]{\href{mailto:#1}{\nolinkurl{#1}}}
\def\argmin{\operatornamewithlimits{argmin}}
\DeclareMathOperator{\E}{\mathbb{E}}
\renewcommand{\P}{\mathbb{P}}

\newtheorem{ass}{Assumption}

\providecommand{\nor}[1]{\left\lVert {#1} \right\rVert}
\providecommand{\norh}[1]{\Vert {#1} \Vert_\hh}

\providecommand{\scalh}[2]{\langle{#1},{#2}\rangle_\hh}

\providecommand{\scal}[2]{\left\langle{#1},{#2}\right\rangle}

\newcommand{\R}{\mathbb R}

\newcommand{\N}{\mathbb N}

\newcommand{\XX}{\mathcal X}
\newcommand{\hh}{\mathcal H}
\newcommand{\EE}{\mathcal E}
\newcommand{\emp}{\hat{\mathcal E}}

\newcommand{\NN}{\mathbb{N}}
\newcommand{\tr}{\mathbf{z}}

\newcommand{\LK}{L}

\newcommand{\RPP}{\mathbb{R}_{++}}

\title{\sffamily\LARGE Learning with Incremental Iterative Regularization}
 \author{Lorenzo Rosasco$^1$ \; 
and\; Silvia Villa$^2$\\[3mm]
\small
\small $\!^1$ Universit\`a degli Studi di Genova\\
\small DIBRIS, Via Dodecaneso 35 -- 16146 Genova\\
\small\&\\
\small Massachusetts Institute of Technology 
and Istituto Italiano di Tecnologia\\
\small Laboratory for Computational and Statistical Learning,
Cambridge, MA 02139, USA\\
\small \email{lrosasco@mit.edu}
\\[3mm]
\small $\!^2$Massachusetts Institute of Technology 
and Istituto Italiano di Tecnologia\\
\small Laboratory for Computational and Statistical Learning,
Cambridge, MA 02139, USA\\
\small \email{silvia.villa@iit.it}
}


\begin{document}

\maketitle

\begin{abstract}
Within a statistical learning setting,  we propose and study an iterative regularization algorithm for least squares defined by  an incremental gradient method.   In particular, we show that, if all other parameters are fixed a priori, the number of passes over the data (epochs) acts as a regularization 
parameter, and  prove strong universal consistency, i.e.  almost sure convergence of the risk, as well as  sharp finite sample bounds for 
the iterates. Our  results are a step towards understanding the effect of multiple epochs in  stochastic 
gradient techniques in machine learning and rely  on  integrating  statistical and optimization
results.
\end{abstract}


\section{Introduction}
 Machine learning applications often require efficient statistical procedures to  process potentially massive amount of high dimensional data. Motivated by such applications, the  broad objective of our study  is deriving learning procedures with optimal statistical properties, and, at the same time,   computational complexities  proportional to the {\em generalization} properties allowed by the  data, rather than their raw amount \cite{bottou-bousquet-2011}. 
In this paper, we focus on  iterative regularization  as a viable approach towards this goal.  The key observation behind these techniques  is 
that  iterative optimization schemes applied to 
scattered, noisy data
exhibit a self-regularizing property, in the sense that early termination 
(early-stop) of the iterative process has a regularizing effect \cite{Nem86, Pol87}. 
Indeed, iterative regularization algorithms  are classical in inverse problems \cite{Eng96}, and
 have been  recently considered in machine learning \cite{ZhangYu03,yao,bauer,BlaKra10,CapYao06,raskutti},  
 where they have been proved  to achieve optimal learning bounds,  matching those of variational regularization schemes such as  Tikhonov  \cite{Caponnetto:2006,SteinwartHS09}.
 
In this paper, we consider an iterative regularization algorithm for the square loss, based on a recursive procedure    updating the solution after processing one training set point at each iteration. Methods of the  latter form,  often broadly referred to as online learning  algorithms, 
have become standard in the processing of large data-sets, because of their often low iteration cost and good practical performance. Theoretical studies  for this class of algorithms have been developed within different frameworks. In composite optimization
\cite{nedic2001incremental}, in stochastic optimization
\cite{Nem09, SreSriTew12}, in 
 online learning, a.k.a. sequential prediction 
\cite{Cesal},  
and finally, in statistical learning
\cite{Cesa}.  
The latter is the setting of interest in this paper, where we aim at developing an analysis keeping into account simultaneously 
both statistical and computational aspects.  To place our contribution in  context, it is useful to emphasize the role of regularization and different ways in which 
it can be incorporated in online learning algorithms.  The key idea of regularization is that  controlling the {\em complexity}  of a solution 
can help avoiding overfitting and ensure  stability to   achieve improved results \cite{vapnik1998statistical}.
 Classically, regularization is achieved
perturbing (penalizing) the objective function with some suitable functional, or replacing the original risk minimization problem
by a constrained problem obtained restricting  the space of possible solutions \cite{vapnik1998statistical}.  
Model selection is then performed to determine the amount of regularization suitable for the data at hand.
More recently, there has been an interest in alternative,  possibly more efficient, ways to incorporate regularization.  
We mention in particular \cite{yiming,BacDie14} (see also \cite{TarYao14}) 
where there is no explicit regularization by penalization, and the step-size of an iterative procedure is shown to act as a regularization parameter. 
Here, for each fixed step-size, each  data point is  processed once, but multiple passes are indeed needed to perform model selection (that is  pick the best step-size).   We also mention \cite{Pistol} where  an interesting  adaptive approach is  proposed, which seemingly avoid model selection under certain assumptions. 

In this paper, we consider a different regularization strategy, which is widely used in practice. Namely, we consider no explicit penalization, 
fix the step size a priori, and analyze the effect of the number of passes  over the data, which  becomes the only free parameter to avoid overfitting, i.e. regularize. 
The associated regularization strategy, that we dub {\em incremental iterative regularization}, is  hence based on  early stopping. The latter
 is a well known "trick", for example in training large neural networks \cite{lecun-98b}, 
 and is known to  perform very  well in practice \cite{HuaAvr14}.
  Our goal is to provide a theoretical understanding of the generalization property of the above heuristic for 
  incremental/online techniques by grounding it  in solid theoretical terms. 
Towards this end, we develop a theoretical analysis considering  the behavior of both the excess risk 
and the iterates themselves.  For the latter we  obtain sharp finite sample bounds  matching those for Tikhonov regularization in the same setting.   Finite sample bounds for the excess risk can then be easily  derived, albeit possibly suboptimal. Our results are developed in a capacity independent  setting \cite{CucZho07,SteiChri08},  that is under no  conditions on  the covering or entropy numbers \cite{SteiChri08}.
 In this sense our analysis is  worst case and dimension free.  To the best of our knowledge the analysis in the paper is the first theoretical study of regularization by early  stopping in incremental/online algorithms, and thus a first step towards understanding the effect of  multiple epochs of stochastic  gradient  for risk minimization.


The rest of the paper is organized as follows. In Section~\ref{sec:setting} we describe the setting and 
the main  assumptions,  and in Section~\ref{sec:main} we state the main results, discuss them and provide the main elements 
of the proof, which is deferred to the supplementary material. In Section~\ref{sec:exp} we present some experimental results on real and synthetic datasets.
\\
{\bf Notation } We denote by $\mathbb{R}_+=[0,+\infty[\,$,  $\mathbb{R}_{++}=\,]0,+\infty[\,$, 
and $\N^*=\N\smallsetminus\{0\}$. Given a normed space $\mathcal{B}$ and linear operators 
$(A_i)_{1\leq i\leq m}$, $A_i\colon\mathcal{B}\to \mathcal{B}$ for every $i$, their composition 
$A_m\circ\cdots\circ A_1$  will be denoted as $\prod_{i=1}^m A_i$. By convention, if $j>m$, we 
set $\prod_{i=j}^m A_i=I$, where $I$ is the identity of $\mathcal{B}$. The  operator norm will be denoted by 
$\|\cdot\|$ and the Hilbert-Schmidt norm by $\|\cdot\|_{HS}$. Also, if $j>m$, we set $\sum_{i=j}^m A_i=0$. 

\section{Setting and Assumptions}
\label{sec:setting}

We first describe the setting we consider and then introduce and discuss 
the main 
assumptions that will  hold throughout the paper. We essentially follow the framework proposed in \cite{devito05,SmaZho05} 
and further  developed in a series of follow up works \cite{Caponnetto:2006,bauer,SmaZho07,CapYao06}. 
Unlike these papers where a reproducing kernel Hilbert space (RKHS)  setting is considered 
(see  Section~\ref{sec:somspe}), here we develop an equivalent formulation within an abstract Hilbert space. 
This latter formulation is  close to the setting of functional regression \cite{RamSil05} and reduces to standard 
linear regression if $\hh$ is finite dimensional,  see  Section~\ref{sec:somspe}.\\
Let $\hh$ be a separable Hilbert space with inner product  and norm  denoted  by  
$\langle \cdot,\cdot\rangle_\hh$ and $\norh{\cdot}$.  Let $(X,Y)$ be a pair of random variables on a probability space $(\Omega,\mathfrak{S}, \P)$,  with values in $\hh$ and $\R$, respectively. Denote by $\rho$ the distribution of $(X,Y)$, by $\rho_\hh$  the marginal measure on $\hh$, and by  $\rho(\cdot |x)$ the conditional measure on $\R$ given $x\in \hh$. 
Considering the square loss function,  the problem 
under study  is the minimizazion of the {\em risk}, 
\begin{equation}\label{expmin}
\inf_{w\in\hh} \EE(w), \quad \EE(w)=\int_{\hh\times\mathbb{R}} (\scal{w}{x}_\hh-y)^2 d\rho(x,y)\,,
\end{equation}
provided the  distribution $\rho$ is fixed but known only 
through a {\em training set} $\tr=\{(x_1, y_1), \dots, (x_n, y_n)\}$, that is  a
realization of $n\in\N^*$ independent identical copies of $(X,Y)$.
In the following, we  measure the quality of an approximate solution $\hat w\in\hh$ (an estimator) controlling the excess risk 
$$
\EE(\hat w)-\inf_{\hh}\EE.
$$
If the set of solutions of Problem~\eqref{expmin} is non empty, that is $\mathcal{O}=\argmin_{\hh}\EE\neq\varnothing$, 
we also consider
\begin{equation}
\label{eq:wdag}
\nor{\hat w - w^\dagger}_\hh, \quad \text{where} \quad w^\dagger=\argmin_{w\in \mathcal{O}} \norh{w}.
\end{equation}
More precisely we are  interested in deriving almost sure convergence results and finite sample bounds on the above error measures. 
This requires making some assumptions that we discuss next.
We make throughout the following basic assumption.
\begin{ass}\label{ass:zero}
There exist $M\in\left]0,+\infty\right[$ and $\kappa\in\left]0,+\infty\right[$ such 
that $\vert y\vert\leq M$ $\rho$-almost surely,  and  $\norh{x}^2\leq \kappa$ $\rho_\hh$-almost surely. 
\end{ass}
The above assumption is fairly standard. The boundness assumption on the output is 
satisfied in classification, see Section~\ref{sec:somspe}, and can be easily relaxed, see e.g. \cite{Caponnetto:2006}. The boundness 
assumption on the input can also be relaxed, but the resulting  analysis is  more involved. We omit these developments for the sake of clarity. 
It is well known that (see e.g. \cite{devros04}),  under Assumption~\ref{ass:zero}, 
the risk is a convex and continuous functional on  $L^2(\hh,\rho_\hh)$,  the space of square-integrable functions  
with norm $\|f\|^2_\rho=\int_{\hh\times\mathbb{R}}|f(x)|^2d\rho_\hh(x)$. 
The minimizer of the risk on $L^2(\hh,\rho_\hh)$ is the regression function 
$f_\rho(x)=\int yd\rho(y|x)$ for $\rho_\hh$-almost every  $x\in\hh$. By considering Problem~\eqref{expmin} we are restricting the 
search for a solution to linear functions. Note that, since $\hh$ is in general  infinite dimensional, 
the minimum in~\eqref{expmin} might not be achieved.   Indeed, bounds on the above error measures depend on if, and how well, the regression function can be linearly 
approximated. The following assumption quantifies in a  precise way such a requirement. 
\begin{ass}
\label{ass:uno}
Consider the space
$
{\cal L}_\rho=\{f:\hh\to \R  ~|~ \exists w\in \hh~~ \text{with}~~ f=\scal{w}{\cdot}~\rho_\hh\text{- a. s.} \}, 
 $
and let $\overline {\cal L}_\rho$ be its closure in $L^2(\hh,\rho_\hh)$. Moreover, consider  the operator
\begin{equation}\label{eq:defL}
L:L^2(\hh,\rho_\hh)\to L^2(\hh,\rho_\hh),  \quad Lf(x)=\int \scal{x}{x'}f(x')d\rho(x'),
\quad \forall f \in L^2(\hh,\rho_\hh).
\end{equation}
Define
$ g_\rho=\argmin_{g\in \overline{{\cal L}_\rho}} \nor{f_\rho-g}_{\rho}$.
Ley $r \in \left[0,+\infty\right[$, and assume that 
\begin{equation}
\label{source}
(\exists g\in L^2(\hh,\rho_\hh))\quad \text{such that }\quad g_\rho=L^rg.
\end{equation}  
\end{ass}
The above assumption is standard \cite{SteiChri08}.  
Since  its statement is somewhat technical, and we provide a more general formulation in a  Hilbert space with respect to the usual RKHS setting, 
we further comment on its interpretation. We begin noting that   ${\cal L}_\rho$ is the space of  linear functions indexed by $\hh$ and
 is a proper subspace of $L^2(\hh,\rho_\hh)$, if Assumption~\ref{ass:zero} holds. Moreover,  under the same assumption,  it is easy to see that the operator $L$ is linear, self-adjoint, positive definite and trace class, hence compact, so that its fractional power in~\eqref{eq:defL} is well defined. It can be shown fairly easily that  the space ${\cal L}_\rho$ can be characterized in terms of the operator $L$, namely
\begin{equation}
\label{eq:mercer}
{\cal L}_\rho=L ^{1/2} \left(L^2(\hh,\rho_\hh)\right).
\end{equation}
This last observation allows to  provide an interpretation of Condition~\eqref{source}.
Indeed, given \eqref{eq:mercer}, for $r=1/2$, Condition 
\eqref{source} states that $g_\rho$ belongs to ${\cal L}_\rho$, rather than its closure.
In this case,  Problem~\ref{expmin} has at least one solution, and the  set $\cal O$ in~\eqref{eq:wdag} is not empty. 
Vice versa, if $\mathcal{O}\neq\varnothing$ then  $g_\rho\in \cal{L}_\rho$, and $w^\dag$ is well-defined.  
If $r>1/2$ the condition is stronger than for $r=1/2$, since the images of $L^2(\hh, \rho_\hh)$ with respect to $L^r$ are nested subspaces for increasing $r$\footnote{ 
If $r<1/2$  then the regression function does not have a best linear approximation since $g_\rho\notin {\cal L}_\rho$, and in particular, 
for $r=0$ we are not making any assumption. 
Intuitively, for  $0<r<1/2$, the condition quantifies {\em how far} $g_\rho$ is from ${\cal L}_\rho$, that is to be well approximated by a linear function.}.

\subsection{Iterative Incremental Regularized Learning}\label{sec:ite}
The learning algorithm we consider is defined by the following iteration.
\begin{mdframed}
Let $w_0\in\hh$ and $\gamma\in\mathbb{R}_{++}$. Consider the sequence $(\hat{w}_t)_{t\in\N}$ generated 
through the following procedure: given $t\in\N$ and $\hat{w}_t\in\hh$, define 
\vspace{-0.2cm}
\begin{align} \label{eq:nine1} 
\hat{w}_{t+1}&=\hat{u}^n_t,
\end{align}
where $\hat{u}^n_t$ is obtained at the end of one cycle, namely as the last step of the recursion 
\begin{equation}\label{eq:nine}
\hat{u}^{0}_t=\hat{w}_{t};\qquad \hat{u}^i_t=\hat u^{i-1}_{t}- \frac{\gamma}{n}(\scalh{\hat u^{i-1}_{t}}{x_i}-y_i)x_i, \quad i=1,\ldots,n.
\end{equation}
\vspace{-0.4cm}
\end{mdframed}
Each cycle, called an epoch, corresponds to one pass over data. 
The above iteration can be seen as the  incremental gradient method \cite{Ber97,nedic2001incremental} for  the minimization of the  empirical risk corresponding to $\tr$, that is the functional
\begin{equation}
\label{empmin}
\emp(w)=\frac 1 n \sum_{i=1}^n (\scalh{w}{x_i}-y_i)^2.
\end{equation}
(see also Section~\ref{sec:samp_appr}).Indeed, there is a vast literature on how the iterations ~\eqref{eq:nine1},~\eqref{eq:nine} can be used to minimize the empirical risk 
\cite{Ber97,nedic2001incremental}. 
Unlike these studies in this paper we are interested
in  how the iterations~\eqref{eq:nine1},~\eqref{eq:nine} can be used to approximately minimize the risk $\EE$.
The key idea is that while $\hat{w}_{t}$ is close to  a minimizer of the empirical risk when $t$ is sufficiently large,
a good  approximate solution of  Problem~\eqref{expmin} can be found by terminating the iterations earlier 
(early stopping). The analysis in the next few sections grounds theoretically this latter intuition. 

  
\section{Early stopping for incremental iterative regularization}
\label{sec:main}
In this section we present and discuss the main results of the paper, together with a sketch of the proof. 
The complete proofs can be found in Appendix~\ref{sec:proofs}.
We first present convergence results and then finite sample bounds 
for the norm and the excess risk. 
\begin{theorem} 
\label{thm:mainnorate} 
In the setting of Section~\ref{sec:setting}, let Assumption~\ref{ass:zero} hold. 
Let $\gamma\in\left]0,\kappa^{-1}\right]$. Then the following hold:
\begin{enumerate}
\item
\label{thm:mainnoratei}
If we choose a stopping rule $t^*\colon\NN^*\to\NN^*$ such that
\begin{equation}
\label{eq:stoprule1}
\lim_{n\to+\infty} t^*(n)=+\infty\quad\text{and}\quad \lim_{n\to+\infty} \frac{t^*(n)^3\log n}{n}=0
\end{equation}
then
\begin{equation}
\lim_{n\to+\infty} \EE(\hat{w}_{t^*(n)})-\inf_{w\in\hh} \EE(w) =0 \quad \P\text{-almost surely}.
\end{equation}
\item
\label{thm:mainnorateii}
Suppose additionally  that the set $\mathcal{O}$ of minimizers of \eqref{expmin} is nonempty and let 
$w^\dag$ be defined as in \eqref{eq:wdag}.
If we choose a stopping rule $t^*\colon\NN^*\to\NN^*$ satisfying the conditions in \eqref{eq:stoprule1} 
then
\begin{equation}
\label{eq:uconsh}
\norh{\hat{w}_{t^*(n)}-w^\dagger}\to 0 \quad\text{$\P$-almost surely}.
\end{equation}
\end{enumerate}
\end{theorem}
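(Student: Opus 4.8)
The plan is to follow the classical approximation--sample-error (bias--variance) template for iterative regularization, adapted to the incremental dynamics. First I would rewrite one epoch of \eqref{eq:nine} as an affine recursion on $\hh$: setting $\hat A_i = I - \tfrac{\gamma}{n}\out{x_i}{x_i}$, one epoch reads
$\hat w_{t+1} = \big(\prod_{i=1}^n \hat A_i\big)\hat w_t + \sum_{i=1}^n\big(\prod_{j=i+1}^n \hat A_j\big)\tfrac{\gamma}{n}y_i x_i$,
which is exactly why the ordered (noncommutative) product notation is fixed in the Notation paragraph. The choice $\gamma\le\kappa^{-1}$ forces $0\preceq \tfrac{\gamma}{n}\out{x_i}{x_i}\preceq \tfrac{\gamma\kappa}{n}I\preceq I$, so each $\hat A_i$ is a positive, nonexpansive operator and the one-epoch transition map is a contraction. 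This uniform control is what allows errors to be propagated through the products without blow-up.

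Next I would introduce the deterministic counterpart, the exact gradient iteration $w_{t+1} = (I-\gamma T)w_t + \gamma b$ driven by the covariance operator $T=\E[\out{X}{X}]$ and $b=\E[YX]$, whose spectrum mirrors that of $L$ in \eqref{eq:defL}. Analyzing this iteration is standard iterative-regularization spectral calculus: writing $S\colon\hh\to L^2(\hh,\rho_\hh)$, $Sw=\scalh{w}{\cdot}$, the excess risk factors as $\EE(w)-\inf_\hh\EE=\nor{Sw-g_\rho}_\rho^2$, and using \eqref{eq:mercer} together with the source condition \eqref{source} one shows $\nor{Sw_t-g_\rho}_\rho\to0$ for every $r\ge0$ (the \emph{approximation} term, decreasing in $t$). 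When $\mathcal O\neq\varnothing$, equivalently $r\ge 1/2$, one shows moreover that $\norh{w_t-w^\dag}\to0$, since gradient descent on a quadratic selects the minimal-norm solution. This yields the deterministic limits needed for both \ref{thm:mainnoratei} and \ref{thm:mainnorateii}.

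The core of the argument is the \emph{sample error} $\hat w_t - w_t$, which I would split into the deviation of the empirical operators from the population ones, $\|\hat T - T\|$ and $\norh{\hat b-b}$ with $\hat T=\tfrac1n\sum_i\out{x_i}{x_i}$ and $\hat b=\tfrac1n\sum_i y_i x_i$, plus the incremental-versus-batch discrepancy coming from evaluating each gradient at $\hat u_t^{i-1}$ instead of at $\hat w_t$. The first piece is handled by a Hilbert-space concentration inequality of Bernstein/Hoeffding type, giving $O(\sqrt{\log n/n})$ bounds; the second is a within-epoch perturbation obtained by telescoping $\hat u_t^{i-1}-\hat w_t$, which carries an extra factor $\tfrac{\gamma}{n}$ per point. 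Pushing both contributions through the contractive transition maps and accumulating over the $t$ epochs yields a sample error of order $t^{3/2}\sqrt{\log n/n}$ (up to constants and logarithmic factors), which is exactly why the stopping rule demands $t^*(n)^3\log n/n\to0$. Combining this with $t^*(n)\to+\infty$ sends both the approximation and the sample terms to zero; choosing the confidence level $\delta_n\sim n^{-2}$ makes the failure probabilities summable (the $\log n$ in the rate absorbs $\log(1/\delta_n)$), so a Borel--Cantelli argument upgrades the in-probability bound to almost-sure convergence, proving \ref{thm:mainnoratei} for the risk and, together with the second deterministic limit, \ref{thm:mainnorateii} for the iterate.

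I expect the main obstacle to be the sample-error propagation. Because the per-point maps $\hat A_i$ do not commute, the difference $\hat w_t - w_t$ cannot be diagonalized and must be telescoped term-by-term, with the incremental correction entangled with the statistical fluctuations of $\hat T$ and $\hat b$; extracting the sharp power of $t$ from this noncommutative accumulation is the delicate point. For part \ref{thm:mainnorateii} there is the further difficulty that convergence is measured in the strong $\hh$-norm rather than the weaker risk norm, where $T$ has an unbounded inverse, so controlling the sample error on the low-spectrum part of $T$ is what makes the source condition $r\ge1/2$ essential.
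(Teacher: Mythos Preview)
Your overall architecture---bias--variance split, spectral control of the deterministic iterate, concentration plus Borel--Cantelli---matches the paper's. But there is a genuine gap in the sample-error part that would make the argument fail as written.

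The issue is your choice of auxiliary sequence. You take the \emph{batch} population iteration $w_{t+1}=(I-\gamma T)w_t+\gamma b$ and plan to absorb the incremental-versus-batch discrepancy as a ``within-epoch perturbation'' that ``carries an extra factor $\gamma/n$ per point''. That factor is real per point, but there are $n$ points per epoch: telescoping gives $\|\hat u_t^{i-1}-\hat w_t\|\le (i-1)\tfrac{\gamma}{n}(\kappa R+\sqrt{\kappa}M)$, and the one-epoch discrepancy
\[
\Big\|\hat w_{t+1}-\big[(I-\gamma \hat T)\hat w_t+\gamma \hat b\big]\Big\|
=\Big\|\tfrac{\gamma}{n}\sum_{i=1}^n T_{x_i}(\hat u_t^{i-1}-\hat w_t)\Big\|
\le \tfrac{\gamma^2\kappa}{2}(\kappa R+\sqrt{\kappa}M)
\]
is $O(\gamma^2)$, hence $O(1)$ for fixed $\gamma\le\kappa^{-1}$; it does \emph{not} vanish with $n$. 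Accumulated over $t$ epochs this contributes a term that the stopping rule $t^*(n)^3\log n/n\to 0$ cannot kill, so your sample error does not go to zero.

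The paper avoids this by choosing a different auxiliary: the \emph{expected incremental} iteration \eqref{eq:nine1exp}--\eqref{eq:nineexp}, which one checks equals $nt$ steps of population gradient descent with stepsize $\gamma/n$. Writing one epoch in the form $w_{t+1}=(I-\gamma T+\gamma^2 A)w_t+\gamma S^*g_\rho-\gamma^2 b$, with $A,b$ the population analogues of your incremental correction, the sample recursion now involves only the \emph{differences} $\hat T-T$, $\hat A-A$, $\hat b-b$, and $\tfrac1n\sum S_{x_i}^*y_i-S^*g_\rho$. The first and last are i.i.d.\ sums, but $\hat A-A$ and $\hat b-b$ are not: the paper decomposes them into martingale difference sequences in the ordered index and applies Pinelis' inequality to get $O(n^{-1/2}\log(1/\delta))$ bounds. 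This is the technical core you are missing. With those four bounds in hand, the recursion gives $\|\hat w_t-w_t\|\le C n^{-1/2}\log(1/\delta)\sum_{k<t}\|w_k\|+C' n^{-1/2}\log(1/\delta)\,t$, and the $t^{3/2}$ arises because, with no source assumption ($r=0$), one can only bound $\|w_k\|\lesssim k^{1/2}$, so $\sum_{k<t}\|w_k\|\lesssim t^{3/2}$; this is where the power of $t$ in \eqref{eq:stoprule1} actually comes from, not from a generic accumulation-over-epochs heuristic.
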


The above result shows that for an a priori fixed step-sized,
consistency is achieved computing  a suitable number $t^*(n)$ of iterations of algorithm
\eqref{eq:nine1}-\eqref{eq:nine} given $n$ points. 
The number of required iterations tends to infinity as the number of available training points increases. Condition \eqref{eq:stoprule1} 
 can be interpreted as an early stopping rule,  since it  requires  the  number of epochs not to grow too fast. 
 In particular, this excludes  
 the choice $t^*(n)=1$ for all $n\in\NN^*$, namely considering only one pass over the data. Given the currently known results, the failure of 
 a single pass does not appear to be surprising, considering the stepsize is fixed (not depending on $n$) and we are not averaging.
 In the following remark we make clear that, if we let the step size to depend on the length of one epoch, convergence is recovered  also for one pass. 
\begin{remark}[Recovering Stochastic Gradient descent]
Note that if in Theorem~\ref{thm:mainnorate} we let $\gamma$ to depend on $n$, than we can choose $t^*(n)=1$.
Indeed, choosing $\gamma(n)=\kappa^{-1}n^{\alpha}$, with $\alpha<1/4$, which coresponds to a stochastic
gradient method with step-size $\kappa^{-1}n^{\alpha-1}$, we can derive almost sure convergence 
of $\EE(\hat{w}_1)-\inf_\hh\EE$ as $n\to+\infty$ relying on the same proof.
\end{remark}

To derive finite sample bounds  we need to impose additional conditions. 
We will see that the behavior of the bias of the estimator depends on the smoothness assumption \eqref{source}.
We are in position to state our main result, giving a  finite sample bound.
\begin{theorem}[Finite sample bounds in $\hh$]
\label{thm:mainH}
In the setting of Section~\ref{sec:setting}, let $\gamma\in\left]0,\kappa^{-1}\right]$ for every $t\in\NN$.
Suppose that Assumption~\eqref{ass:uno} is satisfied for some $r\in\left]1/2,+\infty\right[$. 
Then the set $\mathcal{O}$ of minimizers of \eqref{expmin} is nonempty, and $w^\dagger$ in \eqref{eq:wdag} is
well defined. 
Moreover, the following hold:
\begin{enumerate} 
\item
\label{thm:mainHi}
 There exists $c\in\left]0,+\infty\right[$
such that,  for every $t\in\NN^*$, with probability greater than $1-\delta$,  
\begin{align}
\label{eq:boundprobh}
\norh{\hat{w}_t-w^\dagger}\leq 
\frac{32\log(16/\delta)}{\sqrt{n}}\!\left(\! M {\kappa^{-1/2}}+{2M^2}{\kappa^{-1}}\right.&\left.+3\|g\|_\rho\kappa^{r-3/2}\right)t\\
&+\left(\!\frac{r-1/2}{\gamma}\right)^{r-1/2}\!\!\!\|g\|_\rho t^{1/2-r}.
\end{align}
\item 
\label{thm:mainHii}
For the stopping rule $t^*:\NN^*\to\NN^*$ defined by
$ t^*(n)=
\big\lceil n^{\frac{1}{2r+1}}\big\rceil$,
with probability greater than $1-\delta$,  
\begin{align}
\label{eq:rcorh}
\nonumber\norh{\hat{w}_{t^*(n)}-w^\dagger}&\leq \left[{32\log\left(\frac{16}{\delta}\right)}\left( M {\kappa^{-1/2}}+{2M^2}{\kappa^{-1}}\right.+3\|g\|_\rho\kappa^{r-3/2}\right)\hspace{2cm} \\
&\hspace{3.5cm}+\left(\frac{r-1/2}{\gamma}\right)^{r-1/2}\|g\|_\rho\bigg] n^{-\frac{r-1/2}{2r+1}}.
\end{align}
\end{enumerate}
\end{theorem}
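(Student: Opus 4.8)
The plan is to recast the algorithm spectrally and split the error into a deterministic bias and a stochastic sample error. First I would introduce the empirical and population second-moment operators $\hat T=\frac1n\sum_{i=1}^n x_i\otimes x_i$, $T=\E[X\otimes X]$ on $\hh$, together with $\hat b=\frac1n\sum_{i=1}^n y_i x_i$ and $b=\E[YX]$, so that one epoch of \eqref{eq:nine} becomes the affine map $\hat w_{t+1}=\hat M\hat w_t+\hat q$ with $\hat M=\prod_{i=1}^n(I-\frac\gamma n x_i\otimes x_i)$ and $\hat q=\sum_{i=1}^n\big(\prod_{j=i+1}^n(I-\frac\gamma n x_j\otimes x_j)\big)\frac\gamma n y_i x_i$. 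Alongside this incremental iteration I would track the batch empirical iteration $v_{t+1}=(I-\gamma\hat T)v_t+\gamma\hat b$ and the batch population iteration $w_{t+1}=(I-\gamma T)w_t+\gamma b$, all started at $w_0=0$, and use the decomposition
\begin{equation*}
\hat w_t-w^\dagger=\underbrace{(\hat w_t-v_t)}_{\text{incremental vs.\ batch}}+\underbrace{(v_t-w_t)}_{\text{empirical vs.\ population}}+\underbrace{(w_t-w^\dagger)}_{\text{bias}}.
\end{equation*}
Throughout I would use $\gamma\le\kappa^{-1}$ and Assumption~\ref{ass:zero} to guarantee $\|I-\gamma\hat T\|\le1$, $\|I-\gamma T\|\le1$, and the uniform boundedness of all iterates.

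For the bias I would first translate the source condition. Using the factorization of $L$ through $S\colon\hh\to L^2(\hh,\rho_\hh)$, $Sw=\scal{w}{\cdot}$, for which $S^*S=T$ and $SS^*=L$, Assumption~\ref{ass:uno} with $r>1/2$ yields $w^\dagger=T^{r-1/2}v$ for some $v\in\hh$ with $\norh v=\|g\|_\rho$, and in particular $\mathcal O\neq\varnothing$. Since $w_t-w^\dagger=-(I-\gamma T)^t w^\dagger=-(I-\gamma T)^t T^{r-1/2}v$, the bound reduces to the scalar filter estimate $\sup_{0\le\sigma\le\kappa}(1-\gamma\sigma)^t\sigma^{r-1/2}\le(\frac{r-1/2}{\gamma t})^{r-1/2}$, giving exactly $\norh{w_t-w^\dagger}\le(\frac{r-1/2}{\gamma})^{r-1/2}\|g\|_\rho\,t^{1/2-r}$, the second term of \eqref{eq:boundprobh}. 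This part is routine spectral calculus.

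The sample error is where the work lies. For $v_t-w_t$ I would use the difference recursion
\begin{equation*}
v_{t+1}-w_{t+1}=(I-\gamma\hat T)(v_t-w_t)-\gamma(\hat T-T)w_t+\gamma(\hat b-b),
\end{equation*}
and, from $\|I-\gamma\hat T\|\le1$ and the uniform bound on $\norh{w_t}$, sum the contraction to obtain $\norh{v_t-w_t}\le\gamma t\,(\|\hat T-T\|\sup_t\norh{w_t}+\|\hat b-b\|)$; two Hilbert-space Bernstein inequalities then control $\|\hat T-T\|$ and $\|\hat b-b\|$ by $O(\log(1/\delta)/\sqrt n)$, producing the $t/\sqrt n$ scaling with the stated constants. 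The delicate term is $\hat w_t-v_t$: expanding the incremental epoch map shows the per-epoch discrepancy is
\begin{equation*}
\hat w_{t+1}-v_{t+1}-(I-\gamma\hat T)(\hat w_t-v_t)\approx-\frac{\gamma^2}{n^2}\sum_{j=1}^n\Big\langle\, s_j,\,x_j\,\Big\rangle_\hh\, x_j,\qquad s_j=\sum_{i<j}(\scalh{\hat w_t}{x_i}-y_i)x_i,
\end{equation*}
plus higher-order terms. Bounded crudely this is $O(\gamma^2)$ and $n$-independent, hence useless; the point is that the $s_j$ are partial sums of approximately centered, residual-weighted data points, so by a martingale/vector concentration argument $\sup_j\norh{s_j}$ is of order $\sqrt n$ rather than $n$, making each epoch's correction $O(\gamma^2/\sqrt n)$ and the accumulated discrepancy again $O(t/\sqrt n)$.

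The hard part is precisely this last estimate: controlling the non-commuting product $\hat M$ against $I-\gamma\hat T$ and showing the cancellation in the partial sums survives even though the residuals depend on the data-dependent, evolving iterates $\hat u^{i}_t$. I would handle this by bounding the residuals uniformly through the established iterate boundedness, freezing $\hat w_t$ at the population iterate $w_t$ up to a controllable error, and applying concentration to the resulting sums, while the higher-order terms of the product are absorbed using $\gamma\le\kappa^{-1}$. Adding the three contributions gives \eqref{eq:boundprobh}, and substituting the balancing choice $t^*(n)=\lceil n^{1/(2r+1)}\rceil$, which equates the $t/\sqrt n$ and $t^{1/2-r}$ terms, yields the rate $n^{-(r-1/2)/(2r+1)}$ of \eqref{eq:rcorh}.
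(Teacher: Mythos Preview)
Your bias analysis and the $v_t-w_t$ step are essentially correct, but the decomposition breaks down at the incremental-versus-batch term $\hat w_t-v_t$, and the claimed martingale cancellation does not occur. The summands in $s_j=\sum_{i<j}(\scalh{\hat w_t}{x_i}-y_i)x_i$ are \emph{not} centered: even after freezing $\hat w_t$ at a deterministic $w$, each term has mean $Tw-S^*g_\rho=\nabla\EE(w)$, which is nonzero unless $w=w^\dagger$. Splitting off the mean gives $\|s_j\|\le O(\sqrt j)+j\,\|T w-S^*g_\rho\|$, so the per-epoch discrepancy is $\gamma^2\bigl(O(n^{-1/2})+\tfrac{\kappa}{2}\|T w-S^*g_\rho\|\bigr)$. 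Summing over epochs with $w=w_k$ and using $\|T(w_k-w^\dagger)\|\le C(\gamma k)^{-(r+1/2)}$ you get $O(t/\sqrt n)$ plus a term $\gamma^2\kappa\sum_{k\ge0}\|T w_k-S^*g_\rho\|$ that is a \emph{constant independent of $n$} (the series converges for $r>1/2$ but does not vanish). Thus $\|\hat w_t-v_t\|$ carries an $O(1)$ error and the final bound cannot be made small; the ``approximately centered'' heuristic is circular, since it presupposes $\hat w_t\approx w^\dagger$.

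The paper sidesteps this by choosing a different auxiliary sequence: it compares $\hat w_t$ directly to the \emph{incremental population} iteration $w_t$ given by \eqref{eq:nine1exp}--\eqref{eq:nineexp} (equivalently $nt$ steps of population gradient descent with stepsize $\gamma/n$). One epoch of this iteration has the \emph{same} algebraic form $w_{t+1}=(I-\gamma T+\gamma^2 A)w_t+\gamma S^*g_\rho-\gamma^2 b$, with second-order correctors $A,b$ exactly mirroring the empirical $\hat A,\hat b$. The recursion for $\hat w_t-w_t$ then involves only $\hat T-T$, $\frac1n\sum S_{x_i}^*y_i-S^*g_\rho$, $\hat A-A$, and $\hat b-b$; all four concentrate at rate $n^{-1/2}$, the last two via a genuine martingale decomposition (the paper's Proposition~\ref{lem:ak-ak}). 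In your scheme the analogous quantity is $\hat A$ itself (not $\hat A-A$), and that does not concentrate. The fix is therefore not a sharper bound within your decomposition but a different choice of intermediate sequence that carries the same incremental second-order structure as $\hat w_t$.
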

Note that the dependence on $\kappa$ suggests that a big $\kappa$, which corresponds to a small $\gamma$, helps in decreasing the sample error, 
but increases the approximation error. 
Next we present  the result for the excess risk. We consider only the attainable case, that is the case $r>1/2$ in Assumption~\ref{ass:uno}.
The case $r\leq 1/2$ is deferred to Appendix~\ref{app:nonatt}, since both the proof and the statement are conceptually similar to the attainable case.
 
\begin{theorem}[Finite sample bounds for the risk -- attainable case]
\label{thm:mainrhob}
In the setting of Section~\ref{sec:setting}, let Assumptions~\ref{ass:zero}  holds, and let $\gamma\in\left]0,\kappa^{-1}\right]$.
Let Assumption~\ref{ass:uno} be satisfied for some $r\in\left]1/2, +\infty\right]$.
Then the following hold:
\begin{enumerate}
\item
\label{thm:mainrhobi}
For every $t\in\NN^*$, with probability greater than $1-\delta$,  
\begin{equation}
\label{eq:boundprobr2}
\EE(\hat{w}_t)-\inf_\hh\EE\leq  \frac{2\big(32\log(16/\delta)\big)^2}{n}\left[M+2M^2\kappa^{-1/2}+3\kappa^{r}\|g\|_\rho\right]^2t^2+2\bigg(\dfrac{r}{\gamma t}\bigg)^{2r} \|g\|_\rho^2
\end{equation}
\item
\label{thm:mainrhobii}
For the stopping rule $t^*\colon\NN^*\to\NN^*\;$ $t^*(n)=\big\lceil n^{\frac{1}{2(1+r)}}\big\rceil$, 
with probability greater than $1-\delta$,  
\begin{align}
\label{eq:rcor2}
\nonumber \EE(\hat{w}_{t^*(n)})-\inf_{\hh}\EE \leq \left[8\left(\!32\log\frac{16}{\delta}\right)^{\!2}\right.&\left(M+2M^2\kappa^{-1/2}+3\kappa^{r}\|g\|_\rho\!\right)^2 \\
&+\left.2\bigg(\dfrac{r}{\gamma}\bigg)^{2r} \|g\|_\rho^2\right]n^{-r/(r+1)}
\end{align}
\end{enumerate}
\end{theorem}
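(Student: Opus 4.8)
The plan is to exploit a bias-variance decomposition analogous to the one underlying Theorem~\ref{thm:mainH}, but now measuring the error in the $L^2(\hh,\rho_\hh)$-norm induced by the risk rather than in the $\hh$-norm. The starting point is the identity $\EE(\hat w_t)-\inf_\hh\EE = \norh{L^{1/2}(\hat w_t - w^\dagger)}^2$ (equivalently $\|\scal{\hat w_t - w^\dagger}{\cdot}\|_\rho^2$), which follows from the definition of $\EE$ and the fact that $w^\dagger$ is a minimizer, so the cross term vanishes. This reduces the excess risk to controlling $\hat w_t - w^\dagger$ after applying the operator $L^{1/2}$, and the extra smoothing by $L^{1/2}$ is exactly what produces the improved exponent $r/(r+1)$ in~\eqref{eq:rcor2} compared with $(r-1/2)/(2r+1)$ in~\eqref{eq:rcorh}.

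First I would introduce the deterministic (population) incremental iteration, the analogue of~\eqref{eq:nine1}-\eqref{eq:nine} with the empirical operators and data replaced by their population counterparts $L$ and $g_\rho$, and denote its iterates by $w_t$. Then I split
\[
\norh{L^{1/2}(\hat w_t - w^\dagger)} \le \norh{L^{1/2}(\hat w_t - w_t)} + \norh{L^{1/2}(w_t - w^\dagger)}.
\]
The second term is the \emph{bias}: it is purely deterministic and is controlled by the standard spectral estimate for gradient-type iterations. Using the source condition~\eqref{source}, $w_t - w^\dagger$ is a polynomial in $L$ applied to $L^{r-1/2}g$ times a residual factor, and after applying $L^{1/2}$ one bounds $\sup_{\sigma\in[0,\kappa^{-1}\cdots]} \sigma^{r}(1-\gamma\sigma)^{\text{(iterate count)}}$ via the elementary inequality $x^{a}(1-x)^{b}\le (a/b)^{a}$, yielding the $(r/(\gamma t))^{2r}\|g\|_\rho^2$ term after squaring. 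The first term is the \emph{sample error}: here I would reuse the probabilistic estimates developed for Theorem~\ref{thm:mainH}, which bound the deviation between the empirical and population iterates through the concentration of $\|\hat L - L\|_{HS}$ and of the empirical data term around its mean, giving a bound that grows linearly in $t$ and decays like $n^{-1/2}$; applying $L^{1/2}$ and squaring produces the $t^2/n$ dependence with the constant $\big(32\log(16/\delta)\big)^2$ and the bracketed factor involving $M,\kappa,\|g\|_\rho$. Summing the squared bias and squared sample error, and absorbing the cross term with the elementary inequality $(a+b)^2\le 2a^2+2b^2$, gives part~\eqref{thm:mainrhobi}.

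For part~\eqref{thm:mainrhobii} I would simply optimize the bound~\eqref{eq:boundprobr2} over $t$: the sample term scales as $t^2/n$ and the bias as $t^{-2r}$, so balancing $t^2/n \asymp t^{-2r}$ gives $t\asymp n^{1/(2(r+1))}$, which is exactly the proposed rule $t^*(n)=\lceil n^{1/(2(1+r))}\rceil$. Substituting this choice back, using $\lceil x\rceil \le 2x$ to clear the ceiling and collecting constants, yields the claimed rate $n^{-r/(r+1)}$ in~\eqref{eq:rcor2}. I expect the main obstacle to be the sample-error term: carefully tracking how the operator $L^{1/2}$ interacts with the telescoping product of empirical gradient steps, so that the concentration argument from Theorem~\ref{thm:mainH} transfers cleanly while keeping the linear-in-$t$ growth rather than something worse. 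The bias term and the final optimization are routine once that estimate is in place.
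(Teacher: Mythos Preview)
Your approach is essentially the paper's: the decomposition~\eqref{eq:dec} is exactly your triangle-inequality split (with $S$ in place of what you call $L^{1/2}$ --- note $L$ acts on $L^2(\hh,\rho_\hh)$ while $T=S^*S$ acts on $\hh$), the bias term is Theorem~\ref{thm:apprerr}\ref{thm:apprerriii}, and the sample term is Theorem~\ref{thm:samp_err}\ref{thm:samp_errii} followed by the crude bound $\|Sv\|_\rho\le\sqrt{\kappa}\,\|v\|_\hh$. The obstacle you anticipate does not arise: the paper makes no attempt to exploit the extra smoothing on the sample error and simply absorbs $T^{1/2}$ via its operator norm, which is exactly why the resulting rate $n^{-r/(r+1)}$ is not sharp (see the discussion in Section~3.1).
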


Equations \eqref{eq:boundprobh} and \eqref{eq:boundprobr2} arise from a form of bias-variance 
(sample-approximation) decomposition of the error. 
Choosing the number of epochs that optimize the bounds in  \eqref{eq:boundprobh} and \eqref{eq:boundprobr2}
we derive a priori stopping rules and corresponding bounds~\eqref{eq:rcorh} and \eqref{eq:rcor2}. 
Again, these  results confirm that the number of epochs 
acts as a regularization parameter and the best choices following from equations~\eqref{eq:boundprobh} and
\eqref{eq:boundprobr2} suggest multiple passes over the data to be beneficial. 
In both cases, the  stopping rule depends on the smoothness parameter $r$ which is typically unknown, 
and hold-out cross validation is often  used  in  practice. Following \cite{CapYao06}, it is possible to show that this 
procedure allows to adaptively achieve the same convergence rate as in~\eqref{eq:rcor2}.

\subsection{Discussion}
We  discuss  a few comments and comparisons.
In Theorem~\ref{thm:mainH}, the obtained bound can be compared to known  
lower bounds, as well as to previous results for least squares 
algorithms obtained under Assumption~\ref{ass:uno}. 
Minimax lower bounds and individual lower bounds \cite{Caponnetto:2006,SteinwartHS09},
suggest that, for $r>1/2$, $O(n^{(r-1/2)/(2r+1))}$ is the optimal capacity-independent bound for
the $\hh$ norm\footnote{In a recent manuscript, it has been proved that these are indeed 
minimax lower bounds (G. Blanchard, personal communication)}. In this sense, Theorem~\ref{thm:mainH} provides  sharp bounds 
on the iterates. 
Bounds can  be improved only under stronger  assumptions, e.g.  on the covering numbers or 
on the eigenvalues of $\LK$ \cite{SteiChri08}. This question is left for future work.
The lower bounds for the excess risk \cite{Caponnetto:2006,SteinwartHS09} are of the form
$O(n^{-2r/(2r+1)})$ and in this case the results in Theorems~\ref{thm:mainrho} and \ref{thm:mainrhob} 
are not sharp, and in principle could be improved.
Our results can be contrasted with online learning algorithms mentioned in the introduction
that use step-size as regularization parameter.
Optimal capacity independent bounds are obtained in \cite{yiming}, see also \cite{TarYao14} and indeed such results
can be further improved considering capacity assumptions, see \cite{BacDie14} and references therein.

Interestingly, our results can also be contrasted with non incremental iterative regularization
approaches \cite{ZhangYu03, yao,bauer,BlaKra10,CapYao06,raskutti}. 
Our current results show that incremental iterative regularization, with distribution independent step-size, 
behaves as a full gradient descent, at least in terms of iterates convergence. We believe that proving additional advantages 
of incremental regularization over the batch one is an interesting
future research direction.
Finally, we note that optimal capacity independent and dependent bounds are known for several 
least squares algorithms including Tikhonov regularization see e.g. 
\cite{SteinwartHS09} and references therein,  
as well as for a larger class of so called  spectral filtering methods \cite{bauer,CapYao06}. 
These algorithms can be seen to be  essentially equivalent from a statistical perspective but different from
a computational point of view.

\subsection{Elements of  the proof}
The proofs of the main results are based on a decomposition of the error to
be estimated in two terms. The idea is  to build an auxiliary sequence and 
to majorize the error with the sum of two quantities that can be interpreted as 
a sample and an approximation error, respectively.  
Bounds on these two terms are then provided. The main technical contribution of the
paper is the sample error bound. The difficulty in proving this result is due
to the fact that multiple passes over the data induce complex statistical dependences 
in the iterates.

\paragraph{Error decomposition.}
We consider an auxiliary iteration $({w}_t)_{t\in\N}$ which is the expectation of the iterations 
 \eqref{eq:nine1} and \eqref{eq:nine}, starting from $w_0\in\hh$ with step-size $\gamma\in\R_{++}$. 
 More explicitly, given ${w}_t\in\hh$, the considered iteration generates  ${w}_{t+1}$ according to
\begin{align} \label{eq:nine1exp} 
{w}_{t+1}&={u}^n_t,
\end{align}
where ${u}^n_t$ is given by
\begin{align}\label{eq:nineexp}
{u}^{0}_t&={w}_{t};\qquad {u}^i_t= u^{i-1}_{t}- \frac{\gamma}{n}\int_{\hh\times\R}\left(\scalh{u^{i-1}_t}{x}-y\right) x\, d\rho(x,y)\,.
\end{align} 

If we let $S\colon\hh\to L^2(\hh,\rho_\hh)$ be the linear  map $w\mapsto \scalh{w}{\cdot}$,
which is bounded by $\sqrt{\kappa}$ under Assumption~\eqref{ass:zero} holds, then 
it is well-known that
\begin{align}\label{eq:dec}
\nonumber (\forall t\in\N)\quad \EE(\hat w_t)-\inf_\hh\EE&=\nor{S\hat{w}_t-g_\rho}_\rho^2\leq 2 \nor{S\hat{w}_t-Sw_t}_\rho^2 +2 \nor{Sw_t-g_\rho}_\rho^2\\
&\leq 2\kappa \norh{\hat{w}_t-w_t}^2+2(\EE(w_t)-\inf_\hh\EE).
\end{align}
In this paper, we refer to the gap between the empirical  and the expected iterates $\norh{\hat{w}_t-w_t}$ as the {\em sample error}, and to 
$\mathcal{A}(t,\gamma,n)=\EE(w_t)-\inf_\hh\EE$ as the {\em approximation error}.
Similarly, if $w^\dagger$ (as defined in \eqref{eq:wdag}) exists, using the triangle inequality, we obtain
\begin{align}\label{eq:decw}
 \norh{\hat{w}_t-w^\dagger}&\leq \norh{\hat w_t-w_t}+\norh{w_t-w^\dagger}.
\end{align}
\paragraph{Proof main  steps.}
In the setting of Section~\ref{sec:setting}, we summarize the key steps to derive a general bound for the sample error.
Indeed, this is the main technical contribution of the paper. The proof of the behavior of the approximation error is more standard. The bound on the sample error is derived through many technical lemmas  and uses concentration
inequalities applied to martingales. Its complete derivation is reported in Section~\ref{sec:samp_appr}. 
We underline that the crucial point is the probabilistic inequality in {\bf STEP 5} below. 
We need to introduce the additional linear operators:
$T\colon \hh\to \hh\colon\; T=S^*S,$
$(\forall x\in \XX)\; S_x\colon\hh\to \mathbb{R}\colon S_xw=\langle w,x\rangle$, and 
$T_{x}\colon \hh\to \hh\colon \; T_x=S_xS^*_x. $
Moreover, set $\hat{T}=\sum_{i=1}^n T_{x_i}/n$. We are now ready to state the main steps of the proof of the main results.\\
{\bf Sample error bound (STEP 1 to 5)}
\ \\
{\bf STEP 1 (see Proposition~\ref{prop:miter}):} Find equivalent formulations for the sequences $\hat{w}_t$  and $w_t$:
\begin{align*}
&\hat{w}_{t+1}=(I-\gamma\hat{T})\hat{w}_t + {\gamma}\bigg(\frac{1}{n} \sum_{j=1}^n S^*_{x_j}y_j\bigg)+{\gamma^2}\left( \hat{A}\hat{w}_t- \hat{b}\right)\\
&{w}_{t+1}=\left(I-{\gamma}T\right){w}_t + {\gamma} S^*g_\rho+{\gamma^2} (Aw_t-b),
\end{align*}
with
\begin{align*}
&\hat{A}=\frac{1}{n^2}\sum_{k=2}^n \left[\prod_{i=k+1}^n \left(I-\frac{\gamma}{n} T_{x_i} \right)\right]  T_{x_k}\sum_{j=1}^{k-1} T_{x_j},
&& \hat{b}=\frac{1}{n^2}\sum_{k=2}^n \left[\prod_{i=k+1}^n \left(I-\frac{\gamma}{n} T_{x_i} \right)\right]  T_{x_k}\sum_{j=1}^{k-1} S^*_{x_j}y_j.\\
&A=\frac{1}{n^2}\sum_{k=2}^n \left[\prod_{i=k+1}^n \left(I-\frac{\gamma}{n} T\right)\right]T\sum_{j=1}^{k-1} T, 
&&b=\frac{1}{n^2}\sum_{k=2}^n \left[\prod_{i=k+1}^n \left(I-\frac{\gamma}{n} T\right)\right]T\sum_{j=1}^{k-1}S^*g_\rho.
\end{align*}
{\bf STEP 2 (see Lemma~\ref{prop:diff}):} Use the formulation obtained in {\bf STEP 1} to derive the following recursive inequality
\begin{equation}
\label{eq:diffsk}
\hat{w}_{t}-w_{t}=\Big(I-\gamma \hat{T}+\gamma^2 \hat{A}\Big)^t (\hat{w}_0-w_0)+\gamma\sum_{k=0}^{t-1}\left(I-\gamma\hat{T}+\gamma\hat{A}\right)^{t-k+1} {\zeta}_{k} 
\end{equation}
with 
\begin{equation}
\label{eq:zetaksk}
{\zeta}_{k}=(T-\hat{T}){w}_k +\gamma(\hat{A}-A){w}_k+\bigg( \frac{1}{n}\sum_{i=1}^n\hat{S}^*_{x_i}y_i-S^*g_\rho\bigg)+\gamma(b- \hat{b}).
\end{equation}
{\bf STEP 3 (see Lemmas~\ref{ref:norma} and \ref{prop:diffb}):}  Initialize $\hat{w}_0=w_0=0$, prove that $\|I-\gamma\hat{T}+\gamma\hat{A}\|\leq 1$ and derive from {\bf STEP 2} that
\[
\norh{\hat{w}_{t}-w_{t}}\leq  \gamma\big( \|T-\hat{T}\|+ \gamma\|\hat{A}-A\|\big)\sum_{k=0}^{t-1} \|w_k\|_\hh+\gamma t\Big(\big\|\frac{1}{n}\sum_{i=1}^n\hat{S}^*_{x_i}y_i-S^*g_\rho\big\| + \gamma \|b- \hat{b}\| \Big).\]
{\bf STEP 4 (see Lemma~\ref{lem:ftbound}):} Let Assumption~\ref{ass:uno} hold for some $r\in\R_+$ and $g\in L^2(\hh,\rho_\hh)$. Prove that
\[
(\forall t\in\hh)\quad\|w_t\|_\hh\leq 
\begin{cases}
\max\{\kappa^{r-1/2}, (\gamma t)^{1/2-r}\}\|g\|_\rho &\text{if  $r\in\left[0,1/2\right[$}\\
\kappa^{r-1/2}\|g\|_\rho &\text{if  $r\in\left[1/2,+\infty\right[$}\\
\end{cases}
\] 
{\bf STEP 5 (see Lemma~\ref{prop:pin} and Proposition~\ref{lem:ak-ak}:} Prove that with probability greater than $1-\delta$ the following inequalities hold:
\begin{align*}
&\|\hat{A}-A\|_{HS} \leq \frac{32\kappa^2}{3\sqrt{n}} \log\frac4 \delta, && \norh{\hat{b}-b}\leq \frac{32\kappa M^2}{3\sqrt{n}} \log\frac4 \delta\\
&\Big\|\frac{1}{n}\sum_{i=1}^n T_{x_i}-T\Big\|_{HS}\leq \frac{16\kappa}{3\sqrt{n}}\log\frac 2 \delta,
&&\Big\|\frac{1}{n}\sum_{i=1}^n S^*_{x_i}y_i-S^*g_{\rho}\Big\|_{\hh}\leq \frac{16\sqrt{\kappa} M}{3\sqrt{n}}\log\frac 2 \delta
\end{align*}

{\bf STEP 6 (approximation error bound, see Theorem~\ref{thm:apprerr}):} Prove that, if Assumption~\ref{ass:uno} holds for some $r\in\left]0,+\infty\right[$,
then 
\[
\EE(w_t)-\inf_\hh \EE\leq  \bigg(\dfrac{r}{\gamma t}\bigg)^{2r} \|g\|_\rho^2\,.
\]
Moreover, if Assumption~\ref{ass:uno} holds with $r=1/2$, then $\|w_t-w^\dag\|_\hh\to 0$, and 
if Assumption~\ref{ass:uno} holds for some $r\in\left]1/2,+\infty\right[$, then
\[
\|w_t-w^\dag\|_\hh \leq \bigg(\dfrac{r-1/2}{\gamma t}\bigg)^{r-1/2} \|g\|_\rho\,.
\]
 {\bf STEP 7:} Plug the sample and approximation error bounds obtained in {\bf STEP 1-5} and {\bf STEP 6} in \eqref{eq:dec} and \eqref{eq:decw}, respectively.

\section{Experiments}
\label{sec:exp}
\paragraph{Synthetic data.}
We consider a linear regression problem with random design in $\R^d\times \R$. The input points $(x_i)_{1\leq i\leq n}$ are uniformly distributed 
in $[0,1]$ and the output points are obtained as $y_i=\langle w^*, \Phi(x_i)\rangle+N_i$, where $N_i$ is a gaussian noise with zero mean and standard deviation 1 and $\Phi=(\varphi_k)_{1\leq k\leq d}$ 
is a dictionary of trigonometric functions whose $k$-th element is $\varphi_k(x) = \cos((k-1)x)+\sin((k-1)x)$. 
In Figure \ref{fig:synthdataexp}, we plot the test error for $d=5$ (with $n=80$ in (a) and $800$ in (b)).
The plots show that the number of the epochs acts as a regularization parameter, and that early stopping is beneficial to achieve a better test error. Moreover, according to our theoretical findings, the experimental results suggest that the number of performed epochs should increase if the number of available training points increases.
\vspace{-0.2cm}
\begin{figure}[h!]
        \centering
        \subfigure[]{
                \includegraphics[trim=0cm 0cm 2cm 0cm, clip=true, width=0.45\textwidth]{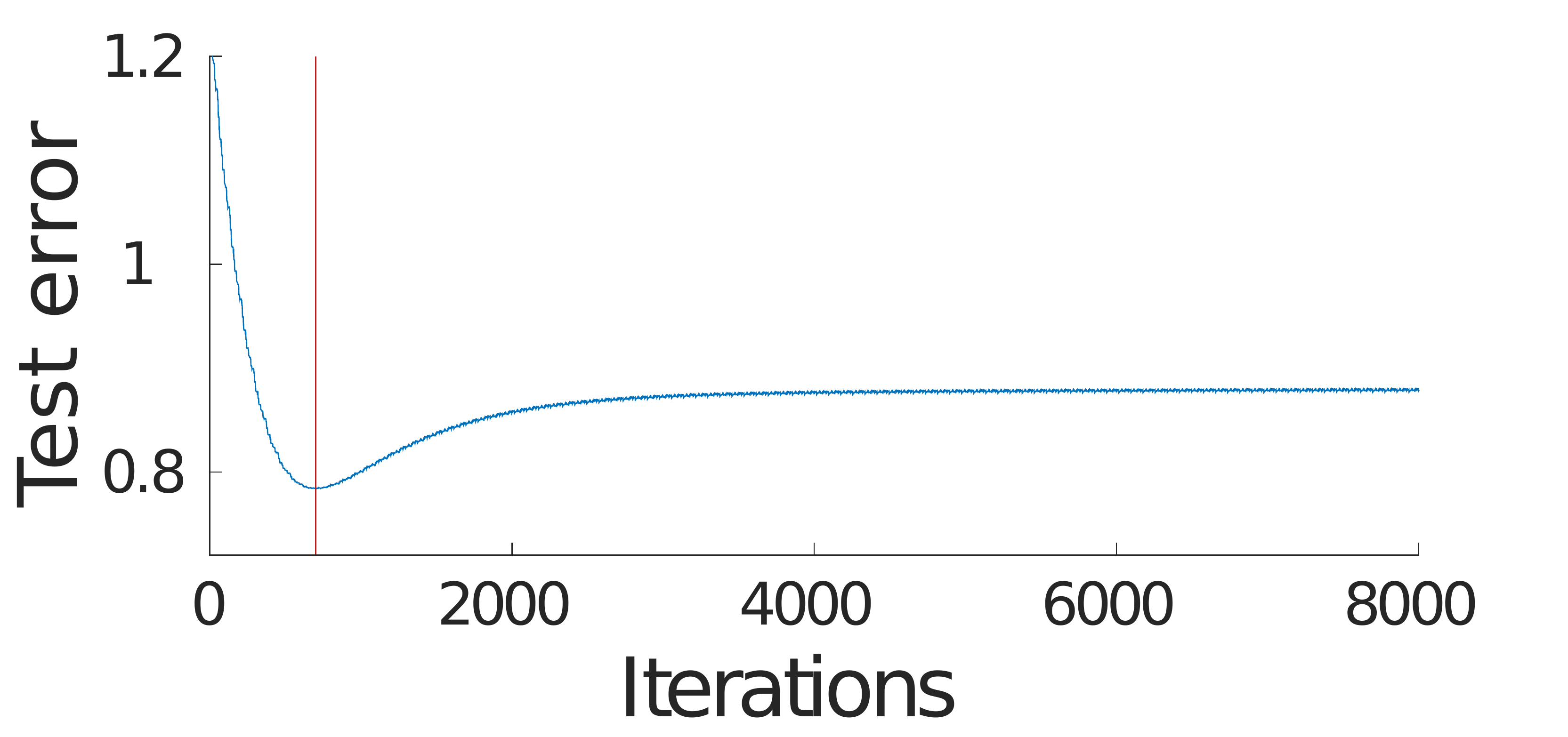}
        }
        \subfigure[]{
                \includegraphics[trim=0cm 0cm 2cm 0cm, clip=true, width=0.45\textwidth]{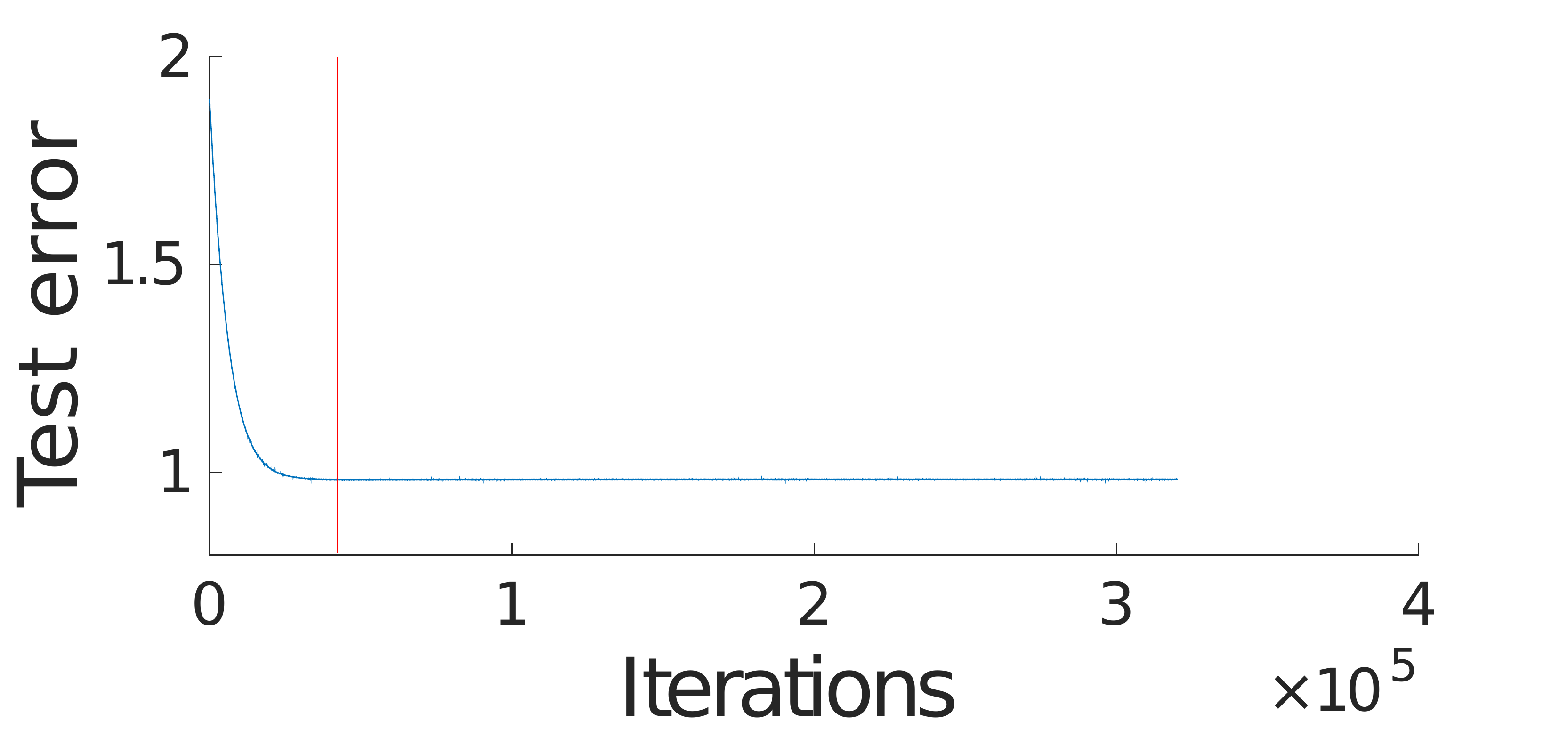}
        }
		\caption{Evolution of the test error with the number of iterations. In (a), $n=80$, and total number of iterations of IIR is 8000, corresponding to 100 epochs. 
		In (b), $n=800$ and the total number of epochs is 400. The best test error is obtained for 9 epochs in (a) and for 31 epochs in (b). } 
        \label{fig:synthdataexp}
\end{figure}

\paragraph{Real data.} We tested the kernelized version of our algorithm (see Appendix~\ref{sec:somspe}) on the \texttt{cpuSmall}\footnote{Available at \url{http://www.cs.toronto.edu/~delve/data/comp-activ/desc.html}}, \texttt{Adult} and \texttt{Breast Cancer Wisconsin (Diagnostic)}\footnote{\texttt{Adult} and \texttt{Breast Cancer Wisconsin (Diagnostic)},  {UCI}  repository, 2013.} real-world datasets. We considered a subset of \texttt{Adult}, with $n=1600$. The results are shown in Figure \ref{fig:realdataexp}. A comparison of the test errors obtained with the kernelized version of the method proposed in this paper (Kernel Incremental Iterative Regularization (KIIR)), Kernel Iterative Regularization (KIR), that is the kernelized version of gradient descent, and Kernel Ridge Regression (KRR) is reported in Table \ref{tab:realdataexp}. The results show that the proposed method achieve a  test error comparable  to that of KIR and KRR. 

\begin{figure}[h!]
        \centering
                \includegraphics[width=0.30\textwidth]{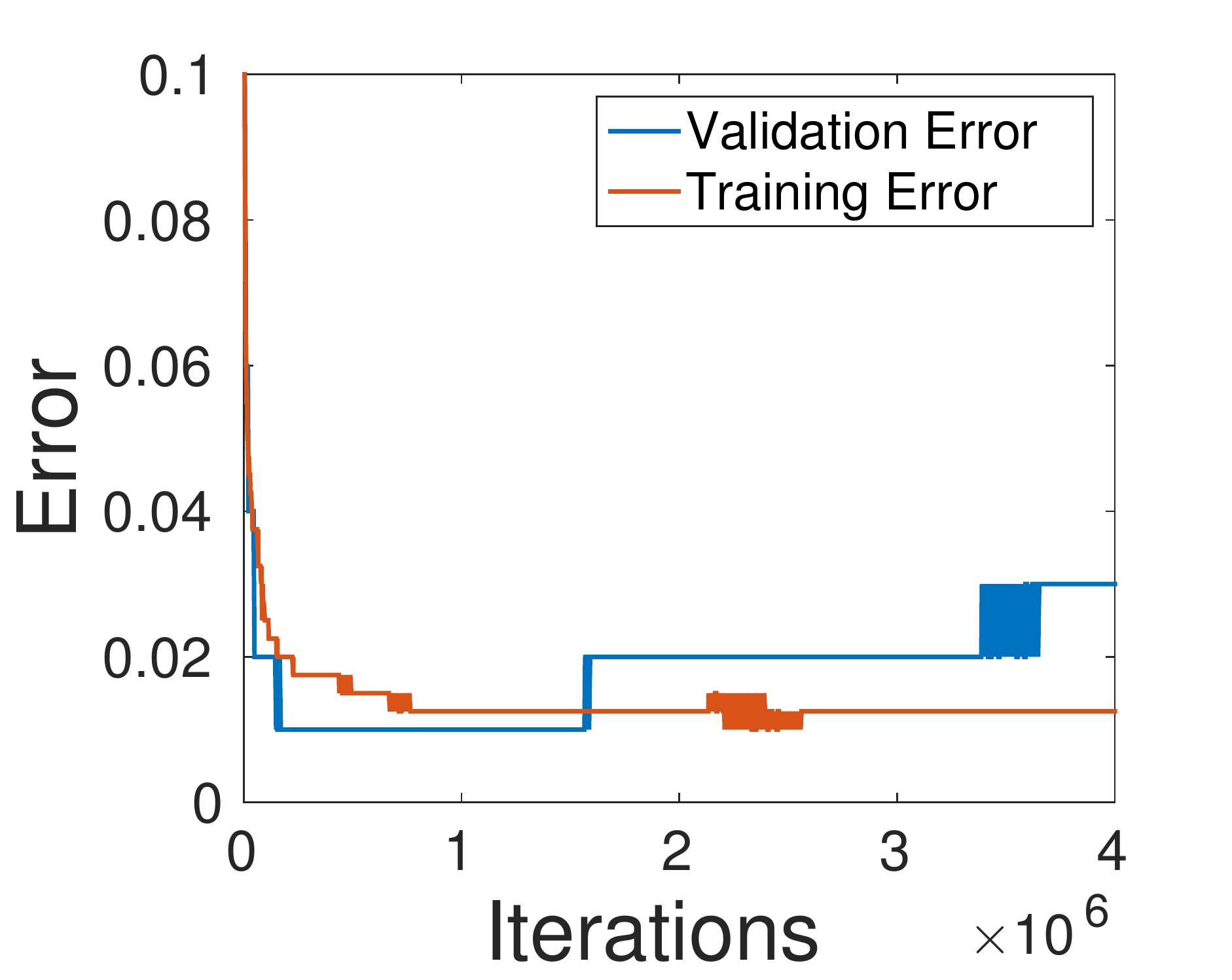}
        \caption{Training (orange) and validation (blue) classification errors obtained by KIIR on the \texttt{Breast Cancer} dataset as a function of the number of iterations over the training points. As can be seen by visual inspection, the test error increases after a certain number of iterations, while the training error is approximately decreasing with the number of iterations. }
        \label{fig:realdataexp}
\end{figure}
\vspace{-0.5cm}
\begin{table*}[h!]
\caption{Test error comparison on real datasets. Median values over 5 trials.}
\begin{center}
\scalebox{0.8}{
\begin{tabular}{|cccc||c||c|c|}
\hline 
\textbf{Dataset} & $\mathbf{n_{tr}}$ & $\mathbf{d}$ & \textbf{Error Measure} &  \textbf{KIIR} &\textbf{KRR}  & \textbf{KIR}\\ \hline 
\texttt{cpuSmall} & 5243 & 12 & RMSE & 5.9125 & 3.6841 & 5.4665 \\ 
\hline 
\texttt{Adult} & 1600 & 123 & Class. Err. & 0.167 & 0.164 &  0.154\\ 
\hline 
\texttt{Breast Cancer} & 400 & 30 & Class. Err. & 0.0118 & 0.0118 & 0.0237\\ 
\hline
\end{tabular}
}
\end{center}
\label{tab:realdataexp}
\end{table*}


\newpage
\appendix
\section{Some Special Cases of Interest}\label{sec:somspe}

Finally,  before proving our main results, we illustrate and discuss a few  special instances of the considered setting and related quantities.
\paragraph{Linear and Functional Regression}
In classical linear regression,  data are described by the following model
$$
y_i=w^T_*x_i+\delta_i, \quad i=1, \dots,n
$$
where $\delta_i$, $i=1, \dots,n$, are  i.i.d.  sample from a normal distribution and $w_*, x_1, \dots, x_n\in \R^d$, $d\in \N^*$.  In fixed design regression, the inputs $x_1, \dots, x_n$ are assumed to be fixed, while  in random design regression they are random sample according to some fixed unknown distribution \cite{SteiChri08}. It is easy to see that this latter setting is a special case of the framework in the paper (indeed the analysis in the paper can be also adapted with minor modifications  to the fixed design setting). The regression model can be further complicated assuming  the function of interest to be  non linear (while we might still restrict the search of a solution to linear estimators). This can be dealt with for example considering kernel methods as we discuss below. Another special case of the setting  in the paper is that of functional regression, where the input points are assumed to be infinite dimensional objects, for example curves, and they are formally described as functions in a Hilbert space.  Clearly also this  example is subsumed as a special case of our setting.
\paragraph{Learning with Kernels}
The setting in the paper reduces to nonparametric learning in RKHS as a special case. 
Let ${ \Xi}\times \R$ be a probability space with distribution $\mu$, that be can seen as the input/output space.  The goal is then to minimize the risk, that, 
considering the square loss function, is  given by 
\begin{equation}\label{eq:expriskf}
\EE(f)=\int (y-f(\xi))^2d\mu(\xi, y)
\end{equation}
and is  well defined for all measurable functions.   A common way to build an estimator is to consider  a  symmetric  kernel $K:{\Xi}\times {\Xi}\to \R$ which is positive definite, that is for which the matrix with entries $K(\xi_i,\xi_j)$, $i,j=1\dots n$, is positive semidefinite for all 
in $\xi_1, \dots, \xi_n\in \Xi$, $n\in \N^*$. Such a kernel defines a unique  Hilbert space of function $\hh_K$ with inner product $\scal{\cdot}{\cdot}_{K}$ and such that for all $\xi \in \Xi$, $K_\xi(\cdot)=K(\xi, \cdot)\in \hh_K$ and the following reproducing property holds for all $f\in \hh_K$, $f(\xi)=\scal{f}{K_\xi}_{K}$.
To see how this setting is subsumed by the one in the paper, it is useful to introduce the (feature) map $\Phi:\Xi\to \hh_K$, where $\Phi(\xi)=K_\xi$, for $\xi\in \Xi$ and further consider 
$\overline \Phi:\Xi\times \R \to \hh_K\times\R$, where $\overline \Phi(\xi,y)=(K_\xi,y)$, for $\xi\in \Xi$ and $y\in\mathbb{R}$.
Assuming the kernel to be measurable, we can  view $\overline \Phi$ as a random variable. If we denote its distribution on $\hh_K\times \R$ by $\mu_{\overline\Phi}$, then we can then let $\hh=\hh_K$ and $\rho=\mu_{\overline \Phi}$. It is known that the functions in a RKHS a measurable provided that the kernel is measurable \cite{SteiChri08}, hence if  we consider the risk of a function $f\in \hh_K$ we have
$$
\int_{\Xi\times\R} (y-f(\xi))^2d\mu(\xi, y)=
\int_{\Xi\times\R} (y-\scal{f}{K_\xi}_{K})^2d\mu(\xi, y)=
\int_{\hh\times\R} (y-\scal{f}{x})^2d\rho(x, y),
$$
where we made the change of variables $(K_\xi,y)=(x,y)$.
As is well known in machine learning, we can view learning a function using a kernel as learning a 
linear function in suitable Hilbert space.
\paragraph{Integral and Covariance Operators.}
The operator $L$ can be seen as an integral operator associated to a linear kernel and is closely related to the covariance operator, or rather the second moment  operator defined by $\rho$.
This connections allows to interpret Assumption~\ref{ass:uno} in terms of the principal components.

To see this note that, under Assumption~\ref{ass:zero} it is easy to see that  $S$ is bounded, and  its adjoint is given by 
$$
S^*:L^2(\hh,\rho_\hh)\to\hh, \quad S^* w = \int xf(x)d\rho_\hh(x), \quad \forall f\in  L^2(\hh,\rho_\hh).
$$
Then a straightforward calculation shows that $L=SS^*$. 
Moreover we can define $T:L^2(\hh,\rho_\hh)\to\hh$ as $T=S^*S$ and check that 
$$
T w = \int x\otimes x d\rho_\hh(x), \quad \forall w\in  \hh,
$$
where $x\otimes x=\scal{x}{\cdot}x$, for all $x\in \hh$.
The operator $T$ is the second moment operator associated to $\rho$ and its eigenfunctions are the principal components.  Under Assumption~\ref{ass:zero}, the operators $T,L$ are linear, positive, sef-adjoint and trace class, $S,S^*$ are bounded and Hilbert Schmidt, hence compact. 
The operators $T,L$ have the same non zero eigenvalues $(\sigma_j)_j$ which are the square of the singular values of $S$.  If we denote by $(v_j)_j$ the eigenfunctions of $T$, the eigenfunctions of $L$ can be chosen to be $(u_j)_j$ with 
$u_j(x)=\sigma_j^{-1}\scal{v_j}{x}_\hh$, $\rho_\hh$-almost surely. This latter observation allows an interpretation of Condition~\eqref{source}. By considering  higher fractional power we are  essentially assuming that the regression function can be linearly approximated and its approximation can be  effectively  represented considering the principal components associated to large eigenvalues.



\paragraph{Binary Classification}
The results in the paper can be directly applied to  binary classification. Indeed, in this  setting the outputs are binary valued i.e. $\{-1,1\}$ and the goal is to learn a classifier  $c:\hh \to \{-1,1\}$ with small misclassification risk 
\begin{equation}\label{eq:miscrisk}
R(c)=\P\left ( c(X)\neq Y\right).
\end{equation}
The above risk is minimized by the so called Bayes decision rule defined by  $b_\rho(x)=\text{sign}(2\rho(1|x)-1)$, $~\rho_H$- almost surely, and where for $a\in \R$, $\text{sign}(a)=1$, if $a\ge 1$ and $\text{sign}(a)=-1$ otherwise.
A relaxation approach is usually considered to  learn a classification rule, which is based on replacing the risk $R$ with convex error functional defined over real valued functions, e.g. 
considering~\eqref{eq:expriskf}. A classification rule is then obtained by taking the sign.

So called comparison results quantify the cost of the relaxation. In particular, it is known that the following inequality related $R$ and $\EE$ defined in~\eqref{eq:miscrisk}, ~\eqref{eq:expriskf} respectively,
$$
R(\text{sign}(f))-R(b_\rho) \le \sqrt{\EE(f)-\EE(f_\rho)} 
$$
for all measurable functions $f$. The latter inequality allows to derive  excess misclassification risk and can be improved under additional assumption. We refer to \cite{yao}, for further details in this direction.
%
%
%


\section{Proofs}
\label{sec:proofs}
In this appendix we prove the main results. The proof is quite long, and
will be given relying on a  series of lemmas. 
\subsection{Preliminary Results}
\label{sec:prelim}


We collect very general results that will be applied to our setting.

Let $\mathcal{B}$ be a normed space. For every $r\in\N$, let $A_r\colon\mathcal{B}\to \mathcal{B}$ be a linear operator, 
let $(B_r)_{r\in\N}$ be a sequence in $\mathcal{B}$, and define the sequence $(X_r)_{r\in\mathbb{N}}$ in $\mathcal{B}$
recursively as
\begin{equation}\label{eq:dftv}
X_{r+1}=A_rX_r+B_r.
\end{equation}
We repeatedly use the following well-known equality, which is valid for every $r\in\N^*$
and for every integer $s\leq r$,
\begin{equation}\label{eq:dftvf}
X_{r}=\left(\prod_{i=s}^{r-1}A_i \right)X_s+\sum_{k=s}^{r-1}\left(\prod_{i=k+1}^{r-1}A_i\right) B_k\,.
\end{equation}

We next state an auxiliary lemma, establishing the minimizing property of the gradient descent iteration also when the infimum
is not attained. Despite the result is a basic property of a  very classical algorithm, we were not able to find the proof of this fact.
Convergence properties of gradient descent are usually studied in two settings: for differentiable functions (not necessarily convex)
and for convex functions. In the first case, the typical results do not assume existence of a minimizer and establish convergence
to zero of the gradient of the function \cite{Pol87}. In the convex setting, the minimizing property is established assuming the existence of a 
minimizer \cite{Pol87}. 

 \begin{lemma}
 \label{lem:grad}
 Let $\hh$ be a Hilbert space, and  $F\colon\hh\to\mathbb{R}$ be a convex and differentiable function 
 with $\beta$-Lipschitz continuous gradient. Let $v_0\in\hh$, let $(\eta)_{k\in\NN}$ be such
 that, for every $k\in\NN$, $\eta_k\in\left]0,2/\beta\right[$ and define, for every $k\in\mathbb{N}$, $v_{k+1}=v_k-\eta_k \nabla F(v_k)$. 
 Then
 \begin{equation}
 (\forall u\in\mathcal{H}) \quad F(v_k)-F(u)\leq \frac{\|u-v_0\|_\hh}{2\sum_{j=0}^k\eta_j}
 \end{equation}
 In particular,  if $\sum_{k\in\NN}\eta_k=+\infty$, $F(v_k)\to \inf F$.
 \end{lemma}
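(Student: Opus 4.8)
The plan is to fuse the two defining properties of $F$—convexity and $\beta$-Lipschitz continuity of $\nabla F$—into a single one-step ``fundamental inequality'' and then telescope. First I would record the \emph{descent lemma}: $\beta$-Lipschitz continuity of the gradient gives
$$F(v_{k+1})\le F(v_k)+\langle\nabla F(v_k),v_{k+1}-v_k\rangle+\frac{\beta}{2}\|v_{k+1}-v_k\|_\hh^2,$$
and substituting $v_{k+1}-v_k=-\eta_k\nabla F(v_k)$ yields $F(v_{k+1})\le F(v_k)-\eta_k\big(1-\tfrac{\beta\eta_k}{2}\big)\|\nabla F(v_k)\|_\hh^2$. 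Since $\eta_k\in\,]0,2/\beta[\,$ the coefficient is strictly positive, so $(F(v_k))_k$ is non-increasing; this monotonicity is what later lets me replace every $F(v_j)$ by $F(v_k)$.

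The crux is the telescoping inequality. I would expand $\|v_{k+1}-u\|_\hh^2=\|v_k-u\|_\hh^2-2\eta_k\langle\nabla F(v_k),v_k-u\rangle+\eta_k^2\|\nabla F(v_k)\|_\hh^2$, solve for the inner product, and insert it into the estimate $F(v_{k+1})\le F(u)+\langle\nabla F(v_k),v_k-u\rangle-\eta_k\big(1-\tfrac{\beta\eta_k}{2}\big)\|\nabla F(v_k)\|_\hh^2$ that follows from convexity ($F(v_k)\le F(u)+\langle\nabla F(v_k),v_k-u\rangle$) together with the descent lemma. The leftover multiple of $\|\nabla F(v_k)\|_\hh^2$ collects to $\tfrac{\eta_k}{2}(\beta\eta_k-1)$, whose sign the step-size restriction controls, leaving
$$F(v_{k+1})\le F(u)+\frac{1}{2\eta_k}\big(\|v_k-u\|_\hh^2-\|v_{k+1}-u\|_\hh^2\big).$$

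From here the argument is mechanical: rewrite as $2\eta_k\big(F(v_{k+1})-F(u)\big)\le\|v_k-u\|_\hh^2-\|v_{k+1}-u\|_\hh^2$, sum over the index so the right-hand side telescopes to at most $\|v_0-u\|_\hh^2$, and on the left use monotonicity of $(F(v_j))_j$ to bound each $F(v_{j+1})-F(u)$ below by $F(v_k)-F(u)$. Factoring the common value out gives $2\big(F(v_k)-F(u)\big)\sum_j\eta_j\le\|v_0-u\|_\hh^2$, which is the claimed bound. The final assertion then follows because the estimate holds for \emph{every} $u\in\hh$: given $\varepsilon>0$, choose $u$ with $F(u)<\inf F+\varepsilon$; since $\sum_k\eta_k=+\infty$ the right-hand side tends to $0$, forcing $\limsup_k F(v_k)\le\inf F+\varepsilon$, and $\varepsilon$ is arbitrary.

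The step I expect to be the main obstacle is the second one: the sign bookkeeping of the residual $\|\nabla F(v_k)\|_\hh^2$ term is precisely where the hypothesis $\eta_k\in\,]0,2/\beta[\,$ is consumed, and one must verify it does not corrupt the clean constant $1/(2\eta_k)$ (transparent when $\eta_k\le 1/\beta$, and needing slightly more careful accounting otherwise). Conceptually, the one genuinely non-routine feature—and the reason the lemma is worth isolating—is that the comparison is against an \emph{arbitrary} $u$ rather than a minimizer, so convergence to $\inf F$ is obtained with no appeal to the existence of an argmin; I would also note that the numerator on the right-hand side should read $\|u-v_0\|_\hh^2$.
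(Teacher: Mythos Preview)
Your plan is correct and delivers the lemma cleanly in the regime $\eta_k\le 1/\beta$: the residual coefficient $\tfrac{\eta_k}{2}(\beta\eta_k-1)$ is then nonpositive, the one-step inequality $2\eta_k\big(F(v_{k+1})-F(u)\big)\le\|v_k-u\|_\hh^2-\|v_{k+1}-u\|_\hh^2$ holds, and telescoping plus monotonicity of $(F(v_j))_j$ yields the bound exactly as you describe. Your closing remark that the numerator should be $\|u-v_0\|_\hh^2$ is also right.

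The paper takes a genuinely different route. It does \emph{not} merge convexity and descent into a single one-step inequality, and it does not invoke monotonicity of $(F(v_j))_j$ to pass from $\sum_j\eta_j F(v_j)$ to $\big(\sum_j\eta_j\big)F(v_t)$. Instead it keeps the two ingredients separate: convexity alone gives, after summation, a lower bound on $2\sigma_t F(u)-2\sum_{k}\eta_k F(v_k)$ (with $\sigma_t=\sum_{k\le t}\eta_k$); then an Abel-type rearrangement $\sigma_k F(v_k)-\sigma_{k+1}F(v_{k+1})+\eta_{k+1}F(v_{k+1})=\sigma_k\big(F(v_k)-F(v_{k+1})\big)$ is summed and combined with the descent lemma to convert $\sum_k\eta_k F(v_k)$ into $\sigma_t F(v_t)$ plus a nonnegative remainder. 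Adding the two summed inequalities produces the stated bound directly. What this buys over your approach is that the gradient terms arising from convexity and from descent are collected and cancelled \emph{after} summation rather than step by step, which is how the paper intends to cover the full range $\eta_k\in\,]0,2/\beta[\,$; your route is shorter and more transparent, but the clean per-step inequality genuinely breaks for $\eta_k\in(1/\beta,2/\beta)$, and the ``slightly more careful accounting'' you anticipate is precisely the Abel rearrangement the paper performs, not a small tweak of your telescoping argument. For the paper's actual application (constant step $\eta=\gamma/n$ with $\gamma\le\kappa^{-1}$, applied to a quadratic with Lipschitz constant $\kappa$), one is in the regime $\eta\le 1/\beta$ anyway, so your argument already suffices there.
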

 \begin{proof} Since $F$ is convex and differentiable, 
 \begin{align}
 \nonumber (\forall k\in\NN)(\forall u\in\hh) \quad F(u)-F(v_k) &\geq \langle \nabla F(v_k), u-v_k \rangle_\hh \\
\label{eq:subc} &=\eta_{k}^{-1}\langle v_k-v_{k+1},u-v_k\rangle_\hh.
 \end{align}
  Therefore, 
 \begin{align}
 \nonumber (\forall k\in\NN) \quad 2\eta_{k}(F(u)-F(v_{k})&\geq -2\langle v_{k+1} -v_k,u-v_{k}\rangle_\hh\\
 \nonumber&=\|u-v_{k+1}\|_\hh^2-\|v_{k+1}-v_k\|_\hh^2-\|u-v_k\|_\hh^2\\
\label{eq:tel} &=\eta_k^2\|\nabla F(v_k)\|_\hh^2+\|u-v_{k+1}\|_\hh^2-\|u-v_k\|_\hh^2
 \end{align}
Let $t\in\NN$ and define $\sigma_t=\sum_{k=0}^t\eta_k$. Summing \eqref{eq:tel} for $k=0,\ldots,t$ we obtain
\begin{equation}
\label{eq:one}
2\sigma_t F(u) -2\sum_{k=0}^t \eta_k F(v_k) \geq \sum_{k=0}^t \eta_k^2\|\nabla F(v_k)\|_\hh+\|u-v_{t+1}\|_\hh^2-\|u-v_0\|_\hh^2.
\end{equation}
Using the Lipschitz continuity of the gradient of $F$  (see \cite[Equation (15) p.6]{Pol87}),
\[
(\forall k\in\NN)\quad F(v_k)-F(v_{k+1})\geq \eta_k\left(1-\frac{\eta_k\beta}{2}\right) \|\nabla F(v_k)\|_\hh^2
\]
Therefore,
\begin{equation}
\label{eq:su}(\forall k\in\NN)\quad \sigma_k F(v_k)-\sigma_{k+1}F(v_{k+1})+\eta_{k+1}F(v_{k+1})\geq \sigma_k \eta_k\left(1-\frac{\eta_k\beta}{2}\right) \|\nabla F(v_k)\|_\hh^2
\end{equation}
Summing \eqref{eq:su} for $k=0,\ldots,t-1$ we get, for every $t\in\NN$
\begin{equation}
\label{eq:summ}
-\sigma_{t}F(v_t)+\sum_{k=0}^t\eta_kF(v_k) \geq \sum_{k=0}^{t-1} \sigma_k \eta_k\left(1-\frac{\eta_k\beta}{2}\right) \|\nabla F(v_k)\|_\hh^2
\end{equation}
Adding \eqref{eq:summ} to \eqref{eq:one} we get, for every $u\in\hh$
\[
2\sigma_t (F(u)-F(v_t)) \geq \sum_{k=0}^t \eta_k^2\|\nabla F(v_k)\|_\hh+\|u-v_{t+1}\|_\hh^2-\|u-v_0\|_\hh^2+2 \sum_{k=0}^{t-1} \sigma_k \eta_k\left(1-\frac{\eta_k\beta}{2}\right) \|\nabla F(v_k)\|_\hh^2
\]
and hence,
\[
(\forall u\in\hh)\qquad F(v_t)-F(u) \leq \frac{\|u-v_0\|_\hh^2}{2\sigma_t}. 
\]
\end{proof}

We next recall a probabilistic inequality for martingales \cite[Theorem 3.4]{Pin94} 
(see also \cite[Lemma A.1 and Corollaries A.2 and A.3]{TarYao14}).
\begin{theorem}
\label{thm:pinelis} 
Let $(\xi_i,\mathcal{F}_i)_{1\leq i\leq n}$ be an adapted family of random vectors taking values in a Hilbert space with
norm $\|\cdot\|$, such that $\E[\xi_i|\mathcal{F}_{i-1}]=0$ a.s. Assume that there exist $M\in\R_{++}$ such that $\|\xi_i\|\leq M$. 
Then, for every $\delta\in \,]0,1[\,$ the following holds 
\[
\mathbb{P}\left(\left\{\sup_{1\leq j\leq n}\Big\|\frac{1}{n}\sum_{i=1}^j \xi_i\Big\|\leq \frac{8M}{3\sqrt{n}}\log\frac2\delta \right\}\right)\geq 1-\delta\,.
 \]
\end{theorem}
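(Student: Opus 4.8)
The plan is to identify the normalized partial sums as a martingale and to obtain the stated estimate as a specialization of Pinelis's exponential inequality for martingales with values in a $2$-smooth Banach space, recalling that every Hilbert space is $2$-smooth with constant $1$. \emph{First}, I would check that $\big(\sum_{i=1}^{j}\xi_i\big)_{1\le j\le n}$ is an $(\mathcal F_j)$-martingale, which is immediate from $\E[\xi_i\mid\mathcal F_{i-1}]=0$. Since $\|\cdot\|$ is convex, for any nondecreasing convex $\phi$ the process $\big(\phi(\|\sum_{i=1}^{j}\xi_i\|)\big)_j$ is then a nonnegative submartingale; this is the structural fact that will let me pass from a fixed-time tail bound to the maximal form involving $\sup_{1\le j\le n}$.

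The heart of the argument is a conditional exponential-moment estimate exploiting $2$-smoothness. Writing $S_j=\sum_{i=1}^{j}\xi_i$, the Hilbert identity $\|S_j\|^2=\|S_{j-1}\|^2+2\langle S_{j-1},\xi_j\rangle+\|\xi_j\|^2$ together with $\E[\xi_j\mid\mathcal F_{j-1}]=0$ and $\|\xi_j\|\le M$ yields, for the test function $\psi_\lambda=\exp(\lambda\,\cdot)$ with $\lambda>0$, a one-step bound of the form $\E[\psi_\lambda(\|S_j\|)\mid\mathcal F_{j-1}]\le \psi_\lambda\big(\sqrt{\|S_{j-1}\|^2+M^2}\big)\,h(\lambda M)$ for an explicit factor $h$. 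Iterating this inequality over $j=1,\dots,n$ controls $\E[\psi_\lambda(\|S_n\|)]$, and then Doob's submartingale inequality applied to $\psi_\lambda(\|S_j\|)$ upgrades the resulting tail bound on $\|S_n\|$ to one on $\sup_{1\le j\le n}\|S_j\|$. Using $\|\xi_i\|\le M$ to bound the sum of conditional second moments by $nM^2$, this produces a Bennett--Bernstein type estimate
\[
\P\Big(\sup_{1\le j\le n}\Big\|\sum_{i=1}^{j}\xi_i\Big\|\ge s\Big)\le 2\exp\Big(-\frac{s^2}{2(nM^2+Ms/3)}\Big).
\]

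\emph{Finally}, I would convert this into the clean statement. Solving $2\exp\big(-s^2/(2(nM^2+Ms/3))\big)=\delta$ and using $\sqrt{a+b}\le\sqrt a+\sqrt b$ shows that the event holds with $s$ of order $M\sqrt{n\log(2/\delta)}+M\log(2/\delta)$; since $\delta\in\left]0,1\right[$ forces $\log(2/\delta)$ to be bounded below, both contributions are dominated by $\tfrac{8M\sqrt n}{3}\log\tfrac2\delta$, which after dividing by $n$ gives exactly the threshold $\tfrac{8M}{3\sqrt n}\log\tfrac2\delta$ appearing in the statement; the factor $2$ inside the logarithm and the constant $8/3$ are precisely the symmetrization constant and the slack absorbed in this last crude simplification. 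The main obstacle is the exponential-moment step in infinite dimensions: one cannot reduce to scalar martingales via $\|S\|=\sup_{\|u\|\le1}\langle S,u\rangle$, because the supremum of martingales is not a martingale and a union bound over directions fails when $\hh$ is infinite dimensional. Pinelis's smoothness technique circumvents this by estimating the conditional moments of $\|S_j\|$ intrinsically, never passing to coordinates, and it is this ingredient that I would invoke (as in \cite{Pin94}) rather than reprove in full.
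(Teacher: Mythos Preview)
The paper does not prove this theorem: it is stated as a recall of a known martingale inequality, with a direct citation to \cite[Theorem 3.4]{Pin94} (and to \cite{TarYao14} for the specific form used here). Your proposal goes further and sketches the underlying argument, and the sketch is faithful to Pinelis's original method: the submartingale property of $\phi(\|S_j\|)$ via conditional Jensen, the conditional exponential-moment control exploiting $2$-smoothness (Hilbert space with constant $1$) in place of any coordinate-wise reduction, Doob's maximal inequality to pass to the supremum, and the final algebraic inversion of the Bennett--Bernstein tail. Your handling of the constants is also correct: solving $s^2 = 2L(nM^2+Ms/3)$ with $L=\log(2/\delta)$ gives $s \le 2LM/3 + M\sqrt{2Ln}$, and since $\delta<1$ forces $L>\log 2>1/2$, one checks that $2L/3 + \sqrt{2Ln}\le (8/3)L\sqrt{n}$ for all $n\ge 1$, which is exactly the stated threshold after dividing by $n$.

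The only place where your outline is schematic rather than literal is the one-step exponential bound: Pinelis's actual estimate does not proceed via $\psi_\lambda(\sqrt{\|S_{j-1}\|^2+M^2})\,h(\lambda M)$ but through a more delicate $\cosh$-based supermartingale construction. This is a technical point rather than a gap, and you rightly flag that you would invoke \cite{Pin94} for that ingredient rather than redo it. In short, your proposal is correct and supplies what the paper deliberately omits by citation.
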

\subsection{Proof of STEP 1}
\label{sec:samp_appr}
Here we first introduce a recursive expression which is satisfied by the sequence
 $(\hat{w}_t)_{t\in\mathbb{N}}$, that allows to interpret the incremental gradient iteration 
 as a gradient descent iteration with errors.
To do so, we start by introducing some further notation and then show that the iteration presented in \eqref{eq:nine1}-\eqref{eq:nine}
results from the application of the incremental gradient method to the empirical risk. 

Let $x\in\hh$ and define $S_x\colon\hh\to \mathbb{R}$, by setting $S_x w=\scalh{w}{x}$. 
Then $S_x$ is a bounded linear operator and $\nor{S_x}\leq \norh{x}$. 
Using this linear operator, and the operator $S$ introduced in Assumption~\ref{ass:uno}, 
Problems \eqref{expmin} and \eqref{empmin} can be expressed as convex quadratic minimization problems. 
The empirical risk can be written as 
\begin{equation}
\label{eq:emp}
\emp(w)=\frac 1 n \sum_{i=1}^n (S_{x_i}w-y_i)^2,
\end{equation}
and recalling Assumption~\ref{ass:uno} and \eqref{eq:dec}, we have 
\begin{equation}
\label{eq:exp}
\EE(w)=\int_{\hh\times \R} (S w(x)-y)^2 d\rho(x,y)=\nor{Sw-g_\rho}_\rho^2+\inf_{\hh}\EE.
\end{equation}
Define the operators 
\begin{equation}
\label{eq:T}
T\colon \hh\to \hh\colon\; T=S^*S,\quad\text{and } \quad(\forall x\in \XX) \quad T_{x}\colon \hh\to \hh\colon \; T_x=S_xS^*_x. 
\end{equation}
Then, computing the gradients of $\EE$ and $\emp$ respectively, \eqref{eq:nine1}-\eqref{eq:nine} can be rewritten as
\begin{align}\label{eq:nineop}
\hat{u}^{0}_t&=\hat{w}_{t};\qquad \hat{u}^i_t=\hat u^{i-1}_{t}- \frac{\gamma}{n}(T_{x_i}\hat u^{i-1}_{t}-S_{x_i}^*y_i), \quad i=1,\ldots,n
\end{align}
and \eqref{eq:nine1exp}-\eqref{eq:nineexp} can be expressed as
\begin{align}\label{eq:nineexpop}
{u}^{0}_t&={w}_{t};\qquad {u}^i_t= u^{i-1}_{t}- \frac{\gamma}{n}(T u^{i-1}_{t}-S^*g_\rho)\,.
\end{align}  
It is apparent from \eqref{eq:nineop} that the considered iteration is derived from the application
of the incremental gradient algorithm to the empirical error (see \cite{Ber97,nedic2001incremental}).
At the same time \eqref{eq:nineexpop} shows that iteration \eqref{eq:nine1exp}-\eqref{eq:nineexp} can be seen as the 
result of applying the incremental gradient descent algorithm to the expected loss, 
which clearly, for fixed $n$, can be written as
$$w\mapsto\frac{1}{n}\sum_{i=1}^n {\EE(w)}\,.$$ 

\begin{lemma}
\label{lem:incrvsgrad}
Let $t\in\NN$, and let $w_t$ be defined as in \eqref{eq:nine1exp}-\eqref{eq:nineexp}, with
$w_0=0$. Let $\eta=\gamma/n$.   Then
\begin{equation}\label{eq:graddesc2}
w_{t}=\eta\sum_{j=0}^{nt-1} \big(I-\eta T\big)^{nt-j-1}S^*g_\rho .
\end{equation}
\end{lemma}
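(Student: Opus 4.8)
The plan is to recognize that, once we pass to the \emph{expected} iteration, every inner update within one epoch is governed by the same affine map, so the double recursion \eqref{eq:nine1exp}--\eqref{eq:nineexp} collapses into a single constant-coefficient linear recursion to which the unrolling identity \eqref{eq:dftvf} applies verbatim.

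First I would rewrite the inner recursion \eqref{eq:nineexpop}. With $\eta=\gamma/n$, each step reads $u^i_t=(I-\eta T)u^{i-1}_t+\eta S^*g_\rho$, where crucially the operator $I-\eta T$ and the vector $\eta S^*g_\rho$ do \emph{not} depend on the index $i$. This is exactly the feature that distinguishes the expected iteration from the empirical one, whose inner steps involve the data-dependent operators $T_{x_i}$ and vectors $S^*_{x_i}y_i$ and therefore do not reduce to a single affine map.

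Next I would splice the epochs together. Since $w_{t+1}=u^n_t$ and $u^0_{t+1}=w_{t+1}$, the last iterate of one epoch coincides with the first iterate of the next, so there is no discontinuity at the epoch boundaries. I would then define a single sequence $(v_k)_{k\in\N}$ by $v_0=w_0=0$ and $v_{k+1}=(I-\eta T)v_k+\eta S^*g_\rho$, and verify by a short induction on the epoch count that $v_{nt}=w_t$ for every $t\in\N$, the base case and the boundary matching being immediate from $u^0_{t+1}=u^n_t$.

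Finally I would unroll. The sequence $(v_k)$ is of the form \eqref{eq:dftv} with the constant choices $A_r=I-\eta T$ and $B_r=\eta S^*g_\rho$. Applying \eqref{eq:dftvf} with $s=0$ and $r=nt$, using $\prod_{i=k+1}^{nt-1}(I-\eta T)=(I-\eta T)^{nt-1-k}$ together with $v_0=0$ to annihilate the initial term, yields
$$
w_t=v_{nt}=\eta\sum_{k=0}^{nt-1}(I-\eta T)^{nt-1-k}S^*g_\rho,
$$
which is precisely \eqref{eq:graddesc2} after relabeling the summation index. No step presents a genuine obstacle; the only things to watch are the index bookkeeping needed to match the exponent $nt-1-k$ produced by \eqref{eq:dftvf} with the exponent $nt-j-1$ in the statement, and the verification that the splicing identity $v_{nt}=w_t$ holds correctly across epoch boundaries.
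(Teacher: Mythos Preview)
Your proposal is correct and follows essentially the same approach as the paper: define the auxiliary single-step sequence $v_{k+1}=(I-\eta T)v_k+\eta S^*g_\rho$, identify $w_t=v_{nt}$, and unroll via \eqref{eq:dftvf}. The paper's version is simply more terse, stating the definition of $v_k$ and asserting $w_t=v_{nt}$ without spelling out the epoch-boundary matching or the explicit application of \eqref{eq:dftvf} that you describe.
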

\begin{proof}  
Let, for every $k\in\NN$,
\begin{equation*}
 v_{k+1}=(I-\eta T)v_k+\eta S^*g_\rho. 
\end{equation*}
Then, by \eqref{eq:nineexpop}, $w_t=v_{nt}$, and \eqref{eq:graddesc2} follows. 
\end{proof}

In other words, the statement of Lemma~\ref{lem:incrvsgrad}  states that the $t$-th epoch 
of the incremental gradient descent iteration in \eqref{eq:nine1exp}-\eqref{eq:nineexp}. 
coincides with $n$ steps of gradient descent with stepsize $\gamma/n$.

Next, we relate the iteration \eqref{eq:nine1}-\eqref{eq:nine} to the
gradient descent iteration on the empirical error.  These will be used in the error analysis  
and provide some useful comparison between these two methods. 
Hereafter, $\hat{T}$ is the operator $\big(\sum_{i=1}^n T_{x_i}\big)/n$.

The following lemma provides an alternative expression for the composition of linear operators. 
\begin{lemma}
\label{lem:decprod} Let $n\in\N^*$, let  $(T_i)_{1\leq i\leq n}$ be a family of linear operators from $\hh$ to 
$\hh$, and let $(w_i)_{1\leq i\leq n}\subset \hh^n$. Then
 \begin{equation}
 \label{eq:decprod} 
 \prod_{i=1}^n (I-T_i)=I-\sum_{j=1}^nT_{j}+ \sum_{k=2}^n \bigg(\prod_{i=k+1}^n \left(I- T_i\right)\bigg)T_k\sum_{j=1}^{k-1} T_j
 \end{equation}
 and
 \begin{equation} 
 \label{eq:decsum}
 \sum_{i=1}^n \Big(\prod_{k=i+1}^n (I-T_k)\Big)w_i = \sum_{i=1}^n w_i -\sum_{k=2}^n \Big(\prod_{i=k+1}^n(I-T_i)\Big) T_k \sum_{j=1}^{k-1} w_j
 \end{equation}
\end{lemma}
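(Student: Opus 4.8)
The plan is to prove both identities by induction on $n$, treating \eqref{eq:decprod} first since \eqref{eq:decsum} is structurally parallel and will follow by the same mechanism. For \eqref{eq:decprod}, the base case $n=1$ reads $\prod_{i=1}^1(I-T_i)=I-T_1$, matching the right-hand side because the final sum $\sum_{k=2}^1(\cdots)$ is empty and hence equals $0$ by the convention stated in the Notation paragraph (where $\sum_{i=j}^m=0$ when $j>m$). The inductive step is where the real work lies: assuming the formula holds for $n-1$ factors, I would write $\prod_{i=1}^n(I-T_i)=(I-T_n)\prod_{i=1}^{n-1}(I-T_i)$, substitute the inductive hypothesis for the product of the first $n-1$ factors, and then expand. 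The key is to carefully track how multiplying the already-expanded $(n-1)$-term expression on the left by $(I-T_n)$ reorganizes into the claimed $n$-term expression.

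Concretely, after substitution the product becomes
\[
(I-T_n)\Bigg(I-\sum_{j=1}^{n-1}T_j+\sum_{k=2}^{n-1}\Big(\prod_{i=k+1}^{n-1}(I-T_i)\Big)T_k\sum_{j=1}^{k-1}T_j\Bigg).
\]
Distributing $(I-T_n)$, the leading $I$ gives the inductive terms back, and the $-T_n$ applied to the $I$ produces $-T_n$, which is exactly the missing $j=n$ summand needed to upgrade $\sum_{j=1}^{n-1}T_j$ to $\sum_{j=1}^n T_j$. The crucial bookkeeping step is recognizing that $(I-T_n)$ multiplied against each product $\prod_{i=k+1}^{n-1}(I-T_i)$ turns it into $\prod_{i=k+1}^{n}(I-T_i)$, extending the upper index of the product from $n-1$ to $n$ for the terms $k=2,\dots,n-1$. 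Finally, the remaining contribution from $-T_n$ acting on the surviving $-\sum_{j=1}^{n-1}T_j$ term supplies precisely the new $k=n$ summand $\big(\prod_{i=n+1}^n(I-T_i)\big)T_n\sum_{j=1}^{n-1}T_j=T_n\sum_{j=1}^{n-1}T_j$, using the empty-product convention $\prod_{i=n+1}^n=I$. Collecting everything recovers the right-hand side of \eqref{eq:decprod}.

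For \eqref{eq:decsum} I would argue identically by induction, peeling off the $i=n$ term (for which the product $\prod_{k=n+1}^n(I-T_k)=I$ contributes a bare $w_n$) and applying $(I-T_n)$ to the inductive expression for $\sum_{i=1}^{n-1}\big(\prod_{k=i+1}^{n-1}(I-T_k)\big)w_i$; the same index-extension and cancellation pattern produces the claimed formula. Alternatively, \eqref{eq:decsum} can be viewed as a direct consequence of \eqref{eq:dftvf} applied to the recursion $X_{r+1}=(I-T_{r+1})X_r$ with forcing terms, so I might instead derive it by specializing that earlier product-formula rather than redoing the induction. The main obstacle throughout is purely notational: keeping the index ranges on the nested products and sums aligned under the empty-product and empty-sum conventions, so that the boundary summands ($k=n$ appearing, $k=1$ never appearing) land in exactly the right places. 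There is no analytic difficulty — the content is entirely the algebraic reindexing, so the care must go into making the telescoping of product-indices transparent.
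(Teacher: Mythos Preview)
Your proposal is correct and mirrors the paper's proof essentially line for line: both identities are established by induction on $n$, factoring out $(I-T_n)$ from the product (respectively, peeling off the $i=n$ summand), substituting the inductive hypothesis, and then observing that multiplication by $(I-T_n)$ extends each inner product $\prod_{i=k+1}^{n-1}(I-T_i)$ to $\prod_{i=k+1}^{n}(I-T_i)$ while the cross term $T_n\sum_{j=1}^{n-1}(\cdot)$ supplies the new $k=n$ summand. The only difference is that you mention an alternative route for \eqref{eq:decsum} via the recursion formula \eqref{eq:dftvf}, which the paper does not invoke; otherwise the arguments coincide.
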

\begin{proof} By induction. Equality \eqref{eq:decprod} is trivially satisfied for $n=1$. Suppose now that $n\geq2$, and that \eqref{eq:decprod} holds for $n-1$. 
Then
\begin{align*}
 \prod_{i=1}^{n} (I-T_i)&=(I-T_{n}) \prod_{i=1}^{n-1} (I-T_i)\\
 &=(I-T_{n})\bigg(I-\sum_{j=1}^{n-1}T_{j}+ \sum_{k=2}^{n-1} \bigg(\prod_{i=k+1}^{n-1} \left(I- T_i\right)\bigg)T_k\sum_{j=1}^{k-1} T_j\bigg)\\
 &=I-\sum_{j=1}^{n}T_{j} +T_{n}\sum_{j=1}^{n-1}T_{j}+\sum_{k=2}^{n-1}\bigg(\prod_{i=k+1}^{n} \left(I- T_i\right)\bigg)T_k\sum_{j=1}^{k-1} T_j\\
 &=I-\sum_{j=1}^{n}T_{j} +\sum_{k=2}^{n}\bigg(\prod_{i=k+1}^{n} \left(I- T_i\right)\bigg)T_k\sum_{j=1}^{k-1} T_j,
\end{align*}
and the validty of \eqref{eq:decprod} for every $n\in\N^*$ follows by induction.

Equality \eqref{eq:decsum} is trivially satisfied for $n=1$. Suppose now that $n\geq2$ and that \eqref{eq:decprod} holds for $n-1$.  Then
\begin{align*} \sum_{i=1}^{n} \Big(\prod_{k=i+1}^{n} (I-T_k)\Big)w_i &=  \sum_{i=1}^{n-1} \Big(\prod_{k=i+1}^{n} (I-T_k)\Big)w_i +w_{n}\\
&=(I-T_{n})\Big(\sum_{i=1}^{n-1}  w_i -\sum_{k=2}^{n-1} \Big(\prod_{i=k+1}^{n-1} (I-T_i)\Big) T_k\sum_{j=1}^{k-1} w_j \Big) +w_{n}\\
&=  \sum_{i=1}^{n} w_i -T_n\sum_{j=1}^{n-1}w_j-\sum_{k=2}^{n-1} \Big(\prod_{i=k+1}^{n}(I-T_i)\Big) T_k \sum_{j=1}^{k-1} w_j\\
&=  \sum_{i=1}^{n} w_i -\sum_{k=2}^{n} \Big(\prod_{i=k+1}^{n}(I-T_i)\Big) T_k \sum_{j=1}^{k-1} w_j,
\end{align*}
and the conclusion follows.
\end{proof}
The following lemma establishes an equivalent expression for the iterates \eqref{eq:nine1}-\eqref{eq:nine}.
\begin{lemma}\label{lem:rec} Let  $t\in\mathbb{N}$. Then,
\begin{equation}\label{eq:df}
\hat{w}_{t+1} = \prod_{i=1}^n \left(I-\frac{\gamma}{n} T_{x_i}\right) \hat{w}_{t}+\frac{ \gamma}{n}
\sum_{i=1}^n\prod_{k=i+1}^n \left(I-\frac{\gamma}{n} T_{x_k}\right) S^*_{x_{i}}y_i
\end{equation}
\end{lemma}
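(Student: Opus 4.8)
The plan is to read the inner cycle \eqref{eq:nineop} as a linear recursion in the epoch-internal index $i$, and then invoke the closed-form solution \eqref{eq:dftvf} established in the preliminaries. First I would put one incremental step into affine operator form: recalling that $T_{x_i}w=\scalh{w}{x_i}x_i$ and $S^*_{x_i}y_i=y_ix_i$, the update \eqref{eq:nineop} can be rearranged as
\begin{equation*}
\hat{u}^i_t=\Big(I-\frac{\gamma}{n}T_{x_i}\Big)\hat{u}^{i-1}_t+\frac{\gamma}{n}S^*_{x_i}y_i,\qquad i=1,\dots,n,
\end{equation*}
with initial value $\hat{u}^0_t=\hat{w}_t$ and terminal value $\hat{u}^n_t=\hat{w}_{t+1}$ by \eqref{eq:nine1}.

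Next I would match this to the abstract scheme \eqref{eq:dftv}. Setting $X_i=\hat{u}^i_t$ for $i=0,\dots,n$, the displayed recursion is exactly $X_{r+1}=A_rX_r+B_r$ with $A_r=I-\frac{\gamma}{n}T_{x_{r+1}}$ and $B_r=\frac{\gamma}{n}S^*_{x_{r+1}}y_{r+1}$. Applying the general identity \eqref{eq:dftvf} with $s=0$ and $r=n$ then yields
\begin{equation*}
\hat{w}_{t+1}=X_n=\Big(\prod_{i=0}^{n-1}A_i\Big)X_0+\sum_{k=0}^{n-1}\Big(\prod_{i=k+1}^{n-1}A_i\Big)B_k.
\end{equation*}

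Finally I would reindex to recover \eqref{eq:df}. Substituting $j=i+1$ in the first product turns $\prod_{i=0}^{n-1}A_i$ into $\prod_{j=1}^{n}\big(I-\frac{\gamma}{n}T_{x_j}\big)$ acting on $X_0=\hat{w}_t$, which is the leading term of \eqref{eq:df}; and setting $\ell=k+1$ in the sum, together with $j=i+1$ in the inner product, converts $\sum_{k=0}^{n-1}\big(\prod_{i=k+1}^{n-1}A_i\big)B_k$ into $\frac{\gamma}{n}\sum_{\ell=1}^{n}\big(\prod_{j=\ell+1}^{n}(I-\frac{\gamma}{n}T_{x_j})\big)S^*_{x_\ell}y_\ell$, the second term of \eqref{eq:df}. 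I expect the only delicate point to be the bookkeeping of these index shifts and the composition order of the operator products; here the conventions fixed in the Notation paragraph (in particular $\prod_{i=j}^m A_i=I$ when $j>m$, which handles the empty product arising for $\ell=n$) must be used consistently. There is no analytic content, and as an alternative one could bypass \eqref{eq:dftvf} entirely and establish \eqref{eq:df} by a short induction on the number of inner steps within one epoch.
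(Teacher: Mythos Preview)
Your proof is correct and follows essentially the same approach as the paper: both recognize the inner cycle \eqref{eq:nineop} as an instance of the abstract recursion \eqref{eq:dftv} with $A_r=I-\frac{\gamma}{n}T_{x_{r+1}}$, $B_r=\frac{\gamma}{n}S^*_{x_{r+1}}y_{r+1}$, $X_0=\hat w_t$, and then read off \eqref{eq:df} from \eqref{eq:dftvf}. Your write-up is in fact more explicit about the reindexing and the use of the empty-product convention than the paper's own proof, and your choice $r=n$ in \eqref{eq:dftvf} is the correct one (the paper's ``$r=n-1$'' appears to be a minor slip).
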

\begin{proof}
For every $i\in\{1,\dots,n\}$, the update of $\hat{v}^i_t$ in \eqref{eq:nine} can be equivalently written 
as in \eqref{eq:nineop}, and is of the form \eqref{eq:dftv}, with 
$A_r=I- (\gamma/n)T_{x_{r+1}}$, and $B_r=(\gamma/n)S^*_{x_{r+1}}y_{r+1}$, for every 
$r\in\{0,\ldots,n-1\}$, and $X_0=\hat{w}_t$. Equation \eqref{eq:df} follows by writing \eqref{eq:dftvf} for $r=n-1$.
\end{proof}
\begin{proposition}\label{prop:miter} 
For every $t\in\N$, the iteration \eqref{eq:nine1}-\eqref{eq:nine} can be written as
\begin{align}
\label{eq:miter}
&\hat{w}_{t+1}=(I-\gamma\hat{T})\hat{w}_t + {\gamma}\bigg(\frac{1}{n} \sum_{j=1}^n S^*_{x_j}y_j\bigg)+{\gamma^2}\left( \hat{A}\hat{w}_t- \hat{b}\right),
\end{align}
with
\begin{align}\label{eq:Atbt}
&\hat{A}=\frac{1}{n^2}\sum_{k=2}^n \prod_{i=k+1}^n \left(I-\frac{\gamma}{n} T_{x_i} \right)  T_{x_k}\sum_{j=1}^{k-1} T_{x_j},\hspace{0.2cm} \hat{b}=\frac{1}{n^2}\sum_{k=2}^n \prod_{i=k+1}^n \left(I-\frac{\gamma}{n} T_{x_i} \right)  T_{x_k}\sum_{j=1}^{k-1} S^*_{x_j}y_j.&
\end{align}
The iteration \eqref{eq:nine1exp}-\eqref{eq:nineexp} applied to $\EE$ can be expressed as 
\begin{equation}\label{eq:expincr}
{w}_{t+1}=\left(I-{\gamma}T\right){w}_t + {\gamma} S^*g_\rho+{\gamma^2} (Aw_t-b).
\end{equation}
with 
\begin{equation}\label{eq:expab}
A=\frac{1}{n^2}\sum_{k=2}^n \left[\prod_{i=k+1}^n \left(I-\frac{\gamma}{n} T\right)\right]T\sum_{j=1}^{k-1} T
,\hspace{0.2cm}  b=\frac{1}{n^2}\sum_{k=2}^n \left[\prod_{i=k+1}^n \left(I-\frac{\gamma}{n} T\right)\right]T\sum_{j=1}^{k-1}S^*g_\rho.
\end{equation}
\end{proposition}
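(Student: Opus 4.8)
The plan is to derive the single-epoch-in-closed-form expansion of the cyclic iteration by applying the product decomposition of Lemma~\ref{lem:decprod} to the update formula of Lemma~\ref{lem:rec}. I will treat the empirical iteration \eqref{eq:nine1}-\eqref{eq:nine} first, and then observe that the expected iteration \eqref{eq:nine1exp}-\eqref{eq:nineexp} is obtained by the formal substitution $T_{x_i}\mapsto T$ and $S^*_{x_i}y_i\mapsto S^*g_\rho$, so that \eqref{eq:expincr}-\eqref{eq:expab} follow from \eqref{eq:miter}-\eqref{eq:Atbt} verbatim.

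First I would start from the exact one-cycle recursion \eqref{eq:df} of Lemma~\ref{lem:rec}, which reads
\[
\hat{w}_{t+1} = \prod_{i=1}^n \left(I-\frac{\gamma}{n} T_{x_i}\right) \hat{w}_{t}+\frac{ \gamma}{n}\sum_{i=1}^n\prod_{k=i+1}^n \left(I-\frac{\gamma}{n} T_{x_k}\right) S^*_{x_{i}}y_i.
\]
The key move is to expand the two products, each of which is a product of operators of the form $I-(\gamma/n)T_{x_i}$. For the leading coefficient of $\hat{w}_t$, I apply \eqref{eq:decprod} with $T_i=(\gamma/n)T_{x_i}$, which yields
\[
\prod_{i=1}^n \Big(I-\frac{\gamma}{n}T_{x_i}\Big)=I-\frac{\gamma}{n}\sum_{j=1}^n T_{x_j}+\frac{\gamma^2}{n^2}\sum_{k=2}^n\Big(\prod_{i=k+1}^n\big(I-\tfrac{\gamma}{n}T_{x_i}\big)\Big)T_{x_k}\sum_{j=1}^{k-1}T_{x_j}.
\]
Recognizing $\frac1n\sum_j T_{x_j}=\hat T$ in the linear term and $\hat A$ in the quadratic term reproduces the operator $(I-\gamma\hat T)+\gamma^2\hat A$ acting on $\hat w_t$.

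Next I would treat the forcing term by the dual identity \eqref{eq:decsum}, again with $T_k=(\gamma/n)T_{x_k}$ and $w_i=(\gamma/n)S^*_{x_i}y_i$. This gives
\[
\frac{\gamma}{n}\sum_{i=1}^n\Big(\prod_{k=i+1}^n\big(I-\tfrac{\gamma}{n}T_{x_k}\big)\Big)S^*_{x_i}y_i=\frac{\gamma}{n}\sum_{i=1}^n S^*_{x_i}y_i-\frac{\gamma^2}{n^2}\sum_{k=2}^n\Big(\prod_{i=k+1}^n\big(I-\tfrac{\gamma}{n}T_{x_i}\big)\Big)T_{x_k}\sum_{j=1}^{k-1}S^*_{x_j}y_j,
\]
whose first summand is the desired $\gamma\big(\frac1n\sum_j S^*_{x_j}y_j\big)$ and whose second summand is exactly $-\gamma^2\hat b$. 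Collecting these two expansions into \eqref{eq:df} produces \eqref{eq:miter} with $\hat A,\hat b$ as in \eqref{eq:Atbt}, and the identical argument applied to \eqref{eq:nineexpop} gives \eqref{eq:expincr}-\eqref{eq:expab}.

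I do not expect a genuine obstacle here, since the statement is essentially an algebraic reorganization; the only point demanding care is bookkeeping of the index ranges and the telescoping conventions $\prod_{i=j}^m=I$ and $\sum_{i=j}^m=0$ for $j>m$ established in the Notation paragraph, so that the boundary terms $k=1$ (empty inner sums) are correctly absent and the sums start at $k=2$. Matching the scalar prefactors $\gamma/n$ against the $\gamma$, $\gamma^2$, $1/n^2$ normalizations in \eqref{eq:miter} and \eqref{eq:Atbt} is the one place where an arithmetic slip could creep in, so I would verify the powers of $\gamma/n$ explicitly when combining the two decompositions.
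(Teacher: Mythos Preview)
Your proposal is correct and follows exactly the paper's own proof: the paper simply states that \eqref{eq:miter}-\eqref{eq:Atbt} follow from Lemma~\ref{lem:decprod} and Lemma~\ref{lem:rec} applied with $T_i=(\gamma/n)T_{x_i}$, and that \eqref{eq:expincr}-\eqref{eq:expab} follow from the same lemmas with $T_i=(\gamma/n)T$. Your more detailed unpacking of the two identities \eqref{eq:decprod} and \eqref{eq:decsum}, the identification of $\hat T$, $\hat A$, $\hat b$, and the bookkeeping of the $\gamma/n$ factors is precisely what the paper leaves implicit.
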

\begin{proof}
Equations \eqref{eq:miter} and \eqref{eq:Atbt} follow from Lemma~\ref{lem:decprod} and Lemma~\ref{lem:rec} applied with $(\forall i\in\{1,\ldots,n\})$ $T_i=({\gamma}/{n})T_{x_i}$. 
Equations \eqref{eq:expincr} and \eqref{eq:expab} follow from Lemma~\ref{lem:decprod} and Lemma~\ref{lem:rec} applied with $(\forall i\in\{1,\ldots,n\})$ $T_i=({\gamma}/{n})T$. 
\end{proof}

Note that, although not explicitly specified, the operator $A$ and the element $b\in\hh$ in Proposition~\ref{prop:miter} depend on $n$.
Equation~\eqref{eq:miter} allows to compare the update resulting from one epoch of the iteration \eqref{eq:nine1}-\eqref{eq:nine} 
with the one of a standard gradient descent on the empirical error with stepsize $\gamma$, which is given by 
\begin{equation}\label{eq:empgd}
\hat{v}_{t+1}=\left(I-\frac{\gamma}{n} \sum_{j=1}^nT_{x_j}\right) \hat{v}_t+\frac{\gamma}{n} \sum_{j=1}^n S^*_{x_j}y_j
\end{equation}
for an arbitrary $\hat{v}_0\in\hh$. As can be seen  comparing \eqref{eq:miter} and  \eqref{eq:empgd},
In particular, the incremental gradient descent can be interpreted as a perturbed gradient descent step, with perturbation
\[
\hat{e}_t={\gamma^2} \left(\hat{A} \hat{w}_t -\hat{b}\right).
\]
\subsection{Proof of STEP 2}
The following recursive expression is key to get sample bounds estimates, and is at the basis of Lemma~\ref{prop:diffb}.

\begin{lemma}
 \label{prop:diff} In the setting of Section~\ref{sec:setting}, let $t\in\N$ and let $\hat{w}_t$ and $w_t$
be defined as in \eqref{eq:nine1}-\eqref{eq:nine} and $w_t$ be defined as \eqref{eq:nine1exp}-\eqref{eq:nineexp}, respectively.
Define $\hat{A}$ and $\hat{b}$  as in \eqref{eq:Atbt}, and  $A$ and $b$ as in \eqref{eq:expab}.
Then
\begin{equation}
\hat{w}_{t}-w_{t}=\Big(I-\gamma \hat{T}+\gamma^2 \hat{A}\Big)^t (\hat{w}_0-w_0)+\gamma\sum_{k=0}^{t-1}\left(I-\gamma\hat{T}+\gamma\hat{A}\right)^{t-k+1} {\zeta}_{k} 
\end{equation}
with 
\begin{equation}
{\zeta}_{k}=(T-\hat{T}){w}_k +\gamma(\hat{A}-A){w}_k+\bigg( \frac{1}{n}\sum_{i=1}^n\hat{S}^*_{x_i}y_i-S^*g_\rho\bigg)+\gamma(b- \hat{b}).
\end{equation}
\end{lemma}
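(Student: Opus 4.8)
The plan is to establish the stated formula for $\hat{w}_t - w_t$ by first writing both sequences in the common recursive form \eqref{eq:dftv} and then subtracting. From Proposition~\ref{prop:miter}, the empirical iterates satisfy $\hat{w}_{t+1}=(I-\gamma\hat{T}+\gamma^2\hat{A})\hat{w}_t + \gamma\big(\frac1n\sum_j S^*_{x_j}y_j\big) - \gamma^2\hat{b}$, while the expected iterates satisfy $w_{t+1}=(I-\gamma T+\gamma^2 A)w_t + \gamma S^*g_\rho - \gamma^2 b$. Subtracting these two identities term by term gives a recursion for the difference $d_t := \hat{w}_t - w_t$. The first step is therefore to perform this subtraction carefully and to collect all the terms that do \emph{not} multiply $d_t$ into a single error term.

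The key algebraic manipulation is to express $d_{t+1}$ as $(I-\gamma\hat{T}+\gamma^2\hat{A})d_t$ plus a remainder. The obstacle is that the homogeneous operator acting on $\hat{w}_t$ involves the \emph{empirical} operators $\hat{T},\hat{A}$, whereas the $w_{t+1}$ recursion uses $T,A$; so when I write $\hat{w}_{t+1}-w_{t+1}$ I must add and subtract $(I-\gamma\hat{T}+\gamma^2\hat{A})w_t$ to force the common operator in front of $d_t$. This produces the leftover cross terms $\gamma(T-\hat{T})w_t+\gamma^2(\hat{A}-A)w_t$, which together with the data terms $\gamma\big(\frac1n\sum_i S^*_{x_i}y_i - S^*g_\rho\big)+\gamma^2(b-\hat b)$ assemble (after factoring out a single $\gamma$) precisely into $\gamma\zeta_t$ as defined in the statement. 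Thus I obtain the one-step recursion
\begin{equation*}
d_{t+1}=\big(I-\gamma\hat{T}+\gamma^2\hat{A}\big)d_t+\gamma\zeta_t.
\end{equation*}

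Once this linear recursion is in hand, the closed form follows by applying the general unrolling identity \eqref{eq:dftvf} with $A_r = I-\gamma\hat{T}+\gamma^2\hat{A}$ (constant in $r$) and $B_r=\gamma\zeta_r$, starting from $X_0=d_0=\hat{w}_0-w_0$. Specializing \eqref{eq:dftvf} with $s=0$ turns the product of constant operators into a power, giving the claimed term $(I-\gamma\hat{T}+\gamma^2\hat{A})^t(\hat{w}_0-w_0)$ plus the convolution sum $\gamma\sum_{k=0}^{t-1}(I-\gamma\hat{T}+\gamma^2\hat{A})^{t-k-1}\zeta_k$. I expect the main obstacle to be purely bookkeeping: matching the operator power $\gamma^2\hat{A}$ against the $\gamma\hat{A}$ that appears inside the summand's exponent in the stated formula, and confirming the exponent offset ($t-k-1$ versus the printed $t-k+1$); I would verify these against the constant-operator case of \eqref{eq:dftvf} rather than trust the display, and treat any discrepancy as a typographical artifact. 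No probabilistic or analytic input is needed here—the lemma is a deterministic identity that holds for any fixed sample, so the entire argument is an exact computation with no estimation.
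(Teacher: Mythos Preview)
Your proposal is correct and follows essentially the same approach as the paper: rewrite both recursions via Proposition~\ref{prop:miter}, add and subtract $(I-\gamma\hat{T}+\gamma^2\hat{A})w_t$ to obtain the one-step recursion $d_{t+1}=(I-\gamma\hat{T}+\gamma^2\hat{A})d_t+\gamma\zeta_t$, and then unroll via \eqref{eq:dftvf}. Your remarks about the exponent $t-k-1$ versus the printed $t-k+1$ and about $\gamma^2\hat{A}$ versus $\gamma\hat{A}$ in the summand are well taken; these are indeed typographical slips in the displayed formula, and the version you derive is the correct one.
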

\begin{proof} We  have
\begin{equation}\label{eq:lem2}
\hat{w}_{t+1}=(I-\gamma \hat{T}+\gamma^2 \hat{A})\hat{w}_t+ \frac{\gamma}{n} \sum_{i=1}^n\hat{S}^*_{x_i}y_i-\gamma^2 \hat{b}\,.
\end{equation}
Adding and subtracting $(-\gamma {\hat{T}}+\gamma^2{\hat{A}})w_t$ it follows from \eqref{eq:expincr} that
\begin{align*}
\hat{w}_{t+1}-w_{t+1}&=(I-\gamma \hat{T}+\gamma^2 \hat{A})(\hat{w}_t-w_t)+\gamma(T-\hat{T})w_t+\gamma^2(\hat{A}-A)w_t\\
&\quad+\gamma\Big( \frac{1}{n}\sum_{i=1}^n\hat{S}^*_{x_i}y_i-S^*g_\rho\Big)+\gamma^2(b- \hat{b})
\end{align*}
Relying on equation \eqref{eq:dftv} we get \eqref{eq:diff1}. 
\end{proof}
\subsection{Proof of STEP 3}
Next we provide  a lemma to bound the norm of the operator appearing in \eqref{eq:diff1},
and acting on the random variable $\zeta_k$ in \eqref{eq:zetak}.
\begin{lemma}\label{ref:norma} In the setting of Section~\ref{sec:setting}, 
let $\gamma\in \, ]0,n\kappa^{-1}]$. Then
\begin{equation}\label{eq:oper}
\|I-\gamma\hat{T}+{\gamma^2} \hat{A}\| \leq 1\,. 
\end{equation}
\end{lemma}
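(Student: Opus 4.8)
The plan is to recognize that the operator in \eqref{eq:oper} is nothing but a telescoping product of elementary positive contractions, so that the bound follows at once from submultiplicativity of the operator norm. First I would invoke Lemma~\ref{lem:decprod}: applying the identity \eqref{eq:decprod} with $T_i=(\gamma/n)T_{x_i}$ and comparing with the definition \eqref{eq:Atbt} of $\hat A$ yields the exact identity
\[
I-\gamma\hat{T}+\gamma^2\hat{A}=\prod_{i=1}^n\left(I-\frac{\gamma}{n}T_{x_i}\right).
\]
(Equivalently, this is already implicit in the derivation of Proposition~\ref{prop:miter} from Lemma~\ref{lem:rec}.) Hence it suffices to bound the norm of the product on the right-hand side.

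Next I would estimate the norm of a single factor. Each $T_{x_i}$ is of the form $x_i\otimes x_i$, hence self-adjoint and positive semidefinite, with $\|T_{x_i}\|=\norh{x_i}^2$. By Assumption~\ref{ass:zero}, $\norh{x_i}^2\le\kappa$ $\rho_\hh$-almost surely, and since $\gamma\le n\kappa^{-1}$ we obtain $0\le(\gamma/n)\norh{x_i}^2\le(\gamma/n)\kappa\le 1$. Consequently the spectrum of $I-(\gamma/n)T_{x_i}$ is contained in $[0,1]$, so this operator is self-adjoint, positive semidefinite, and satisfies $\|I-(\gamma/n)T_{x_i}\|\le 1$ for every $i\in\{1,\dots,n\}$.

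Finally, combining the two observations and using submultiplicativity of the operator norm,
\[
\|I-\gamma\hat{T}+\gamma^2\hat{A}\|=\Big\|\prod_{i=1}^n\Big(I-\frac{\gamma}{n}T_{x_i}\Big)\Big\|\le\prod_{i=1}^n\Big\|I-\frac{\gamma}{n}T_{x_i}\Big\|\le 1,
\]
which is exactly \eqref{eq:oper}. There is no genuine obstacle here: the only point requiring care is the algebraic identification of $I-\gamma\hat{T}+\gamma^2\hat{A}$ with the product, after which the spectral bound on each factor and submultiplicativity close the argument. The hypothesis $\gamma\le n\kappa^{-1}$ enters precisely to force each factor to be a positive contraction; note that it is the per-step size $\gamma/n$ that is controlled, consistently with the reformulation of an epoch as $n$ gradient steps in Lemma~\ref{lem:incrvsgrad}.
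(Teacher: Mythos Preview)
Your argument is correct and matches the paper's proof essentially line for line: identify $I-\gamma\hat{T}+\gamma^2\hat{A}$ with the product $\prod_{i=1}^n(I-(\gamma/n)T_{x_i})$ via Lemma~\ref{lem:decprod}, then use $\|T_{x_i}\|\le\kappa$ and $\gamma/n\le\kappa^{-1}$ to bound each factor by $1$. There is nothing to add.
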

\begin{proof}
It follows from Lemma \ref{lem:decprod}, equation \eqref{eq:decprod} applied with $T_i=T_{x_i}$ and the definition of $\hat{A}_k$ in \eqref{eq:Atbt} that
\begin{equation}
I-\gamma\hat{T}+\gamma^2\hat{A} =\prod_{i=1}^n\left(I-\frac{\gamma}{n} T_{x_i}\right). 
\end{equation}
Since $\nor{T_{x_i}}\leq \kappa$ and by assumption $\gamma/n\leq \kappa^{-1}$,  $\nor{I-(\gamma/n)T_{x_i}}\leq 1$ and the statement follows.
\end{proof}

We next provide a first inequality for the sample error.

\begin{lemma}
 \label{prop:diffb} Let $t\in\N$ and let $\hat{w}_t$ and $w_t$
be defined as in \eqref{eq:nine1}-\eqref{eq:nine} and $w_t$ be defined as \eqref{eq:nine1exp}-\eqref{eq:nineexp}, respectively.
Define $\hat{A}$ and $\hat{b}$  as in \eqref{eq:Atbt}, and  $A$ and $b$ as in \eqref{eq:expab}.
Then
\begin{equation}\label{eq:intb}
\norh{\hat{w}_{t}-w_{t}}\leq  \gamma\big( \|T-\hat{T}\|+ \gamma\|\hat{A}-A\|\big)\sum_{k=0}^{t-1} \|w_k\|_\hh+\gamma t\Big(\big\|\frac{1}{n}\sum_{i=1}^n\hat{S}^*_{x_i}y_i-S^*g_\rho\big\| + \gamma \|b- \hat{b}\| \Big).
\end{equation}
\end{lemma}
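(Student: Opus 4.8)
The plan is to read off \eqref{eq:intb} directly from the exact recursion for $\hat{w}_t-w_t$ established in Lemma~\ref{prop:diff}, combined with the uniform operator bound of Lemma~\ref{ref:norma}. First I would impose the initialization $\hat{w}_0=w_0=0$ prescribed in STEP~3, so that the homogeneous term $(I-\gamma\hat{T}+\gamma^2\hat{A})^t(\hat{w}_0-w_0)$ of that recursion vanishes identically and only the driving sum $\gamma\sum_{k=0}^{t-1}(I-\gamma\hat{T}+\gamma^2\hat{A})^{t-k+1}\zeta_k$ survives.

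Next I would take the $\hh$-norm, apply the triangle inequality over the sum, and bound each operator factor by submultiplicativity, $\|(I-\gamma\hat{T}+\gamma^2\hat{A})^{t-k+1}\|\le\|I-\gamma\hat{T}+\gamma^2\hat{A}\|^{t-k+1}\le 1$, the last inequality being exactly Lemma~\ref{ref:norma} (applicable since $\gamma\le\kappa^{-1}\le n\kappa^{-1}$). This collapses the estimate to $\norh{\hat{w}_t-w_t}\le\gamma\sum_{k=0}^{t-1}\norh{\zeta_k}$, so the whole problem reduces to bounding the single vector $\zeta_k$. Expanding $\zeta_k$ by its definition and applying the triangle inequality once more, using $\norh{(T-\hat{T})w_k}\le\|T-\hat{T}\|\,\norh{w_k}$ and $\norh{\gamma(\hat{A}-A)w_k}\le\gamma\|\hat{A}-A\|\,\norh{w_k}$, gives
\[
\norh{\zeta_k}\le\big(\|T-\hat{T}\|+\gamma\|\hat{A}-A\|\big)\norh{w_k}+\Big\|\tfrac1n{\textstyle\sum_{i=1}^n}\hat{S}^*_{x_i}y_i-S^*g_\rho\Big\|+\gamma\|b-\hat{b}\|.
\]

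The final step is pure bookkeeping: the first two contributions carry the factor $\norh{w_k}$ and, once summed, produce $\big(\|T-\hat{T}\|+\gamma\|\hat{A}-A\|\big)\sum_{k=0}^{t-1}\norh{w_k}$, whereas the last two contributions are independent of $k$, so summing them over $k=0,\dots,t-1$ merely multiplies them by $t$; restoring the overall factor $\gamma$ reproduces \eqref{eq:intb} verbatim. I do not expect any genuine obstacle in this lemma, as it is a mechanical consequence of Lemmas~\ref{prop:diff} and~\ref{ref:norma}; the only point demanding a moment's care is the harmless notational slip between the $\gamma\hat{A}$ written inside the summand of the recursion and the operator $I-\gamma\hat{T}+\gamma^2\hat{A}=\prod_{i=1}^n(I-\tfrac{\gamma}{n}T_{x_i})$ whose norm is actually controlled. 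The real difficulty of the analysis lies downstream, in STEP~5, where $\|T-\hat{T}\|$, $\|\hat{A}-A\|$ and the two residual terms must be controlled in probability through the martingale concentration inequality of Theorem~\ref{thm:pinelis}.
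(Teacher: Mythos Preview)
Your proposal is correct and follows essentially the same route as the paper: invoke Lemma~\ref{prop:diff} with the common initialization $\hat{w}_0=w_0$ to kill the homogeneous term, use Lemma~\ref{ref:norma} to bound each operator power by $1$, and then apply the triangle inequality to $\zeta_k$. Your remark about the $\gamma\hat{A}$ versus $\gamma^2\hat{A}$ typo is also apt; the paper's own proof silently uses the correct $\gamma^2\hat{A}$ form.
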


\begin{proof}
By Lemma~\ref{prop:diff} we derive that
\begin{equation}\label{eq:diff1}
\hat{w}_{t}-w_{t}=\Big(I-\gamma \hat{T}+\gamma^2 \hat{A}\Big)^t (\hat{w}_0-w_0)+\gamma\sum_{k=0}^{t-1}\left(I-\gamma\hat{T}+\gamma\hat{A}\right)^{t-k+1} {\zeta}_{k} 
\end{equation}
with 
\begin{equation}\label{eq:zetak}
{\zeta}_{k}=(T-\hat{T}){w}_k +\gamma(\hat{A}-A){w}_k+\bigg( \frac{1}{n}\sum_{i=1}^n\hat{S}^*_{x_i}y_i-S^*g_\rho\bigg)+\gamma(b- \hat{b}).
\end{equation}
Since $w_0=\hat{w}_0$, we have
\begin{equation}
\norh{\hat{w}_{t}-w_{t}}\leq \gamma \sum_{k=0}^{t-1} \| I-\gamma \hat{T}+\gamma^2 \hat{A}\|^{k+1} \|\zeta_{k}\| 
\end{equation}
By Lemma~\ref{ref:norma}, for every $j\in\mathbb{N}$, $\| I-\gamma\hat{T}+\gamma^2 \hat{A}\|\leq1$, and therefore 
\begin{equation*}
\norh{\hat{w}_{t}-w_{t}}\leq \gamma (\|T-\hat{T}\|+ \gamma \|\hat{A}-A\|) \sum_{k=0}^{t-1}\|w_k\|+ \Big(\gamma \big\|\frac{1}{n}\sum_{i=1}^n\hat{S}^*_{x_i}y_i-S^*g_\rho\big\| +\gamma^2  \|b- \hat{b}\|\Big) t
\end{equation*}
\end{proof}

\subsection{Proof of STEP 4}
Here, we provide a bound for $\|w_t\|_\hh$. The proof technique is similar to that of known 
results in inverse problems \cite{Eng96}, 
although the bound obtained in \eqref{eq:boundnorm} 
is novel. 
 \begin{lemma}
 \label{lem:ftbound} 
 In the  setting of Section~\ref{sec:setting}, let Assumption~\ref{ass:zero} hold, let  $n\in\mathbb{N}^*$, 
 and let  $\gamma\in\left]0,n\kappa^{-1}\right[$. Let ${w}_0=0$, and 
let $t\in\mathbb{N}$. Then the following hold:
\begin{enumerate} 
\item
\label{lem:ftboundi}
Let Assumption~\ref{ass:uno} hold with  $r\in\left[0,1/2\right]$. Then
\begin{equation}
\label{eq:boundnorm}
\norh{w_{t}} \leq  \max\{\kappa^{r-1/2},\big(\gamma t)^{1/2-r}\}\nor{g}_\rho\,.
\end{equation}
\item 
\label{lem:ftboundii}
Suppose that $\mathcal{O}$ is nonempty. Then there exist $w^\dagger$ as in~\eqref{eq:wdag}
and $\beta\in\RPP$ such that
\begin{equation*}
\norh{w_{t+1}} \leq  \beta.
\end{equation*}
\item
\label{lem:ftboundiii}
Let Assumption~\ref{ass:uno} hold with $r\in\left]1/2,+\infty\right[$.
Then  $w^\dagger$ is well defined and 
\begin{equation}
\label{eq:boundnormh}
\norh{w_{t+1}} \leq  \kappa^{r-1/2}\nor{g}_\rho\,.
\end{equation}
\end{enumerate}
\end{lemma}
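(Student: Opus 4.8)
The plan is to exploit the closed-form expression for the expected iterate furnished by Lemma~\ref{lem:incrvsgrad} and to reduce everything to a single scalar spectral estimate. Setting $\eta=\gamma/n$ and $N=nt$, Lemma~\ref{lem:incrvsgrad} reads $w_t=\eta\sum_{k=0}^{N-1}(I-\eta T)^{k}S^*g_\rho$. Summing this finite geometric series through the functional calculus for the bounded self-adjoint operator $T$ — whose spectrum lies in $[0,\kappa]$, and for which $\eta\kappa\le\gamma\kappa/n<1$ under the hypothesis $\gamma\in\left]0,n\kappa^{-1}\right[$ — gives $w_t=\psi(T)S^*g_\rho$, where $\psi$ is the filter function $\psi(\sigma)=(1-(1-\eta\sigma)^{N})/\sigma$ for $\sigma>0$, with $\psi(0)=\eta N=\gamma t$.

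Next I would convert the source condition into a power of $T$. Since $L=SS^*$ and $T=S^*S$, the intertwining identity $S^*(SS^*)^r=(S^*S)^r S^*$ (trivial for polynomials and extended to fractional powers by the functional calculus) yields $S^*g_\rho=S^*L^r g=T^rS^*g$ under Assumption~\ref{ass:uno}, so $w_t=\psi(T)T^rS^*g$. Diagonalizing through the singular value decomposition of $S$ — singular values $\sqrt{\sigma_j}$, right singular vectors $(v_j)$ in $\hh$, and left singular vectors $(u_j)$ orthonormal in $L^2(\hh,\rho_\hh)$ — one finds $w_t=\sum_j\psi(\sigma_j)\sigma_j^{r+1/2}\scal{g}{u_j}v_j$. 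Hence, by Bessel's inequality, $\norh{w_t}^2\le\big(\sup_{\sigma\in[0,\kappa]}h(\sigma)\big)^2\nor{g}_\rho^2$, where $h(\sigma)=\psi(\sigma)\sigma^{r+1/2}=(1-(1-\eta\sigma)^{N})\sigma^{r-1/2}$ and $\eta N=\gamma t$.

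It then remains to bound $h$ on $[0,\kappa]$, which is where the three cases split. For $r\in\left]1/2,+\infty\right[$ (part~(iii)) the filter factor is at most $1$ while $\sigma^{r-1/2}$ is increasing, so $h(\sigma)\le\kappa^{r-1/2}$ at once. For $r\in[0,1/2]$ (part~(i)) I would combine the two elementary estimates $1-(1-\eta\sigma)^{N}\le1$ and $1-(1-\eta\sigma)^{N}\le N\eta\sigma=\gamma t\,\sigma$ (the latter from Bernoulli's inequality $(1-x)^N\ge1-Nx$ on $[0,1]$) to get $h(\sigma)\le\min\{\sigma^{r-1/2},\,\gamma t\,\sigma^{r+1/2}\}$; the first piece is decreasing and the second increasing, they cross at $\sigma=1/(\gamma t)$ with common value $(\gamma t)^{1/2-r}$, and a short case distinction according to whether $1/(\gamma t)\le\kappa$ or not gives $\sup_{[0,\kappa]}h\le\max\{\kappa^{r-1/2},(\gamma t)^{1/2-r}\}$. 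For part~(ii), nonemptiness of $\mathcal O$ forces $g_\rho=Sw^\dagger$, hence $S^*g_\rho=Tw^\dagger$ and $w_t=\psi(T)Tw^\dagger=(I-(I-\eta T)^{N})w^\dagger$; since the spectrum of $I-(I-\eta T)^{N}$ lies in $[0,1]$, one obtains $\norh{w_t}\le\norh{w^\dagger}$, so $\beta=\norh{w^\dagger}$ works.

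The main obstacle is the spectral estimate in the non-attainable range $r<1/2$: the competing factors $\sigma^{r-1/2}$ (which blows up as $\sigma\to0$) and $1-(1-\eta\sigma)^{N}$ (which vanishes there) must be balanced sharply enough to reproduce the stated maximum, and one must verify that both endpoints of $[0,\kappa]$ and the crossover point are correctly accounted for. The remaining ingredients — the geometric-series summation, the intertwining identity, and Bessel's inequality — are routine once the functional-calculus setup is in place.
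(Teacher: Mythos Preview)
Your proof is correct and follows essentially the same spectral-calculus route as the paper: both start from the closed form of Lemma~\ref{lem:incrvsgrad}, pass to the filter $h(\sigma)=\sigma^{r-1/2}\bigl(1-(1-\eta\sigma)^{nt}\bigr)$, and bound its supremum over $[0,\kappa]$. The only cosmetic differences are that the paper handles part~(i) by splitting $[0,\kappa]$ at a free point $\epsilon$ and optimizing (the classical inverse-problems trick) rather than via your Bernoulli/min-of-two-bounds argument, and it proves part~(ii) by showing $\norh{w_t-w^\dagger}\to0$ through dominated convergence rather than your direct estimate $\norh{w_t}\le\norh{w^\dagger}$; both variants yield the same conclusions.
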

\begin{proof}
Let $\epsilon\in\left]0,\kappa\right]$ and set $\eta=\gamma/n$.
By Lemma~\ref{lem:incrvsgrad} and by the spectral theorem 
\cite[equation (2.43)]{Eng96}, we derive
\begin{equation}
\label{eq:hk+1}
w_{t+1}=\sum_{j=0}^{nt-1} S^* \eta\prod_{i={j+1}}^{nt-1} \big(I-\eta\LK\big)g_\rho\,.
\end{equation}
\ref{lem:ftboundi}: By \eqref{eq:hk+1}
\begin{align}
\label{eq:gfgf}\|w_{t+1}&\|_\hh\leq \eta\bigg\|S^*\LK^r \sum_{j=0}^{nt-1}  \big(I-\eta \LK\big)^{nt-j+1}\bigg\|  \nor{g}_{\rho} \\
\nonumber&\leq \sup_{\sigma\in[0,\kappa]}  \sigma^{1/2+r}  \bigg\vert \sum_{j=0}^{nt-1} \eta\big(1-\eta \sigma\big)^{nt-j+1}\bigg\vert \nor{g}_{\rho}\\
\nonumber&=\max\bigg\{\sup_{\sigma\in[0,\epsilon]}  nt \eta\sigma^{1/2+r} , \sup_{\sigma\in[\epsilon,\kappa]} \sigma^{r-1/2}\Big(1-  \big(1-\eta \sigma\big)^{nt}\Big)\bigg\} \nor{g}_{\rho}\\
\nonumber&\leq \psi(\epsilon)\nor{g}_\rho
\end{align}
where $(\forall\epsilon\in\mathbb{R}_+)$ $\psi(\epsilon)=\max\bigg\{ \epsilon^{1/2+r}nt\eta ,  \epsilon^{r-1/2}\bigg\}$. Since $\epsilon$ is arbitrary, 
\begin{equation}
\label{eq:epsi}
\norh{w_{t+1}}\leq \inf_{\epsilon\in\left]0,\kappa\right]} \psi(\epsilon)\nor{g}_\rho\,.
\end{equation}
Now note that $\epsilon\in\left]0,+\infty\right] \mapsto nt\eta\epsilon^{1/2+r}$ is strictly increasing, and has limit equal to zero at zero. On the contrary,  $\epsilon\in\left]0,+\infty\right]\mapsto  \epsilon^{r-1/2}$ is strictly decreasing and $\lim_{\epsilon\to 0^+}\epsilon^{r-1/2}=+\infty$. 
Hence, there exists a unique point $\overline{\epsilon}\in \left]0,+\infty\right]$ such that 
\begin{equation}\label{eq:bareps}
nt\eta\overline{\epsilon}^{\,1/2+r}=\overline{\epsilon}^{\,r-1/2}\qquad\text{and}\qquad \psi(\epsilon)=
\begin{cases}\epsilon^{r-1/2}&\text{if } \epsilon\in\left]0,\overline{\epsilon}\right]\\ 
nt\eta \epsilon^{1/2+r} & \text{if } \epsilon\in\left[\overline{\epsilon},+\infty\right]\,,
\end{cases}
\end{equation}
therefore $\overline{\epsilon}$ is the unique minimizer of $\psi$. 
Solving  for $\overline{\epsilon}$ in \eqref{eq:bareps}, 
we get $\overline{\epsilon}=(nt\eta)^{-1}$.
We derive, again from \eqref{eq:bareps}, that
\[
\min_{\epsilon\in\left]0,\kappa\right]} \psi(\epsilon) =\max\{\kappa^{r-1/2},(\gamma t)^{1/2-r}\}. 
\]
Finally, \eqref{eq:epsi}  yields
\begin{equation*}
\norh{w_{t+1}}\leq \Big(\gamma t\Big)^{1/2-r}\nor{g}_\rho.
\end{equation*}

\ref{lem:ftboundii}: First note that, by Fermat's rule, $S^*g_\rho=Tw^\dag$. 
It follows form Lemma~\ref{lem:incrvsgrad} that
\begin{align*}
w_t-w^\dag&=\bigg(\sum_{j=0}^{nt-1}  \eta T \big(I-\eta T\big)^{nt-j+1}-I\bigg)w^\dagger\\
&= \big(I-\eta T\big)^{nt}w^\dagger.
\end{align*}
Let $(\sigma_m,h_m)_{m\in\NN}$ be an eigensystem of $T$. Since $w^\dagger\in N(T)^\perp$ (see \cite[Proposition 2.3]{Eng96}),
it follows that $w^\dag=\sum_{m\in\NN} \langle w^\dag,h_m\rangle h_m$. Therefore, 
\begin{equation} 
\|w_t-w^\dag\|^2_\hh=\sum_{m\in\NN}  \Big|(1-\eta\sigma_m)^{nt}  \langle w^\dag,h_m\rangle\Big|^2
\end{equation}
Since, for every $m\in\NN$, each summand is bounded by $|\langle w^\dag,h_m\rangle|^2$, 
and $\sum_{m\in\NN} |\langle w^\dag,h_m\rangle|^2$, the Dominated Convergence Theorem yields 
\begin{equation}
\label{eq:aez}
\lim_{t\to+\infty}\|w_t-w^\dag\|^2_\hh=\sum_{m\in\NN}\lim_{t\to+\infty}\Big|(1-\eta\sigma_m)^{nt}   \langle w^\dag,h_m\rangle\Big|^2=0.
\end{equation}
Hence, the sequence $(\|w_t\|_{\hh})_{t\in\NN}$
is bounded. 

\ref{lem:ftboundiii}: Arguing as in the proof of \ref{lem:ftboundi}, it follows from \eqref{eq:hk+1} and Assumption~\ref{ass:uno} that 
\begin{align*}
\|w_{t+1}&\|_\hh\leq \sup_{\sigma\in[0,\kappa]} \sigma^{r-1/2}\Big(1- \big(1-\eta \sigma\big)^{nt}\Big)\bigg\} \nor{g}_{\rho}\\
\nonumber&\leq \kappa^{r-1/2}\nor{g}_\rho
\end{align*}
\end{proof}
\subsection{Proof of STEP 5}
The main novel probabilistic estimates are given  in the following proposition. 
This is the more involved part of the proof, where many
tricks are needed in order to get a manageable expression.
The proof is based on writing the terms $\hat{A}-A$  and $\hat{b}-b$ 
as a sum of martingales and then apply Theorem~\ref{thm:pinelis} to derive 
concentration inequalities. 
We start with the following well-known lemma, 
which is a direct consequence of Theorem~\ref{thm:pinelis} 
(see also \cite{devito05}). 

\begin{lemma}
\label{prop:pin}  In the Setting of Section~\ref{sec:setting}, let Assumption~\ref{ass:zero} hold.
For every $\delta\in\left]0,1\right]$
\begin{equation}
\label{eq:pin1}
\P\left(\Big\|\frac{1}{n}\sum_{i=1}^n T_{x_i}-T\Big\|_{HS}\leq \frac{16\kappa}{3\sqrt{n}}\log\frac 2 \delta\right)\geq 1-\delta,
\end{equation}
and
\begin{equation}
\label{eq:pin2}
\P\left(\Big\|\frac{1}{n}\sum_{i=1}^n S^*_{x_i}y_i-S^*g_\rho\Big\|_{\hh}\leq \frac{16\sqrt{\kappa} M}{3\sqrt{n}}\log\frac 2 \delta\right)\geq 1-\delta.
\end{equation}
\end{lemma}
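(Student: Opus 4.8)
The plan is to realize each of the two bounds as a single application of the martingale concentration inequality of Theorem~\ref{thm:pinelis}, used once in the Hilbert space of Hilbert--Schmidt operators on $\hh$ and once in $\hh$ itself. In both cases the summands will be i.i.d.\ and centred, hence they automatically form a martingale difference sequence for the natural filtration $(\mathcal{F}_i)_i$, so the whole argument reduces to checking one centring identity and one uniform norm bound per estimate.

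For~\eqref{eq:pin1} I would set $\xi_i=T_{x_i}-T$ and regard it as an element of the Hilbert space of Hilbert--Schmidt operators on $\hh$, equipped with $\|\cdot\|_{HS}$. Recalling from the discussion of the covariance operator that $T=\E[T_x]=\E[x\otimes x]$, the variables $(\xi_i)_{1\le i\le n}$ are i.i.d.\ and centred, so $\E[\xi_i\mid\mathcal{F}_{i-1}]=0$ almost surely. Since $T_{x_i}=x_i\otimes x_i$ is rank one, $\|T_{x_i}\|_{HS}=\norh{x_i}^2\le\kappa$ by Assumption~\ref{ass:zero}, and by Jensen's inequality $\|T\|_{HS}\le\E\,\norh{x}^2\le\kappa$; hence $\|\xi_i\|_{HS}\le 2\kappa$. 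Applying Theorem~\ref{thm:pinelis} with the constant $2\kappa$ playing the role of $M$, and retaining only the index $j=n$ in the supremum, gives exactly the right-hand side $\tfrac{8\cdot 2\kappa}{3\sqrt n}\log\tfrac2\delta=\tfrac{16\kappa}{3\sqrt n}\log\tfrac2\delta$ of~\eqref{eq:pin1}.

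For~\eqref{eq:pin2} I would argue identically with $\xi_i=S^*_{x_i}y_i-S^*g_\rho\in\hh$. Here $S^*_{x_i}y_i=y_ix_i$, and the centring identity to verify is $\E[yx]=S^*g_\rho$: indeed $\E[yx]=\E[f_\rho(x)\,x]=S^*f_\rho$, and since $f_\rho-g_\rho$ is orthogonal to $\overline{{\cal L}_\rho}=\overline{\mathrm{ran}\,S}$ by definition of $g_\rho$, we get $S^*(f_\rho-g_\rho)=0$, whence $S^*f_\rho=S^*g_\rho$. Thus the $\xi_i$ are i.i.d.\ and centred, and by Assumption~\ref{ass:zero} we have $\norh{y_ix_i}=|y_i|\,\norh{x_i}\le M\sqrt{\kappa}$ and $\norh{S^*g_\rho}\le\E[|y|\,\norh{x}]\le M\sqrt{\kappa}$, so $\|\xi_i\|_\hh\le 2M\sqrt{\kappa}$. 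Theorem~\ref{thm:pinelis} with $2M\sqrt{\kappa}$ in place of $M$, evaluated at $j=n$, then yields~\eqref{eq:pin2}.

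I do not expect a genuine obstacle: the estimates are routine once the setup is in place. The only points requiring care are the two centring identities $T=\E[T_x]$ and $S^*g_\rho=\E[yx]$ (the latter relying on the orthogonality that defines $g_\rho$), and the observation that Theorem~\ref{thm:pinelis}, although stated for Hilbert-space-valued martingale differences, applies verbatim to~\eqref{eq:pin1} because the Hilbert--Schmidt operators themselves form a Hilbert space under $\|\cdot\|_{HS}$.
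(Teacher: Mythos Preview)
Your proposal is correct and follows essentially the same approach as the paper: apply Theorem~\ref{thm:pinelis} once to the i.i.d.\ centred sequence $T_{x_i}-T$ in the Hilbert space of Hilbert--Schmidt operators (bound $2\kappa$) and once to $S^*_{x_i}y_i-S^*g_\rho$ in $\hh$ (bound $2\sqrt{\kappa}M$). Your write-up is in fact more careful than the paper's, as you explicitly justify the centring identity $\E[yx]=S^*g_\rho$ via the orthogonality defining $g_\rho$, a point the paper glosses over by silently switching between $f_\rho$ and $g_\rho$.
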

\begin{proof}
Equation \ref{eq:pin1} follows from Theorem~\ref{thm:pinelis}, since $(T_{x_i}-T)_{1\leq i\leq n}$ is a family of i.i.d. random operators taking
values in the space of Hilbert-Schmidt operators satisfying $\|T\|_{HS}\leq \kappa$ and $\|T_{x_i}\|_{HS}\leq \kappa$ (see also \cite{devito05}).
Equation \ref{eq:pin2} follows from Theorem~\ref{thm:pinelis} applied to the i.i.d. random vectors  $(S^*_{x_i}y_i-S^*f_{\rho})_{1\leq i\leq n}$ in $\hh$
whose norms are bounded by $2\kappa M$.
\end{proof}

\begin{proposition}
\label{lem:ak-ak}  
In the Setting of Section~\ref{sec:setting}, let Assumption~\ref{ass:zero} hold, 
let $\gamma\in\left]0,n\kappa^{-1}\right[$, and let $\delta\in\,]0,1[$. 
Then
\begin{equation}\label{eq:ak-ak}
\P\left(\|\hat{A}-A\|_{HS} \leq \frac{32\kappa^2}{3\sqrt{n}} \log\frac4 \delta \right)\geq 1-\delta,
\end{equation}
and
\begin{equation}\label{eq:bk-bk}
\P\left(\norh{\hat{b}-b}\leq \frac{32\kappa M^2}{3\sqrt{n}} \log\frac4 \delta \right)\geq 1-\delta.
\end{equation}
\end{proposition}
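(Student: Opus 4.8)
The plan is to show that $\hat A$ and $\hat b$ concentrate around their means. First I would record that, by independence of $x_1,\dots,x_n$ and the identities $\E[T_{x_i}]=T$ and $\E[S^*_{x_i}y_i]=S^*g_\rho$, the deterministic quantities $A$ and $b$ of \eqref{eq:expab} are exactly $\E[\hat A]$ and $\E[\hat b]$; the disjointness of the index blocks $\{k+1,\dots,n\}$, $\{k\}$ and $\{1,\dots,k-1\}$ in each summand is what makes the factorisation of the expectation legitimate. Thus both estimates are concentration-around-the-mean statements and the tool is the Pinelis martingale inequality (Theorem~\ref{thm:pinelis}). The whole difficulty is that each summand of $\hat A$ is a product of three random blocks, one of which, $P_k:=\prod_{i=k+1}^n(I-\tfrac{\gamma}{n}T_{x_i})$, depends on the \emph{future} indices $x_{k+1},\dots,x_n$, so the naive forward filtration does not immediately produce centred increments.

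Next I would peel off one source of randomness at a time. With $\overline P_k:=\prod_{i=k+1}^n(I-\tfrac{\gamma}{n}T)$ the deterministic analogue of $P_k$,
\[
\hat A-A=\tfrac{1}{n^2}\sum_{k=2}^n\Big[(P_k-\overline P_k)T_{x_k}\sum_{j<k}T_{x_j}\Big]+\tfrac{1}{n^2}\sum_{k=2}^n\overline P_k(T_{x_k}-T)\sum_{j<k}T_{x_j}+\tfrac{1}{n^2}\sum_{k=2}^n\overline P_k T\Big(\sum_{j<k}T_{x_j}-(k-1)T\Big).
\]
For the second and third groups the freshly centred factor ($T_{x_k}-T$, resp.\ $\sum_{j<k}(T_{x_j}-T)$) sits next to $\mathcal F_{k-1}$- resp.\ $\mathcal F_j$-measurable operators, so each group is a genuine martingale-difference sum for the natural filtration $\mathcal F_k=\sigma(x_1,\dots,x_k)$ (for the third group after interchanging the $j$ and $k$ summations and absorbing the deterministic coefficient $\sum_{k>j}\overline P_k T$ into the increment). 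Using $\|P_k\|,\|\overline P_k\|\le1$ — valid because $\gamma\le n\kappa^{-1}$ makes each factor $I-\tfrac{\gamma}{n}T_{x_i}$ a contraction, as in Lemma~\ref{ref:norma} — together with $\|T_{x_k}\|\le\kappa$ and $\|\sum_{j<k}T_{x_j}\|\le n\kappa$, each increment has Hilbert--Schmidt norm $O(n\kappa^2)$; the $1/n^2$ prefactor and the $\sqrt N$ scaling in Pinelis then give an $O(\kappa^2/\sqrt n)$ bound for each of these two groups.

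The main obstacle is the first group, because of the future-dependence of $P_k$. Here I would telescope the product difference,
\[
P_k-\overline P_k=-\tfrac{\gamma}{n}\sum_{l=k+1}^n\Big(\prod_{i=l+1}^n(I-\tfrac{\gamma}{n}T)\Big)(T_{x_l}-T)\Big(\prod_{i=k+1}^{l-1}(I-\tfrac{\gamma}{n}T_{x_i})\Big),
\]
which isolates a single centred factor $T_{x_l}-T$ with a deterministic operator on its left and an operator depending only on indices below $l$ on its right. Substituting this into the first group and making $l$ the outer summation index, the factor $T_{x_l}-T$ then multiplies, on the right, the $\mathcal F_{l-1}$-measurable block $\sum_{k=2}^{l-1}\big(\prod_{i=k+1}^{l-1}(I-\tfrac{\gamma}{n}T_{x_i})\big)T_{x_k}\sum_{j<k}T_{x_j}$, which is a partial sum of the same shape as $n^2\hat A$ and hence of norm at most $\tfrac12(l-1)^2\kappa^2$. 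Thus the collection indexed by $l$ is a martingale-difference sequence for $(\mathcal F_l)$ whose increments have Hilbert--Schmidt norm $O(n^2\kappa^3)$, the extracted $\tfrac{\gamma}{n}$ producing the extra factor; fed to Pinelis against the $\gamma/n^3$ prefactor this contributes a term of order $\gamma\kappa^3/\sqrt n$, which is $O(\kappa^2/\sqrt n)$ once $\gamma\kappa\le1$. Collecting the three martingale estimates and applying a union bound over them yields a bound of the form \eqref{eq:ak-ak}, with the $\log(4/\delta)$ coming from the split of the confidence level; the estimate \eqref{eq:bk-bk} on $\hat b-b$ follows from the identical decomposition, with the blocks $\sum_{j<k}S^*_{x_j}y_j$ (of norm $\le n\sqrt\kappa M$) and the centred increments $S^*_{x_j}y_j-S^*g_\rho$ (of norm $\le2\sqrt\kappa M$) replacing the corresponding $T$-blocks.
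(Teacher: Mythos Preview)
Your three-term decomposition is correct and the martingale structures you identify for each block are valid, so the strategy works in principle. It is, however, a genuinely different route from the paper's, and as written it falls short of the stated proposition in two concrete ways.

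The paper uses only \emph{two} pieces. Writing $\hat B_j=P_jT_{x_j}$ and $B_j=\overline P_jT$, it decomposes
\[
\hat A-A=\frac{1}{n}\sum_{j=2}^n\hat B_j\Big(\frac{1}{n}\sum_{l<j}(T_{x_l}-T)\Big)+\frac{1}{n}Q_nT,\qquad Q_n=\sum_{j=2}^n(\hat B_j-B_j).
\]
The first piece is handled directly by the uniform-in-$j$ form of Pinelis. For $Q_n$ the paper derives the one-step recursion $Q_{s+1}=(I-\tfrac{\gamma}{n}T_{x_{s+1}})Q_s+(T_{x_{s+1}}-T)\big(I-\tfrac{\gamma}{n}\sum_{j=2}^s\Pi_j^sT\big)$, which unrolls to a martingale-difference sum $Q_n=\sum_l\Theta_l$ for the \emph{reverse} filtration. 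The crucial gain is that the rightmost factor in $\Theta_l$ has spectrum $1-\sum_{j=2}^{l-1}x(1-x)^{l-j-1}=(1-x)^{l-2}\in[0,1]$, hence operator norm at most $1$ \emph{regardless of how large $\gamma$ is} (only $\gamma<n\kappa^{-1}$ is used). This gives $\|\Theta_l\|_{HS}\le2\kappa$, and two applications of Pinelis produce exactly $\tfrac{32\kappa^2}{3\sqrt n}\log\tfrac{4}{\delta}$.

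Your Group~1, by contrast, carries an extra factor $\gamma\kappa$: the block $R_l=\sum_{k<l}\big(\prod_{i=k+1}^{l-1}(I-\tfrac{\gamma}{n}T_{x_i})\big)T_{x_k}\sum_{j<k}T_{x_j}$ has norm $O(n^2\kappa^2)$, so your increment is $O(n^2\kappa^3)$ and the $\gamma/n^3$ prefactor leaves $\gamma\kappa^3/\sqrt n$. You absorb this by assuming $\gamma\kappa\le1$, but the proposition only hypothesises $\gamma\in\left]0,n\kappa^{-1}\right[$; under that hypothesis alone your Group~1 bound can be of order $n\kappa^2/\sqrt n$ and the argument does not close. (Your version does suffice for the downstream theorems, where $\gamma\le\kappa^{-1}$, so the gap is only with respect to the proposition as stated.) Separately, three martingale applications force a $\delta/3$ split, hence $\log(6/\delta)$ and a leading constant larger than $32$; you obtain the right order but not the precise constants in \eqref{eq:ak-ak}. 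The paper's recursive rewriting of $Q_n$ is exactly the device that removes both issues. The same remarks apply verbatim to $\hat b-b$.
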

\begin{proof}
We first show a useful decomposition. Recall that
\[
\hat{A}=\frac{1}{n}\sum_{j=2}^n \frac{1}{n}\prod_{i=j+1}^n \left(I-{\gamma} T_{x_i} \right)  T_{x_j}\sum_{l=1}^{j-1} T_{x_l}\qquad
{A}=\frac{1}{n}\sum_{j=2}^n \frac{1}{n}\prod_{i=j+1}^n \left(I-{\gamma} T \right)  T \sum_{l=1}^{j-1} T\,.
\] 
For every $j\in\{2,\ldots,n\}$, set
\[
\hat{B}_{j}=\left[\prod_{i=j+1}^n \left(I-{\gamma} T_{x_i} \right)  \right]T_{x_j}, \qquad {B}_{j}=\left[\prod_{i=j+1}^n \left(I-{\gamma} T \right) \right] T
\]
we have
\begin{align}
\nonumber\hat{A}-A&=\frac{1}{n}\sum_{j=2}^n\hat{B}_{j}\left(\frac{1}{n}\sum_{l=1}^{j-1}T_{x_l}\right)-\frac{1}{n}\sum_{j=2}^nB_{j}\frac{j-1}{n} T\\
\label{eq:Akdec}&=\frac{1}{n}\left[\sum_{j=2}^n \hat B_{j}\left(\frac{1}{n}\sum_{l=1}^{j-1}\left(T_{x_l}- T\right)\right)+Q_n T\right]\,,
\end{align}
with
\begin{equation}\label{eq:qm}
Q_n=\sum_{j=2}^n(\hat B_{j}-B_{j})\,.
\end{equation}
We next bound each term appearing in \eqref{eq:Akdec}.
By Lemma \ref{prop:pin}, with probability greater than $1-\delta$,
\begin{equation}\label{eq:akak1}
\sup_{j\in\{1,\ldots,n\}}\Big\|\frac{1}{n}\sum_{l=1}^{j-1}(T_{x_l}- T)\Big\|_{HS}\leq \frac{16\kappa}{3\sqrt{n}}\log\frac2\delta.
\end{equation}
On the other hand
\begin{equation}\label{eq:akak2}
\|\hat B_{j}\|\leq \prod_{i=j+1}^n\Big\|I-\frac{\gamma}{n} T_{x_i} \Big\|\|T_{x_j}\|\leq \kappa.
\end{equation}
Note that $\sum_{j=2}^n\hat B_{j}\left(1/n\sum_{l=1}^{j-1}(T_{x_l}- T)\right)$ is Hilbert-Schmidt, for $T_{x_l}$ and $T$ are  Hilbert-Schmidt operators, with $\|T_{x_l}\|_{HS}\leq \kappa$ and $\|T\|_{HS}\leq \kappa$, and the family of  Hilbert-Schmidt operators is an ideal with respect to the composition in $L(\mathcal{H})$.
Therefore, by \eqref{eq:akak1} and \eqref{eq:akak2},
\begin{equation}\label{eq:akakfirstterm}
\frac{1}{n}\Big\|\sum_{j=2}^n\hat B_{j}\Big(\frac{1}{n}\sum_{l=1}^{j-1}(T_{x_l}- T)\Big)\Big\|_{HS}\leq 
\frac{16\kappa^2}{3\sqrt{n}}\log\frac2\delta
\end{equation}
holds with probability greater than $1-\delta$,  for any $\delta\in\,]0,1[$.
Next we write the quantity $Q_n$ appearing in the second term in  \eqref{eq:Akdec} as the sum of a martingale.  
For short, we set $\eta=\frac{\gamma}{n}$ 
and  for all $j\in\{2,\ldots, n\}$ we denote
\[
\hat{\Pi}^n_{j}=\prod_{i=j+1}^{n} \left(I-\eta T_{x_i}\right) ,\qquad {\Pi}^n_{j}=\prod_{i=j+1}^{n} \left(I-\eta T\right)\,,
\]
so that from the definition of ${Q}_n$ in \eqref{eq:qm},
\[
{Q}_n=\sum_{j=2}^{n}(\hat\Pi^n_{j}T_{x_j}-\Pi^n_{j} T).
\]
We can derive a recursive update that determine a different expression for the quantity $Q_n$ as follows.
Let $s\in\{1,\ldots,n-1\}$.
\begin{align*}
Q_{s+1}&=\sum_{j=2}^{s+1}(\hat\Pi_{j}^{s+1}T_{x_j}-\Pi_{j}^{s+1}T)\\
&=(T_{x_{s+1}}-T)+\sum_{j=2}^{s}(\hat\Pi_{j}^{s+1}T_{x_j}-\Pi_{j}^{s+1}T)\\
&=(T_{x_{s+1}}-T)+\sum_{j=2}^{s}( (I-\eta T_{x_{s+1}})\hat\Pi_{j}^{s}T_{x_j}-(I-\eta T)\Pi_{j}^{s}T)\\
&=(T_{x_{s+1}}-T)+(I-\eta T_{x_{s+1}})\sum_{j=2}^{s}(\hat\Pi_{j}^{s}T_{x_j}-\Pi_{j}^{s}T)+\eta(T-T_{x_{s+1}})\sum_{j=2}^s\Pi_{j}^{s}T\\
&=(I-\eta T_{x_{s+1}})Q_s+(T_{x_{s+1}}-T)\Big(I-\eta\sum_{j=2}^n\Pi_{j}^{s}T\Big)\,.
\end{align*}
Applying equation \eqref{eq:dftv}, since $Q_1=0$, we get
\begin{align}
\label{eq:martin}&Q_n=\sum_{l=1}^{n}\Theta_{l} 
\end{align}
where, $\Theta_{1}=0$ and, for every $l\in\{2,\ldots,n\}$,
\begin{align}
\nonumber\Theta_{l}&=\prod_{i=l+1}^{n} \left(I-\frac{\gamma}{n}T_{x_i}\right) (T_{x_l}-T)\left(I-\frac{\gamma}{n}\sum_{j=2}^{l-1}\prod_{i=j+1}^{l-1}\left(I-\frac{\gamma}{n}T\right)T \right)\,.
\end{align}
For every $l=1,\ldots,n$
\[
\E[\Theta_{l}]=0,
\]
being $T_{x_2},\ldots, T_{x_n}$ independent and $\E[(T_{x_l}-T)]=0$.  Moreover the conditional expectation
\[
\E[\Theta_{l}\,|\, \Theta_{l+1},\ldots,\Theta_{n}]=0,
\]
since $T_{x_l}$ is independent from $T_{x_{l+1}},\ldots, T_{x_n}$. Therefore the sequence $(\Theta_{l})_{1\leq l\leq n}$  is a martingale difference sequence.
The operator $\Theta_{l}$ is Hilbert-Schmidt, since it is the composition of a Hilbert-Schmidt operator with a continuous one. Moreover, $\|\Theta_{1}\|=0$. Next,  since  the operator $T$ is compact and self-adjoint and $0\leq\gamma/n\leq 1/\|T\|$,  from the spectral mapping theorem, for every $l\in\{2,\ldots,n\}$
\[
\Big\|I-\frac{\gamma}{n}\sum_{j=2}^{l-1}\prod_{i=j+1}^{l-1}\left(I-\frac{\gamma}{n}T\right)T \Big\|=\sup_{x\in[0,1]} \Big\vert1-\sum_{j=2}^{l-1}x \left(1-x\right)^{l-j-1}\Big\vert \,.
\]
We have
\begin{align*}
0&\leq\sum_{j=2}^{l-1}x\left(1-x\right)^{l-j-1}=\sum_{j=2}^{l-1}\left(\left(1-x\right)^{l-j-1}-(1-x)^{l-j}\right)=1-(1-x)^{l-2}\leq 1
\end{align*}
Therefore
\[
\Big\|I-\frac{\gamma}{n}\sum_{j=2}^{l-1}\prod_{i=j+1}^{l-1}\left(I-\frac{\gamma}{n}T\right)T \Big\|\leq 1 \,.
\]
Using the last inequality, we derive
\begin{align*}
 \|\Theta_{l}\|_{HS}&\leq \Big\|\prod_{i=l+1}^n \left(I-\frac{\gamma}{n} T_{x_i} \right)\Big\|\|T_{x_l}-T\|_{HS}\Big\|I-\frac{\gamma}{n}
  \sum_{j=2}^{l-1}\prod_{i=j+1}^{l-1}\left(I-\frac{\gamma}{n}T\right)T \Big\|\\
 &\leq\|T_{x_l}-T\|_{HS} \\
 &\leq 2\kappa\,.
\end{align*}
Then, Theorem \ref{thm:pinelis} applied to $(\Theta_{l})_{i\leq l\leq n}$, yields
\begin{equation}\label{eq:bbk}
\Big\|\frac{1}{n}\sum_{l=1}^n\Theta_{l}\Big\|_{HS}\leq \frac{16\kappa}{3\sqrt{n}}\log\frac2 \delta\,
\end{equation}
 with probability greater than $1-\delta$.
 Therefore, with probability greater than $1-\delta$
\begin{equation}\label{eq:pinmar}
\frac{1}{n}\Big\| Q_nT\Big\|_{HS} \leq \frac{16\kappa^2}{3\sqrt{n}}\log\frac2\delta.
\end{equation}
The statement then follows recalling the decomposition in \eqref{eq:Akdec}, and summing \eqref{eq:pinmar} with \eqref{eq:akakfirstterm}.
From the definition of $\hat{b}$ and $b$  in equations \eqref{eq:miter} and \eqref{eq:expab} respectively, we have
\begin{equation}\label{eq:bk-bkp}
\hat{b}-b=\frac{1}{n^2}\sum_{j=2}^n\hat B_{j}\sum_{l=1}^{j-1}S^*_{x_l}y_l-\frac{1}{n^2}\sum_{j=2}^n B_{j}\sum_{l=1}^{j-1}S^*g_\rho\,,
\end{equation}
and equation \eqref{eq:bk-bk} follows reasoning as in the previous part of the proof.
\end{proof}

\subsection{Sample error}
The proof of the bound on the sample error easily follows from the 
above results. 
\begin{theorem}[Sample error]\label{thm:samp_err}   Let Assumption~\ref{ass:zero} hold. Let  $n\in\mathbb{N}^*$, 
suppose that $\gamma\in\left]0,n\kappa^{-1}\right]$,  and let $\hat{w}_0=w_0=0$. 
Let $\delta\in\left]0,1\right[$, and, for every  $t\in\mathbb{N}^*$, let $\hat{w}_t$ and $w_t$ be defined as in 
 \eqref{eq:nine1}-\eqref{eq:nine} and \eqref{eq:nine1exp}-\eqref{eq:nineexp}, respectively.
Then the following hold:
\begin{enumerate} 
\item
\label{thm:samp_erri}
Let Assumption~\ref{ass:uno} hold, for some $r\in\left[0,1/2\right]$, and let $t\in\NN^*$.
Then, with probability greater than $1-\delta$  
\begin{align}
\label{eq:samplei}
\|\hat{w}_{t}&-w_{t}\|_{\hh} \leq\frac{\log(16/\delta)}{3\sqrt{n}} \left[ (16\sqrt{\kappa}{M}+32\kappa M^2\gamma)\gamma t \right.\\
\nonumber &\left.\quad+ (16\kappa+ 32\kappa^2\gamma)\|g\|_\rho \max\left\{\kappa^{r-1/2} \gamma t,\left(\frac{1-2r}{3-2r}+\frac{2}{3-2r}t^{3/2-r}\right)\gamma^{3/2-r}\right\}\right].
\end{align}
\item
\label{thm:samp_errii}
Let $t\in\NN^*$, and let Assumption~\ref{ass:uno} hold for some $r\in\left[1/2,+\infty\right]$.
Then, with probability greater than $1-\delta$  
\begin{align}
\label{eq:sampleii}
\|\hat{w}_{t}&-w_{t}\|_{\hh} \leq\frac{\log(16/\delta)}{3\sqrt{n}} \left[ 16\sqrt{\kappa}{M}+32\kappa M^2\gamma
+ (16\kappa+ 32\kappa^2\gamma)\|g\|_\rho \kappa^{r-1/2}\right] \gamma t .
\end{align}
\end{enumerate}
\end{theorem}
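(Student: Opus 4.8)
The plan is to assemble the bound from the three ingredients already established: the deterministic recursion of Lemma~\ref{prop:diffb}, the norm control of Lemma~\ref{lem:ftbound}, and the concentration estimates of Lemma~\ref{prop:pin} and Proposition~\ref{lem:ak-ak}. Since $\hat{w}_0=w_0=0$, Lemma~\ref{prop:diffb} already supplies
\[
\norh{\hat{w}_{t}-w_{t}}\leq  \gamma\big( \|T-\hat{T}\|+ \gamma\|\hat{A}-A\|\big)\sum_{k=0}^{t-1} \|w_k\|_\hh+\gamma t\Big(\big\|\tfrac{1}{n}\textstyle\sum_{i=1}^n\hat{S}^*_{x_i}y_i-S^*g_\rho\big\| + \gamma \|b- \hat{b}\| \Big),
\]
so it remains only to bound the four random coefficients and the deterministic sum $\sum_{k=0}^{t-1}\|w_k\|_\hh$.

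First I would intersect the four events of Lemma~\ref{prop:pin} and Proposition~\ref{lem:ak-ak}, each invoked at confidence level $\delta/4$, so that by a union bound all four inequalities hold simultaneously with probability at least $1-\delta$. Passing from the Hilbert--Schmidt to the operator norm via $\|\cdot\|\leq\|\cdot\|_{HS}$ for $T-\hat T$ and $\hat A-A$, and using $\log(8/\delta)\leq\log(16/\delta)$ to unify the logarithmic factors, this yields on the good event
\[
\gamma\big(\|T-\hat T\|+\gamma\|\hat A-A\|\big)\leq\frac{\gamma\log(16/\delta)}{3\sqrt n}\big(16\kappa+32\kappa^2\gamma\big),
\]
\[
\big\|\tfrac1n\textstyle\sum_{i=1}^n\hat S^*_{x_i}y_i-S^*g_\rho\big\|+\gamma\|b-\hat b\|\leq\frac{\log(16/\delta)}{3\sqrt n}\big(16\sqrt\kappa M+32\kappa M^2\gamma\big).
\]

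Next I would insert the norm bound of Lemma~\ref{lem:ftbound} into $\sum_{k=0}^{t-1}\|w_k\|_\hh$. In the attainable case $r\in\left[1/2,+\infty\right]$, part~\ref{lem:ftboundiii} gives the uniform estimate $\|w_k\|_\hh\leq\kappa^{r-1/2}\|g\|_\rho$, whence $\sum_{k=0}^{t-1}\|w_k\|_\hh\leq t\,\kappa^{r-1/2}\|g\|_\rho$; substituting this together with the coefficient bounds above and factoring out $\gamma t$ produces exactly \eqref{eq:sampleii}. In the non-attainable range $r\in\left[0,1/2\right]$, part~\ref{lem:ftboundi} only gives $\|w_k\|_\hh\leq\max\{\kappa^{r-1/2},(\gamma k)^{1/2-r}\}\|g\|_\rho$, and the sum of maxima is estimated by separating the two branches: the constant branch contributes the $\kappa^{r-1/2}\gamma t$ term, while the increasing branch $(\gamma k)^{1/2-r}$ is controlled by comparison with $\int_0^t(\gamma x)^{1/2-r}\,dx=\tfrac{2}{3-2r}\gamma^{1/2-r}t^{3/2-r}$, the additional $\tfrac{1-2r}{3-2r}$ constant arising from the discrete-to-integral correction. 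Collecting these against the coefficient bound gives the $\max\{\cdots\}$ expression in \eqref{eq:samplei}.

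The main obstacle is the non-attainable case $r<1/2$: the concentration step and the substitution into Lemma~\ref{prop:diffb} are essentially bookkeeping once STEP~5 and STEP~4 are available, but estimating $\gamma\sum_{k=0}^{t-1}\|w_k\|_\hh$ requires carefully tracking the crossover between the constant and the growing branch of the maximum and the integral comparison, keeping the exact constants so as to land on the stated form. The only other point demanding care is the union-bound accounting, namely invoking all four concentration inequalities at level $\delta/4$ and uniformising the logarithmic factor to $\log(16/\delta)$.
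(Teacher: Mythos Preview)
Your proposal is correct and follows essentially the same route as the paper's own proof: start from the deterministic inequality of Lemma~\ref{prop:diffb}, invoke the four concentration bounds of Lemma~\ref{prop:pin} and Proposition~\ref{lem:ak-ak} at confidence level $\delta/4$ (uniformising to $\log(16/\delta)$), and then insert the bound on $\|w_k\|_\hh$ from Lemma~\ref{lem:ftbound}. The paper's proof is in fact terser than yours---it simply records the intermediate inequality after the probabilistic step and then says both statements ``directly follow from the bound on $\|w_k\|_\hh$''---so your sketch of the integral comparison for $\sum_{k=0}^{t-1}(\gamma k)^{1/2-r}$ in the non-attainable case actually supplies more detail than the paper does.
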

\begin{proof}
Substituting the bounds obtained in Lemma~\ref{prop:pin} and Proposition~\ref{lem:ak-ak} with $\delta/4$ 
into \eqref{eq:intb}, and applying Lemma~\ref{prop:diffb}, we obtain
 yield that with probability bigger than $1-\delta$
\begin{align*}
\norh{\hat{w}_{t}-w_{t}}\leq \frac{\log(16/\delta)}{3\sqrt{n}}\left(\gamma \big(16\kappa+ 32\kappa^2 \gamma \big)\sum_{k=0}^{t-1} \|w_k\|_\hh+\gamma t\Big(16\sqrt{\kappa}M +32\kappa M^2\gamma \Big)\right).
\end{align*}
Statements \ref{thm:samp_erri} and \ref{thm:samp_errii} directly follow from the bound on $\|w_k\|_\hh$ obtained in Lemma~\ref{lem:ftbound}.
\end{proof}

\subsection{Proof of STEP 6 -- approximation error}
The proof of this result is similar to that of the approximation error bounds obtained in \cite{yao,yiming}, 
and uses spectral techniques, which are classical  in linear inverse problems \cite{Eng96}.

\begin{theorem}[Approximation error]
\label{thm:apprerr} In the setting of Section~\ref{sec:setting}, let Assumption~\ref{ass:zero} hold,
let $n\in\mathbb{N}$, let $w_0\in\hh$, let $\gamma\in\left]0, n\kappa^{-1}\right[$ and let $(w_t)_{t\in\mathbb{N}}$ be 
defined as in \eqref{eq:nine1exp}-\eqref{eq:nineexp}.  
Then the following hold:
\begin{enumerate}
\item
\label{thm:apprerri} The approximation error $\EE(w_t)-\inf_\hh\EE\to 0$.
\item
\label{thm:apprerrii} 
Suppose that $\mathcal{O}$ is nonempty. Then $w^\dagger$ in \eqref{eq:wdag} exists and $\|w_t-w^\dag\|_\hh\to0$.
\item
\label{thm:apprerriii}
Let Assumption~\ref{ass:uno} hold for some $r\in\left]0,+\infty\right[$.
Then 
\[
\EE(w_t)-\inf_\hh \EE\leq  \bigg(\dfrac{r}{\gamma t}\bigg)^{2r} \|g\|_\rho^2\,.
\]
\item
\label{thm:apprerriv}
Let Assumption~\ref{ass:uno} hold for some $r\in\left]1/2,+\infty\right[$.
Then $w^\dag$ in \eqref{eq:wdag} is well-defined and 
\[
\|w_t-w^\dag\|_\hh \leq \bigg(\dfrac{r-1/2}{\gamma t}\bigg)^{r-1/2} \|g\|_\rho\,.
\]
\end{enumerate}
\end{theorem}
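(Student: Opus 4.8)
The plan is to reduce everything to the Landweber (gradient descent) analysis made available by Lemma~\ref{lem:incrvsgrad}, which identifies $w_t$ with the $nt$-th iterate $v_{nt}$ of the recursion $v_{k+1}=v_k-\eta(Tv_k-S^*g_\rho)$ with $\eta=\gamma/n$. This is precisely gradient descent for the convex quadratic $\Phi(v)=\tfrac12\scalh{Tv}{v}-\scalh{S^*g_\rho}{v}$, whose gradient $Tv-S^*g_\rho$ is $\nor{T}\le\kappa$-Lipschitz and which satisfies $2\Phi(v)+\nor{g_\rho}_\rho^2=\EE(v)-\inf_\hh\EE$ by \eqref{eq:exp}. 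For \ref{thm:apprerri} I would note that $\eta\in\,]0,\kappa^{-1}[\,\subseteq\,]0,2/\nor{T}[$ and $\sum_k\eta=+\infty$, so Lemma~\ref{lem:grad} applied to $F=\Phi$ gives $\Phi(v_k)\to\inf\Phi$, hence $\EE(w_t)=\EE(v_{nt})\to\inf_\hh\EE$. For \ref{thm:apprerrii}, when $\mathcal O\neq\varnothing$ the set of minimizers is the closed affine subspace $\{w:Tw=S^*g_\rho\}$, so its minimal-norm element $w^\dagger$ exists; using $S^*g_\rho=Tw^\dagger$ (Fermat) together with Lemma~\ref{lem:incrvsgrad} and the telescoping identity $\eta\sum_{m=0}^{nt-1}(I-\eta T)^mT=I-(I-\eta T)^{nt}$ on $N(T)^\perp$, one obtains $w_t-w^\dagger=-(I-\eta T)^{nt}w^\dagger$, and $\norh{w_t-w^\dagger}\to0$ follows from the Dominated Convergence argument already carried out in the proof of Lemma~\ref{lem:ftbound}\ref{lem:ftboundii}.

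For the quantitative bound \ref{thm:apprerriii} I would pass to the residual in $L^2(\hh,\rho_\hh)$. The intertwining relation $S(I-\eta T)^k=(I-\eta L)^kS$ (from $ST=SS^*S=LS$) combined with Lemma~\ref{lem:incrvsgrad} and the geometric-sum identity gives $Sw_t-g_\rho=-(I-\eta L)^{nt}g_\rho$. Hence, by \eqref{eq:exp} and Assumption~\ref{ass:uno},
\begin{equation*}
\EE(w_t)-\inf_\hh\EE=\nor{Sw_t-g_\rho}_\rho^2=\nor{(I-\eta L)^{nt}L^rg}_\rho^2\le\Big(\sup_{\sigma\in[0,\kappa]}(1-\eta\sigma)^{nt}\sigma^r\Big)^2\nor{g}_\rho^2.
\end{equation*}
Since $\eta\sigma\le1$ yields $(1-\eta\sigma)^{nt}\le e^{-\eta nt\,\sigma}$, elementary calculus gives $\sup_{\sigma\ge0}\sigma^re^{-\eta nt\sigma}=(r/(e\,\eta nt))^r\le(r/(\gamma t))^r$, because $\eta nt=\gamma t$; squaring produces the stated estimate.

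The remaining part \ref{thm:apprerriv} is the one I expect to cost the most effort, as it requires transferring the source condition on $g_\rho\in L^2(\hh,\rho_\hh)$ into a source condition on $w^\dagger\in\hh$. Starting again from $w_t-w^\dagger=-(I-\eta T)^{nt}w^\dagger$, I would invoke the polar decomposition $S=WT^{1/2}$ with $W$ a partial isometry of norm $\le1$, so that $L=SS^*=WTW^*$ and $L^r=WT^rW^*$. Combining $g_\rho=L^rg$ with $g_\rho=Sw^\dagger=WT^{1/2}w^\dagger$ and applying $W^*$ (using $w^\dagger\in N(T)^\perp$, on which $W^*W$ is the identity) gives $T^{1/2}w^\dagger=T^rW^*g$, whence $w^\dagger=T^{r-1/2}W^*g$ with $\norh{W^*g}\le\nor{g}_\rho$; the hypothesis $r>1/2$ ensures $T^{r-1/2}$ is a bounded nonnegative power. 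The same spectral estimate as above, now with exponent $r-1/2$, then yields
\begin{equation*}
\norh{w_t-w^\dagger}=\norh{(I-\eta T)^{nt}T^{r-1/2}W^*g}\le\Big(\sup_{\sigma\in[0,\kappa]}(1-\eta\sigma)^{nt}\sigma^{r-1/2}\Big)\nor{g}_\rho\le\Big(\frac{r-1/2}{\gamma t}\Big)^{r-1/2}\nor{g}_\rho.
\end{equation*}
The main obstacle throughout is the careful spectral bookkeeping relating the operators $T$ on $\hh$ and $L$ on $L^2(\hh,\rho_\hh)$, in particular justifying the fractional-power manipulations and the restriction to $N(T)^\perp$.
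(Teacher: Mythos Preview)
Your proposal is correct and follows essentially the same line as the paper's proof: part~\ref{thm:apprerri} via Lemma~\ref{lem:grad}, part~\ref{thm:apprerrii} via the identity $w_t-w^\dagger=-(I-\eta T)^{nt}w^\dagger$ and dominated convergence (as in Lemma~\ref{lem:ftbound}\ref{lem:ftboundii}), and parts~\ref{thm:apprerriii}--\ref{thm:apprerriv} via Lemma~\ref{lem:incrvsgrad} and the spectral bound on $\sigma^s(1-\eta\sigma)^{nt}$. The only cosmetic differences are that the paper maximizes $\sigma^r(1-\eta\sigma)^{nt}$ directly (you pass through $e^{-\eta nt\sigma}$) and that for~\ref{thm:apprerriv} the paper writes $w^\dagger=T^\dagger T^rS^*g$ using $S^*L^r=T^rS^*$ rather than the polar decomposition $S=WT^{1/2}$, but these are equivalent formulations of the same argument.
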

\begin{proof}
\ref{thm:apprerri}: This is a direct consequence of Lemma~\ref{lem:grad}.

\ref{thm:apprerrii}: The proof is the same lines as that of Lemma~\ref{lem:ftbound}-\ref{lem:ftboundii}. 
See in particular \eqref{eq:aez}.

\ref{thm:apprerriii}: It follows from Lemma~\ref{lem:incrvsgrad} that
\begin{align*}
\|Sv_{t+1}-g_\rho\|_{\rho}
&=\bigg\|\bigg(\Big(\sum_{j=0}^{nt-1} L\eta \big(I-\eta\LK\big)^{nt-j+1}\Big)-I\bigg) g_\rho\bigg\|_\rho\\
&=\sup_{\sigma\in[0,\|L\|]} \sigma^r\big(1-\eta\sigma\big)^{nt}\|g\|_\rho .
\end{align*}
Note that, the last term is maximized at $\sigma=rn/(\gamma(r+nt+1))$, hence for every $\sigma\in\left[0,+\infty\right[$,
\begin{align*}
\sigma^r \Big(1-\frac{\gamma}{n} \sigma\Big)^{nt} &\leq \Big(\frac{n r}{\gamma(r+nt)}\Big)^r\Big(1-\frac{r}{r+nt}\Big)^{nt}\\
&\leq\Big(\frac{n r}{\gamma(r+nt)}\Big)^r\\
&\leq \Big(\frac{r}{t\gamma}\Big)^r.
\end{align*}
Finally, the equality
\[
\EE(w_t)-\inf_\hh\EE=\|Sw_t- g_\rho\|_\rho^2,
\]
yields the statement.

\ref{thm:apprerriv}: Since $L^{1/2}$ is a partial isometry between $L^2(\hh,\rho_\hh)$ and
$S(\hh)$ and $g_\rho=L^{1/2}\big(L^{r -1/2}g\big)$ by Assumption \eqref{ass:uno}, it follows that $g_\rho\in S(\hh)$, and 
thus $\mathcal{O}$ is nonempty, and $w^\dag$ is well defined. 
Moreover, since $S^*g_\rho=Tw^\dag$, we also get that $Tw^\dag=S^*\LK^rg=T^rS^*g$, 
implying that
$w^\dag=T^\dagger T^{r} S^*g$. It follows from Lemma~\ref{lem:incrvsgrad} and \cite[Equation 2.24]{Eng96} that
\begin{align*}
\norh{w_t-w^\dag}&\leq \Big\| T^\dagger T^{r} \bigg(\sum_{j=0}^{nt-1}  \eta T\big(I-\eta T\big)^{nt-j+1}-I\bigg)S^*\Big\|  \nor{g}_{\rho} \\
&=\sup_{\sigma\in[0,\nor{L}]}  \sigma^{r-1/2}\big(1-\eta \sigma\big)^{nt} \nor{g}_{\rho}\\
&\leq \Big(\frac{r-1/2}{t\gamma}\Big)^{r-1/2}\nor{g}_\rho,
\end{align*}
where the last inequality can be derived proceeding as in \ref{thm:apprerriii}.
\end{proof}

\subsection{STEP 7: proof of the main results}

Let us denote by $\mathcal{S}(t,n,\delta)$ the right hand side of \eqref{eq:samplei}.
Then, combining the sample error estimate with the error decomposition \eqref{eq:dec}, we can immediately
derive the following inequalities
\begin{align*}
\EE(\hat{w}_t)-\inf_\hh\EE&\leq 2\kappa(\mathcal{S}(t,n,\delta))^2+ 2\mathcal{A}(t,\gamma,n)\\
\norh{\hat{w}_t-w^\dag}&\leq \mathcal{S}(t,n,\delta)+ \|w_t-w^\dagger\|
\end{align*}
with probability greater than $1-\delta$.
Note that analogous inequalities hold for the case $r>1/2$ and with respect to the norm in $\hh$.
We are now ready to prove the Theorems stated in Section~\ref{sec:main}.
The proof of universal consistency is a consequence of the sample and approximation 
error bounds, and of the application of Borel-Cantelli Lemma. 

\noindent{\bf Proof of Theorem~\ref{thm:mainnorate}}.
\ref{thm:mainnoratei}: 
Recalling the error decomposition in \eqref{eq:dec}, we have
\[
\EE(\hat{w}_t)-\inf_{\hh}\EE\leq 2\kappa\norh{\hat{w}_t-w_t}^2+2(\EE({w}_t)-\inf_{\hh}\EE).
\]
Suppose that $t>1$. Theorem~\ref{thm:samp_err} applied with $r=0$ yields that there exists 
$c\in\mathbb{R}_{++}$ such that, with probability greater than $1-\delta$  
\begin{equation}
\label{eq:sampin}
\|\hat{w}_t-w_t\|_\hh\leq  c\frac{t^{3/2}\log(16/\delta)}{3\sqrt{n}}+\EE(w_t)-\inf_\hh\EE .
\end{equation}

Since $\sum_{k\in\NN} \gamma=+\infty$ and $\gamma\leq \kappa^{-1}\leq n\kappa^{-1}$,
by Theorem~\ref{thm:apprerr}\ref{thm:apprerri}, $\mathcal{A}(t)=\EE(w_t)-\inf_\hh\EE\to 0$.
Moreover,  $\mathcal{A}(t^*(n))\to 0$ since $t^*(n)\to +\infty$.
Let $\eta\in\RPP$ and let $\alpha\in\left]1,+\infty\right[$. By \eqref{eq:stoprule1},
there exists  $\bar{n}\in\NN$ such that, for every $n\geq \bar{n}$,  $\eta n / t^*(n)^{3}>\alpha \log n$. 
Define
\begin{equation}
A_{n,\eta}=\left\{\EE(\hat{w}_{t^*(n)})-\inf_\hh\EE\geq \mathcal{A}(t^*(n))+ c\eta \right\}.
\end{equation}
By \eqref{eq:sampin}, for every $n\geq \bar{n}$, $\P(A_{n,\eta})\leq 16 \exp(-\eta n/t^*(n)^{3(1-\theta)})\leq \exp(-\alpha\log n)$. 
Therefore, 
\[
\sum_{n\geq \bar{n}} \P(A_{n,\eta}) \leq \sum_{n\geq \bar{n}} n^{-\alpha}<+\infty,
\]
hence the Borel-Cantelli lemma yields $\P(\bigcap_{k\geq \bar{n}}\bigcup_{n\geq k} A_{n,\eta})=0$,
and almost sure convergence follows.

\ref{thm:mainnorateii}:  Since $\mathcal{O}$ is nonempty, it follows that there $g_\rho\in S(\mathcal{H})=L^{1/2}(\hh)$. Therefore,
Assumption~\ref{ass:uno} is satisfied with $r=1/2$. From Theorem~\ref{thm:samp_err}\ref{thm:samp_errii}, that there exists 
$c_1\in\RPP$ such that, with probability greater than $1-\delta$ 
\begin{equation}
\label{eq:sampinh}
\|\hat{w}_t-w_t\|_\hh\leq \frac{c_1 t\log(16/\delta)}{3\sqrt{n}}  
\end{equation}
Moreover, since $\sum_{k\in\NN} \gamma=+\infty$ and $\gamma\leq n\kappa^{-1}$,
by Theorem~\ref{thm:apprerr}\ref{thm:apprerrii}, $\|w_t-w^\dag\|_\hh\to 0$.
Reasoning as in \ref{thm:mainnoratei}, we obtain \eqref{eq:uconsh}.
\endproof

\noindent{\bf Proof of Theorem~\ref{thm:mainH}}. \ref{thm:mainHi}: It follows from 
Theorem~\ref{thm:samp_err}\ref{thm:samp_errii}, that with probability greater than $1-\delta$ 
\begin{equation}
\label{eq:sampinh2}
\|\hat{w}_t-w_t\|_\hh\leq \frac{32\log(16/\delta)}{n}\left[M\kappa^{-1/2}+2M^2\kappa^{-1}+3\kappa^{r-1/2}\|g\|_\rho\right]t
\end{equation}
Moreover, Theorem~\ref{thm:apprerr}\ref{thm:apprerriv} yields
\begin{equation}
\label{eq:errinh}
\|w_t-w^\dag\|_\hh \leq 
\left(\frac{r-1/2}{\gamma t}\right)^{r-1/2}\|g\|_\rho.
\end{equation}
Inequality~\eqref{eq:boundprobh} follows by adding \eqref{eq:sampinh2} with \eqref{eq:errinh}.

\ref{thm:mainHii}:  Let $\alpha\in\left]0,+\infty\right[$ and let $t^*(n)=\lceil n^{\alpha}\rceil$. 
Minimizing the right hand side in \eqref{eq:boundprobh}, we get
\[
\alpha-1/2=\alpha(1/2-r)
\]
leading to the expression of $t^*(n)$. Now, let $n\in\NN^*$ and $\beta\in\left[1,2\right[$ be such that $n^{\alpha}\leq t^*(n)=\beta n^\alpha\leq n^\alpha+1$. Then,
by \eqref{eq:boundprobh} we get 
\begin{equation}
\|\hat{w}_t-w_t\|_\hh\leq \beta^{1/2-r}{2r+1}32\log(16/\delta)\left[M\kappa^{-1/2}+2M^2\kappa^{-1}+3\kappa^{r-1/2}\|g\|_\rho\right]n^{\frac{1/2-r}{2r+1}}
\end{equation}
and
\begin{equation}
\|w_t-w^\dag\|_\hh \leq 
\left(\frac{r-1/2}{\beta }\right)^{r-1/2}\|g\|_\rho n^{\frac{1/2-r}{2r+1}}.
\end{equation}
Equation \eqref{eq:rcorh} follows recalling that $\beta\in\left[1,2\right[$
in \ref{thm:mainrhobi}.
\endproof

\noindent{\bf Proof of Theorem~\ref{thm:mainrhob}}. \ref{thm:mainrhobi}: 
Recalling~\eqref{eq:dec}, it follows from  Theorem~\ref{thm:samp_err}\ref{thm:samp_erri} and
Theorem~\ref{thm:apprerr}\ref{thm:apprerriii}, that
\begin{align}
\label{eq:apprerr1}
\EE(\hat{w}_t)-\inf_{\hh}\EE \leq &2\frac{\big(32\log(16/\delta)\big)^2}{n}\left[M+2M^2\kappa^{-1/2}+3\kappa^{r}\|g\|_\rho\right]^2t^2+\bigg(\dfrac{r}{\gamma t}\bigg)^{2r} \|g\|_\rho^2
\end{align}

\ref{thm:mainrhobii}:  As in the proof of Theorem~\ref{thm:mainH}\ref{thm:mainHii}, set $t=\lceil n^{\alpha}\rceil$. Then, minimizing the right hand side in \eqref{eq:apprerr1},
we derive
\[
2\alpha-1=-2\alpha r
\]  
which gives $\alpha=1/(2r+2)$.
Let $n\in\NN^*$ and $\beta\in\left[1,2\right[$ be such that $n^{\alpha}\leq t^*(n)=\beta n^\alpha\leq n^\alpha+1$. Then, plugging the expression of $t^*(n)$ into \eqref{eq:apprerr1}.
we get
\begin{align*}
\EE(\hat{w}_{t^*(n)})-\inf_{\hh}\EE \leq 8\frac{\big(32\log(16/\delta)\big)^2}{n}\left[M+2M^2\kappa^{-1/2}+3\kappa^{r}\right.&\left.\|g\|_\rho\right]^2 n^{1/(r+1)}\\
&+2\bigg(\dfrac{r}{\gamma}\bigg)^{2r}n^{-r/(r+1)} \|g\|_\rho^2.
\end{align*}
\endproof
\appendix
\section{Non attainable case}
\label{app:nonatt}
\begin{theorem}[Finite sample bounds for the risk -- non attainable case]
\label{thm:mainrho}
In the setting of Section~\ref{sec:setting}, let Assumption~\ref{ass:zero} hold, and let $\gamma\in\left]0,\kappa^{-1}\right]$.
Let Assumption~\ref{ass:uno} be satisfied for some $r\in\left]0,1/2\right]$.
Then the following hold:
\begin{enumerate}
\item
\label{thm:mainrhoi}
For every $t\in\NN^*$, with probability greater than $1-\delta$,  
\begin{align}
\label{eq:boundprobr1}
\nonumber \EE(\hat{w}_t)-\inf_\hh\EE\leq & 8\frac{\Big(32\log(16/\delta)\Big)^2}{n} \left[\left(M+2M^2\kappa^{-1/2}+\frac{6\|g\|_\rho\kappa^{-1/2}}{3-2r}\right) t^{3/2-r}\right.\\
&\hspace{2cm}\left.+3 \kappa^{r}\frac{1-2r}{3-2r}\|g\|_\rho\right]^2+ 2\left(\dfrac{r}{\gamma t}\right)^{2r} \|g\|_\rho^2
\end{align}
\item
\label{thm:mainrhoii}
For the stopping rule $t^*\colon\NN^*\to\NN^*$
\begin{equation}
\label{eq:stop_ruler1}
t^*(n)=\big\lceil n^{\frac{1}{3}}\big\rceil
\end{equation}
with probability greater than $1-\delta$,  
\begin{align}
\label{eq:rcor1}
\nonumber\EE(\hat{w}_{t^*(n)})-\inf_\hh\EE\!\leq\!\left[8\Big(32\log\frac{16}{\delta}\Big)^2 \bigg(M+2M^2\kappa^{-1/2}+\right.&\left.\frac{6\|g\|_\rho}{3-2r}
+\frac{3-6r}{3-2r} \kappa^{r}\|g\|_\rho\right)^2\\
&+\left.2\left(\frac{r}{\gamma}\right)^{2r}\|g\|^2_\rho\right] n^{-2r/3} 
\end{align}
\end{enumerate}
\end{theorem}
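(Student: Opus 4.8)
The plan is to treat Theorem~\ref{thm:mainrho} as the non-attainable analogue of Theorem~\ref{thm:mainrhob}, assembling it from the two building blocks already in hand: the sample error bound of Theorem~\ref{thm:samp_err} and the approximation error bound of Theorem~\ref{thm:apprerr}. Everything rests on the deterministic error decomposition \eqref{eq:dec}, which, with $\hat w_0 = w_0 = 0$, reads
\[
\EE(\hat w_t) - \inf_\hh\EE \leq 2\kappa\,\norh{\hat w_t - w_t}^2 + 2\big(\EE(w_t) - \inf_\hh\EE\big),
\]
so that the only stochastic quantity is the sample error $\norh{\hat w_t - w_t}$. Note that the hypothesis $\gamma \in \left]0,\kappa^{-1}\right]$ implies $\gamma \leq n\kappa^{-1}$, so the two building-block theorems apply.

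For part \ref{thm:mainrhoi}, since $r\in\left]0,1/2\right]$ I would bound $\norh{\hat w_t - w_t}$ by Theorem~\ref{thm:samp_err}\ref{thm:samp_erri} and $\EE(w_t)-\inf_\hh\EE$ by Theorem~\ref{thm:apprerr}\ref{thm:apprerriii}, the latter giving the additive term $2(r/(\gamma t))^{2r}\|g\|_\rho^2$ directly. Squaring the sample error bound and multiplying by $2\kappa$ produces the term carrying the $\big(32\log(16/\delta)\big)^2/n$ prefactor; the one genuine task is the algebraic simplification of the $\max\{\kappa^{r-1/2}\gamma t,\,(\tfrac{1-2r}{3-2r}+\tfrac{2}{3-2r}t^{3/2-r})\gamma^{3/2-r}\}$ appearing there. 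Using $\gamma\leq\kappa^{-1}$ together with $t\geq1$ and $3/2-r\geq1$ (so that $t\leq t^{3/2-r}$ and $\gamma^{3/2-r}\leq\kappa^{r-3/2}$), each summand is dominated by a constant multiple of either $t^{3/2-r}$ or the $r$-dependent constant $\tfrac{1-2r}{3-2r}$. Collecting terms reproduces the squared bracket in \eqref{eq:boundprobr1}. It is precisely the growth $\norh{w_k}\lesssim(\gamma k)^{1/2-r}$ in the non-attainable regime (Lemma~\ref{lem:ftbound}\ref{lem:ftboundi}), summed over $k$, that yields the $t^{3/2-r}$ factor, in contrast to the linear-in-$t$ factor of the attainable case.

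For part \ref{thm:mainrhoii}, I would set $t^*(n)=\lceil n^\alpha\rceil$ and choose $\alpha$ to balance the two terms of \eqref{eq:boundprobr1}: the sample term scales as $t^{3-2r}/n$ and the approximation term as $t^{-2r}$, so equating the exponents in $n$, namely $\alpha(3-2r)-1=-2r\alpha$, collapses to $3\alpha=1$, giving $\alpha=1/3$ and the common rate $n^{-2r/3}$, as in \eqref{eq:stop_ruler1}. To turn the rate into the explicit constant of \eqref{eq:rcor1} I would write $t^*(n)=\beta n^{1/3}$ with $\beta\in\left[1,2\right[$, the usual device for absorbing the ceiling, substitute into \eqref{eq:boundprobr1}, and bound the powers of $\beta$ by their values on $\left[1,2\right[$.

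The main obstacle is entirely the constant bookkeeping in part \ref{thm:mainrhoi}: dominating the two-sided $\max$ and the separate $M$, $M^2$ and $\|g\|_\rho$ contributions by the single bracket of \eqref{eq:boundprobr1}, while keeping the factors $\tfrac{1-2r}{3-2r}$ and $\tfrac{2}{3-2r}$ intact. These degenerate as $r\to1/2$ and must be tracked so the statement remains consistent with the attainable case at the endpoint. Beyond this, the argument is a direct transcription of the attainable-case proof.
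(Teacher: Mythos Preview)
Your proposal is correct and follows exactly the approach of the paper: the paper's own proof of Theorem~\ref{thm:mainrho} is a two-line remark that part~\ref{thm:mainrhoi} ``follows from Theorem~\ref{thm:samp_err}\ref{thm:samp_erri} and Theorem~\ref{thm:apprerr}\ref{thm:apprerriii}'' and part~\ref{thm:mainrhoii} ``follows plugging the expression of $t^*(n)$ into the inequality in \ref{thm:mainrhoi}.'' Your plan supplies precisely these ingredients via the decomposition~\eqref{eq:dec}, and even spells out the constant bookkeeping and the exponent-balancing $\alpha(3-2r)-1=-2r\alpha\Rightarrow\alpha=1/3$ that the paper leaves implicit.
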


As for the attainable case, equation~\eqref{eq:boundprobr1} arises from a form of bias-variance 
(sample-approximation) decomposition of the error. 
Choosing the number of epochs that optimize the bounds in \eqref{eq:boundprobr1},
we derive a priori stopping rules~\eqref{eq:stop_ruler1}
and corresponding bound~\eqref{eq:rcor1}. Again, these  results confirm that the number of epochs 
acts as a regularization parameter and the best choice follows from equation~\eqref{eq:boundprobr1}.  

\noindent{\bf Proof of Theorem~\ref{thm:mainrho}}. \ref{thm:mainrhoi}: This follows from Theorem~\ref{thm:samp_err}\ref{thm:samp_erri} and
Theorem~\ref{thm:apprerr}\ref{thm:apprerriii}.

\ref{thm:mainrhoii}: It follows plugging the expression of $t^*(n)$ into the inequality
in \ref{thm:mainrhoi}.
\endproof


%

\
\end{document}